\crefname{prop}{Proposition}{Propositions}
\crefname{thm}{Theorem}{Theorems}
\crefname{sec}{Section}{Sections}
\crefname{figure}{Figure}{Figures}
\numberwithin{equation}{section}
\numberwithin{figure}{section}
\numberwithin{table}{section}
\theoremstyle{plain}
\newtheorem{thm}{\protect\theoremname}[section]
\theoremstyle{plain}
\newtheorem{cor}[thm]{\protect\corollaryname}
\theoremstyle{plain}
\newtheorem{prop}[thm]{\protect\propositionname}
\theoremstyle{plain}
\newtheorem{lem}[thm]{\protect\lemmaname}
\theoremstyle{plain}
\newtheorem{defn}[thm]{\protect\defnname}
\providecommand{\corollaryname}{Corollary}
\providecommand{\lemmaname}{Lemma}
\providecommand{\propositionname}{Proposition}
\providecommand{\theoremname}{Theorem}
\providecommand{\defnname}{Definition}
\newcommand\e{\mathrm{e}}%
\newcommand\R{\mathbf{R}}%
\newcommand\Z{\mathbf{Z}}%
\providecommand\C{}%
\renewcommand\C{\mathbf{C}}%
\newcommand\Rd{{\mathbf{R}^d}}%
\newcommand\1{\mathbf{1}}%
\newcommand\N{\mathbf{N}}%
\newcommand\sgn{\operatorname{sgn}}%
\newcommand\supp{\operatorname{supp}}%
\newcommand\diam{\operatorname{diam}}%
\newcommand\esssup{\operatorname*{ess\,sup}}%
\newcommand\essinf{\operatorname*{ess\,inf}}%
\renewcommand{\subset}{\subseteq}
\newcommand{\Ll}{\left}
\newcommand{\Rr}{\right}
\newcommand\dif{\mathrm{d}}%
\newcommand\mcl{\mathcal}
\newcommand\eps{\varepsilon}%
\newcommand\ep{\varepsilon}%
\renewcommand{\d}{\mathrm{d}}
\newcommand{\ii}{\mathrm{i}}
\renewcommand{\tilde}{\widetilde}
\newcommand{\td}{\widetilde}
\begin{document}%

\title{Sum-of-norms clustering does not separate nearby balls}

\author{\name Alexander Dunlap \email dunlap@math.duke.edu \\
        \addr Courant Institute of Mathematical Sciences \\
        New York University\\
        New York, NY 10012, USA
        \\\emph{Current address:} Duke University, Durham, NC 27708, USA\\
        \AND
        \name Jean-Christophe Mourrat \email jean-christophe.mourrat@ens-lyon.fr \\
        \addr Ecole Normale Sup\'erieure de Lyon and CNRS\\
Lyon,
France,\\and Courant Institute of Mathematical Sciences \\
        New York University\\
        New York, NY 10012, USA}

\editor{Ingo Steinwart}

\maketitle

\begin{abstract}
	Sum-of-norms clustering is a popular convexification of $K$-means clustering. We show that, if the dataset is made of a large number of independent random variables distributed according to the uniform measure on the union of two disjoint balls of unit radius, and if the balls are sufficiently close to one another, then sum-of-norms clustering will typically fail to recover the decomposition of the dataset into two clusters. As the dimension tends to infinity, this happens even when the distance between the centers of the two balls is taken to be as large as $2\sqrt{2}$. In order to show this, we introduce and analyze a continuous version of sum-of-norms clustering, where the dataset is replaced by a general measure. In particular, we state and prove a local-global characterization of the
	clustering that seems to be new even in the case of discrete datapoints.
\end{abstract}

\begin{keywords}
  Sum-of-norms clustering, Clusterpath, convex clustering, stochastic ball model, unsupervised learning
\end{keywords}

\maketitle

\section{Introduction}

\subsection{Sum-of-norms clustering}
Clustering is the task of partitioning a dataset with the aim to optimize a measure of similarity between objects in each element of the partition. Given datapoints $x_1, \ldots, x_N \in \Rd$, one may seek to find $K$ ``centers'' so as to minimize the sum of the distances between each datapoint and its nearest center. This is the $K$-means problem, which can be formulated as follows: find $y_1,\ldots, y_N \in \Rd$ that minimize
\begin{equation*}  %
	\sum_{n = 1}^N |y_n - x_n|^2,
\end{equation*}
subject to the constraint that the set $\{y_1,\ldots, y_N\}$ has cardinality $K$ (or at most $K$). Here and throughout, $|\cdot|$ denotes the Euclidean norm. However, the $K$-means problem is NP-hard in general, even when we restrict to $K = 2$ \citep{aloise09} or to $d = 2$ \citep{MNV09}. In this article, we focus on a particular convex relaxation of $K$-means, introduced by \citet{PDBSDM05,HVBJ11,LOL11} and called ``convex clustering shrinkage,'' ``clusterpath,'' or ``sum-of-norms (SON) clustering,'' which consists in finding the points $y_1,\ldots, y_N \in \Rd$ that minimize
\begin{equation}
	\label{e.discrete.J}
	\frac{1}{N} \sum_{n = 1}^N |y_n - x_n|^2  + \frac{\lambda}{N^2} \sum_{k,n = 1}^N |y_k - y_n|,
\end{equation}
where $\lambda \ge 0$ is a tunable parameter. Two datapoints $x_k$ and $x_n$ are then declared to belong to the same cluster if $y_k = y_n$. In principle, varying the parameter $\lambda$ allows one to tune the number of clusters, as illustrated in \cref{fig:lambda-effect-explanation}.
\begin{figure}
    \begin{subfigure}[t]{2.9in}
        \centering
        \input{plots/three-circles-1.1.tex}
        \caption{$\lambda=1.1$. Each point is in its own cluster.}
    \end{subfigure}
    \hfill
    \begin{subfigure}[t]{2.9in}
        \centering
        \input{plots/three-circles-2.4.tex}
        \caption{$\lambda = 2.4$. The points in the upper circle have merged into a single cluster, but each point in the lower two circles remains in its own cluster.}
    \end{subfigure}
    \hfill
    \begin{subfigure}[t]{2.9in}
        \centering
        \input{plots/three-circles-3.4.tex}
        \caption{$\lambda = 3.4$. Each of the circles now forms a single cluster.}
    \end{subfigure}
    \hfill
    \begin{subfigure}[t]{2.9in}
        \centering
        \input{plots/three-circles-3.6.tex}
        \caption{$\lambda = 3.6$. There is now a single cluster comprising all of the points.}
    \end{subfigure}
    \caption{The output of the clustering algorithm on $N=100$ datapoints divided between the boundaries of three balls, for four values of $\lambda$. The filled circles represent the datapoints $x_n$, and the crosses represent the cluster representatives $y_n$. Each color represents a cluster. All figures in this paper were generated using an implementation (by the present authors) of the algorithm described in \cite{JV20}. The code is available at \url{https://github.com/ajdunlap/son-clustering-experiments}. %
    \label{fig:lambda-effect-explanation}}
\end{figure}One of the attractive features of SON clustering is that it produces an ordered path of partitions as we vary $\lambda$. In other words, its natural output is a hierarchy of nested partitions of the dataset \citep[see][or \cref{thm:agglomeration} below]{HVBJ11, CGR17}.

In the last decade, rigorous guarantees on the behavior of SON clustering have been studied by several authors, including \cite{ZXLY14,TW15,CGR17,PDJB17,RM17,JVZ19,CS19,JV20,STY21, NM21}. Most of these works aim at the identification of sufficient conditions for SON clustering to succeed in separating clusters. Our main goal here, stated precisely in Theorem~\ref{t.stoch.ball}, is rather to present a seemingly simple clustering problem in which the SON clustering algorithm will typically fail. This requires us to establish necessary \emph{and} sufficient conditions for the success of SON clustering, which we present in Subsection~\ref{subsec:clusterstructure}. We anticipate that these conditions will be useful in future studies of sum-of-norms clustering, and thus are interesting results in their own right.

Most of our attention will be towards the analysis of the following generalization of SON clustering: given a nonzero finite Borel measure $\mu$ on $\Rd$ of compact support and $\lambda \ge 0$, we seek to minimize the functional $J_{\mu,\lambda}\colon L^{2}(\mu;\mathbf{R}^{d})\to\mathbf{R}$
given by
\begin{equation}
	\label{e.def.J}
	J_{\mu,\lambda}(u) \coloneqq \int|u(x)-x|^{2}\,\dif\mu(x)+\lambda\iint|u(x)-u(y)|\,\dif\mu(x)\,\dif\mu(y).
\end{equation}

As will be explained at the beginning of \cref{sec:KKT}, the functional $J_{\mu,\lambda}$ has a unique minimizer, which we denote by $u_{\mu,\lambda} \in L^2(\mu; \Rd)$. The level sets of $u_{\mu,\lambda}$ yield
a partition of\textbf{ $\mathbf{R}^{d}$}, up to modifications by $\mu$-null
sets. One of the main general results of our paper, which seems to be new even in the discrete setting, is a local-global characterization of this minimizer, see \cref{thm:splitcondition} below. The correspondence between \eqref{e.discrete.J} and \eqref{e.def.J} is obtained by setting $\mu = \frac 1 N \sum_{n = 1}^N \delta_{x_n}$ and $y_n = u(x_n)$.

\subsection{The stochastic ball model}
The main motivation for introducing the continuous version of SON clustering is that it allows us to uncover the asymptotic behavior of the discrete problem in \eqref{e.discrete.J} when the number of datapoints $N$ becomes very large. In particular, we will study the ``stochastic ball model,'' which has become a common testbed in the analysis of clustering algorithms, see for instance \cite{NW15, awasthi2015relax, IMPV17, LLLS20, de2020ratio}. That is, we suppose that we are given a large number of points sampled independently at random, each being distributed according to the uniform measure on the union of two disjoint balls of unit radius, and ask whether SON clustering allows us to identify the presence of the two balls. Surprisingly, we find that if $d \ge 2$ and the balls are too close to each other, then the algorithm will typically fail to do so.

In order to state this result more precisely, we need to introduce some notation.
We write
\begin{equation}
	\label{e.def.gamma}
	\gamma_d \coloneqq \frac{2d+1}{2d+4} \cdot
	\begin{cases}
		\frac{(d+1)(2d)!\pi}{2^{3d}((d/2)!)^2 d!}      & \mbox{ if  $d$ is even}, \\
		\frac{(d+1)(( (d-1)/2 )!)^2(2d)!}{2^{d}(d!)^3} & \mbox{ if  $d$ is odd},
	\end{cases}
\end{equation}
so that
\begin{equation}  %
    \gamma_1 = 1, \qquad \gamma_2 = \frac{45\pi}{128} \simeq 1.104\ldots, \qquad \gamma_3 =  \frac 7 6,\label{eq:gammas}
\end{equation}
and
\begin{equation*}  %
	\frac{\gamma_{d+2}}{\gamma_d} = 1 + \frac{7d + 13}{(d+1)(2d+4)(2d+8)} > 1.
\end{equation*}
In particular, for every $d \ge 2$, we have $\gamma_d > 1$, and using Stirling's approximation, one can check that~$\gamma_d$ tends to~$\sqrt{2}$ as $d$ tends to infinity. We also write $B_r(x)$ for the open Euclidean ball or radius $r> 0$ centered at~$x \in \R^d$, and $(\e_1,\ldots,\e_d)$ for the canonical basis of $\R^d$. We use the phrase ``with high probability'' as shorthand for ``with probability tending to $1$ as $N$ tends to infinity''.
\begin{thm}
	\label{t.stoch.ball}
	There exists a $\lambda_{\mathrm{c}} \in (0,\infty)$ such that the following holds. Let $r \in [1,\gamma_d)$, $\mu$ be the uniform probability measure on $B_1(-r\e_1) \cup B_1(r\e_1) \subset \R^d$, $(X_n)_{n \in \N}$ be independent random variables with law $\mu$, and for every integer $N \ge 1$, define the empirical measure
	\begin{equation}
		\label{e.def.mun}
		\mu_N \coloneqq \frac 1 N \sum_{n = 1}^N \delta_{X_n}.
	\end{equation}
	\begin{enumerate}
		\item\label{part1}  If $\lambda > \lambda_{\mathrm{c}}$, then with high probability, the range of $u_{\mu_N,\lambda}$ is a singleton.

		\item\label{part2} If $\lambda < \lambda_{\mathrm{c}}$, then there exist $\xi, \eta > 0$ (not depending on $N$) such that, with high probability, one can find $A_N^{(1)}, A_N^{(2)}, A_N^{(3)} \subset \{1,\ldots, N\}$, each of cardinality at least $\xi N$ and satisfying, for every $i \neq j \in \{1,2,3\}$,
		      \begin{equation*}  %
			      \forall k \in A_N^{(i)}, \ \forall \ell \in A_N^{(j)}, \quad
			      \Ll|u_{\mu_N,\lambda}(X_k) - u_{\mu_N,\lambda}(X_\ell) \Rr| \ge \eta.
		      \end{equation*}
		      In particular, with high probability, the range of $u_{\mu_N,\lambda}$ contains at least three points.
	\end{enumerate}
    In fact, we can take $\lambda_{\mathrm{c}} = \lambda_1(\mu)$, with the latter quantity defined in \cref{eq:lambda1def} below.
\end{thm}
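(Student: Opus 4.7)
The plan is to reduce both statements to the corresponding properties of the continuous minimizer $u_{\mu,\lambda}$ using the local-global characterization of \cref{thm:splitcondition}, and then transfer to $\mu_N$ by concentration. Concretely, with $\mu$ the uniform measure on $B_1(-r\e_1)\cup B_1(r\e_1)$, \cref{thm:agglomeration} implies that the set of $\lambda\ge 0$ for which $u_{\mu,\lambda}$ is constant is upward-closed, so I would define $\lambda_{\mathrm{c}}$ to be its infimum. One checks $\lambda_{\mathrm{c}}\in(0,\infty)$ because for very large $\lambda$ the penalty forces $u$ to be constant while $u_{\mu,0}(x)=x$ is not; and when constant, $u_{\mu,\lambda}$ must equal the mean of $\mu$, which is $0$ by symmetry.

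For part~\ref{part1}, \cref{thm:splitcondition} applied to $u\equiv 0$ yields a functional inequality over $\mu$ certifying optimality of the constant solution, and this inequality holds strictly for $\lambda>\lambda_{\mathrm{c}}$ by definition of $\lambda_{\mathrm{c}}$. Since its two sides amount to integrals of bounded Lipschitz functions against $\mu$, a uniform concentration estimate delivers the same strict inequality for $\mu_N$ with high probability, and reversing the direction of \cref{thm:splitcondition} then forces $u_{\mu_N,\lambda}$ to be constant with high probability.

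Part~\ref{part2} is the heart of the matter. The central claim is that, for $r<\gamma_d$ and $\lambda<\lambda_{\mathrm{c}}$, the continuous minimizer $u_{\mu,\lambda}$ takes at least three distinct values on sets of positive $\mu$-measure. To prove it, I would use \cref{thm:splitcondition} to rule out any two-valued candidate: by symmetry, such a candidate would take one value on each ball, and testing the balance/split condition of \cref{thm:splitcondition} against it reduces to an explicit integral over $B_1$. The constant $\gamma_d$ is engineered exactly so that this condition fails for $r<\gamma_d$, forcing the minimizer to have at least three level sets of positive $\mu$-measure. A stability argument for the minimizer under $\mu\mapsto \mu_N$ (again via concentration) then yields, with high probability, three subsets $A_N^{(1)},A_N^{(2)},A_N^{(3)}\subseteq\{1,\ldots,N\}$ of size at least $mN$ whose $u_{\mu_N,\lambda}$-images are pairwise separated by some $\eta>0$.

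The main obstacle is the two-valued impossibility in part~\ref{part2}: the balance computation from \cref{thm:splitcondition} on the two-ball configuration must be carried out carefully, tracking the contributions of the ``near'' and ``far'' parts of each ball relative to the other, and matching the resulting threshold to the explicit formula defining $\gamma_d$. The transfer from the continuous to the empirical problem, by contrast, should be comparatively routine, but still requires enough quantitative stability of the minimizer in $\mu$ to convert the continuous cluster structure into the quantitative separation asserted in part~\ref{part2}.
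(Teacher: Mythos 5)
Your overall architecture coincides with the paper's: take $\lambda_{\mathrm{c}}=\lambda_1(\mu)$, prove part~(1) by showing the cohesion threshold of $\mu_N$ converges to $\lambda_{\mathrm{c}}$, and prove part~(2) by ruling out a two-valued continuous minimizer via \cref{thm:splitcondition} (shattering of the two centroids forces $\lambda\le 2r$ by \cref{prop:twopoints}, cohesion of each ball forces $\lambda\ge 2\gamma_d$ by \cref{p.ball}, incompatible when $r<\gamma_d$), then transferring to $\mu_N$ by quantitative stability of the minimizer (\cref{prop:W1-perturbations}). However, two of your steps have genuine gaps. First, in part~(1) your transfer mechanism is wrong as described: the condition certifying that the constant is the minimizer is not an inequality between finitely many integrals of bounded Lipschitz functions. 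Via \cref{thm:lambda1} it is the existence of an antisymmetric dual kernel $q$ with an $L^\infty$ bound satisfying the centroid identity at ($\mu_N$-a.e.) every datapoint, equivalently a first-order inequality that must hold uniformly over all perturbations $v\in L^2(\mu_N;\Rd)$, including highly oscillatory ones; plain concentration of a fixed family of linear statistics cannot deliver this. The paper's proof of $\lambda_1(\mu_N)\to\lambda_1(\mu)$ (\cref{thm:stability}, via \cref{prop:stability-AC,prop:Winfty-perturbations}) transports the certificate $q$ along a $\mathcal W_\infty$-optimal coupling and invokes the nontrivial $\mathcal W_\infty$ convergence of empirical measures, plus an absolutely continuous perturbation to absorb the fluctuating masses; some argument of this strength is needed and is the real content of part~(1).

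Second, in part~(2) the assertion that ``by symmetry, such a candidate would take one value on each ball'' is not correct as stated: symmetry (equivariance of the unique minimizer under rotations about the $\e_1$-axis and coordinate reflections) only forces the two values to be $\pm\rho\e_1$ and the level sets to be exchanged by $x\mapsto -x$, each of mass $1/2$; a priori the level set $E$ of $\rho\e_1$ could still meet both balls. The paper closes this with an extra rearrangement argument: the fusion term equals $\rho$ for any such symmetric two-valued configuration, and the fidelity term satisfies $\int_E|\rho\e_1-x|^2\,\dif\mu\ge\int_{B_1(r\e_1)}|\rho\e_1-x|^2\,\dif\mu$, so optimality forces $E=B_1(r\e_1)$ up to a null set; moreover in $d=2$ one needs \cref{prop:ballsdontintersect} to exclude the range lying on the second coordinate axis. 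Only after the level sets are pinned to the balls does your $\gamma_d$-versus-$2r$ contradiction apply. These are fillable but essential steps; your final transfer step for part~(2) (via \cref{prop:W1-perturbations}, $\mathcal W_1(\mu,\mu_N)\to0$, and a Chebyshev argument) is indeed the route the paper takes.
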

\begin{figure}
    \hfill
    \begin{subfigure}[t]{2.9in}
        \centering
        \input{plots/shattered-balls.tex}
        \caption{$\lambda=2.0$}
    \end{subfigure}
    \hfill
    \begin{subfigure}[t]{2.9in}
        \centering
        \input{plots/cohesive-balls.tex}
        \caption{$\lambda=2.15$}
    \end{subfigure}
    \hfill
    \caption{Sum-of-norms clustering of the stochastic ball model with $N=200$ datapoints drawn from $B(-1.05\e_1,1)\cup B(1.05\e_1,1)$. The balls from which the points are drawn are outlined in dotted grey lines. When $\lambda =2.0$, there are many clusters, but when $\lambda$ is slightly larger ($\lambda = 2.15$), there is just one large cluster. \cref{t.stoch.ball} tells us that (since $1.05<\gamma_2$), in the limit as $N\to\infty$, there will be \emph{no} open interval of values of $\lambda$ for which there are exactly two clusters. \label{fig:stochastic-ball-model-simple}}
\end{figure}
\cref{t.stoch.ball} does not describe the behavior of $u_{\mu_N,\lambda}$ when $\lambda = \lambda_{\mathrm{c}}$, or when $\lambda = \lambda_{\mathrm{c}}+o(1)$ as $N\to\infty$. But at the very least, \cref{t.stoch.ball} shows that the detection of two nearby balls by means of SON clustering will be particularly brittle. In contrast, we show in \cref{p.separation} that, using the notation of \cref{t.stoch.ball}, if $r > 2^{1-\frac 1 d}$ and $\lambda \in (2^{2-\frac 1 d}, 2r)$, then with high probability, the level sets of $u_{\mu_N,\lambda}$ are the sets $\{X_n, n \le N\} \cap B_1(-r\e_1)$ and $\{X_n, n \le N\} \cap B_1(r\e_1)$.

In a nutshell, SON clustering fails to separate balls if $r < \gamma_d$, while it succeeds if $r > 2^{1-\frac 1 d}$; see \cref{fig:stochastic-ball-model-simple} for an illustration of this failure when $r < \gamma_d$. We expect neither of these two bounds to be sharp. In view of \cref{cor:dsphere} and of the fact that points in a high-dimensional ball tend to concentrate near the boundary, we conjecture that in the limit of high dimensions, the threshold separating these two regimes converges to~$\sqrt{2}$. Since $\lim_{d \to \infty}\gamma_d = \sqrt{2}$, this would indicate that the lower bound on this threshold provided by \cref{t.stoch.ball} is asymptotically sharp.

\cref{t.stoch.ball} demonstrates in particular that the cardinality of the partition produced by the SON clustering algorithm can be very sensitive to small changes in the parameter $\lambda$. While \cref{t.stoch.ball} only asserts that the cardinality of the partition quickly moves from $1$ to at least~$3$ as we only slightly vary $\lambda$, we expect that the partition quickly shatters into many more than just three pieces. This is also what we observe in simulations, see \cref{fig:stochastic-ball-model-simple}. We view this phenomenon as a possible theoretical confirmation of the empirical observations of \cite{CGR17} and \cite{NM21}. We refer in particular to Figure~1(b) of \cite{CGR17} and the general observation that the tree structures produced by the (unweighted) SON clustering algorithm are often difficult to interpret (``unbalanced''), since the root of the tree very quickly splits into way too many components. (\citealp{CGR17}, also underline that among these many components, some will be much larger than others.) See also Figure~4 of \cite{NM21}.

\subsection{The structure of clusters\label{subsec:clusterstructure}}
\cref{t.stoch.ball} will be proved as a consequence of more general structural results on the clusters obtained by the sum-of-norms clustering algorithm. We foresee these results being useful in more general circumstances as well, and proceed to describe them now.

There are two special cases of clustering that will be particularly
important in our discussion.  We record them in the following definition.
\begin{defn} Let $\mu$ be a finite Borel measure of compact support and $\lambda\ge0$.
	\begin{enumerate}
		\item We say that $\mu$ is \emph{$\lambda$-cohesive} if there is a constant $c$ such that $u_{\mu,\lambda}\equiv c$, $\mu$-a.e.
		\item We say that $\mu$ is \emph{$\lambda$-shattered} if there is a measurable injection $u\colon\R^d\to\R^d$ such that $u_{\mu,\lambda}=u$, $\mu$-a.e.
	\end{enumerate}
\end{defn}
Note that if $\supp\mu$ consists of a single point  (or if $\mu$ is the zero measure), then $\mu$ is both $\lambda$-shattered and $\lambda$-cohesive for all $\lambda\ge 0$.

Recall that the level sets of~$u_{\mu,\lambda}$ define a partition of $\Rd$ up to  a $\mu$-null modification.
We think of this partition as a clustering of the support of $\mu$. To discuss these clusters, we will often use the notation
\[
	V_{u,x}\coloneqq u^{-1}(u(x))
\]
for the cluster containing $x$. The set $V_{u,x}$ is a Borel subset of $\R^d$ defined up to a $\mu$-null modification. Thus, saying that $\mu$ is $\lambda$-cohesive is equivalent to saying that $V_{u_{\mu,\lambda},x}=\R^d$ (up to a $\mu$-null modification) for $\mu$-a.e.~$x\in\R^d$. If $\mu$ is  $\lambda$-shattered, then $\mu(V_{u_{\mu,\lambda},x}\setminus\{x\})=0$ for $\mu$-a.e.~$x\in\R^d$, and in fact, by \cref{prop:regular} below, the converse holds as well.

The following theorem, proved in \cref{sec:splitcondition}, extends to the continuous setting results proved in the discrete case by \citet{CGR17}; see also Theorem~1 of \citet{JVZ19}.
\begin{thm}
	\label{thm:chiquet-characterization}
	For $\mu$-a.e.~$x\in\R^d$, the measure $\mu|_{V_{u_{\mu,\lambda},x}}$ is $\lambda$-cohesive, and if $A\ni x$ is such that $\mu|_A$ is $\lambda$-cohesive, then $\mu(A\setminus V_{u_{\mu,\lambda},x})=0$.
\end{thm}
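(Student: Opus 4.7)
The plan is to exploit the KKT characterization of the minimizer $u = u_{\mu,\lambda}$ developed in \cref{sec:KKT}: there is a measurable antisymmetric map $z\colon \R^d\times\R^d \to \R^d$ with $|z|\le 1$ and $z(y,y') = (u(y)-u(y'))/|u(y)-u(y')|$ whenever $u(y)\ne u(y')$, satisfying
\[
    (u(y)-y) + \lambda\int z(y,y')\,\d\mu(y') = 0 \qquad\text{for $\mu$-a.e.\ } y.
\]

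For the first claim, write $V = V_{u,x}$ and $c_V = u(x)$. On $V$ one has $u \equiv c_V$, so for $y\in V$ and $y'\in V^c$ the value $z(y,y') = (c_V - u(y'))/|c_V - u(y')|$ is independent of $y$; hence $F_V := \int_{V^c} z(y,y')\,\d\mu(y')$ depends only on $V$. Integrating the KKT equation over $y\in V$ and using antisymmetry of $z$ on $V\times V$ yields the mass balance $c_V = \bar y_V - \lambda F_V$, where $\bar y_V$ is the centroid of $\mu|_V$. Substituting this back, I verify that the constant function $v\equiv \bar y_V$ paired with the restricted dual $z|_{V\times V}$ satisfies the KKT condition for $J_{\mu|_V,\lambda}$; strict convexity then makes $v\equiv \bar y_V$ its unique minimizer, so $\mu|_V$ is $\lambda$-cohesive.

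The second claim rests on the following lemma: \emph{if $\mu'$ is a $\lambda$-cohesive measure and $G\colon \R^d\to\R$ is convex, then $v\mapsto J_{\mu',\lambda}(v) + \int G(v(y))\,\d\mu'(y)$ has a unique minimizer, and it is constant.} To prove it, let $c_{\mu'}$ be the cohesive value of $\mu'$ with dual certificate $\tilde z$ (so $\lambda\int \tilde z(y,y')\,\d\mu'(y') = y - c_{\mu'}$), and let $c^*\in\R^d$ be the unique point satisfying $c_{\mu'} - c^* \in \partial G(c^*)$, equivalently $c^* = \operatorname{prox}_G(c_{\mu'})$. Using the \emph{same} antisymmetric map $\tilde z$ and the constant subgradient $g := c_{\mu'} - c^* \in \partial G(c^*)$, the KKT condition for the augmented functional at $v\equiv c^*$ reduces to the identity $(c^* - y) + (y - c_{\mu'}) + (c_{\mu'}-c^*) = 0$; strict convexity of the augmented functional then gives uniqueness.

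To conclude, I apply the lemma with $\mu' = \mu|_A$ and $G(\xi) := 2\lambda\int_{A^c}|\xi - u(y')|\,\d\mu(y')$ (convex as a superposition of norms). Viewing $u|_{A^c}$ as fixed ``boundary data,'' the augmented functional $J_{\mu|_A,\lambda}(v) + \int_A G(v(y))\,\d\mu(y)$ differs from $J_{\mu,\lambda}(\tilde v)$, with $\tilde v$ equal to $v$ on $A$ and to $u$ on $A^c$, only by a constant independent of $v$; global minimality of $u$ thus identifies $u|_A$ as its unique minimizer. The lemma forces $u|_A$ to be constant $\mu|_A$-a.e., and for $\mu$-a.e.\ $x\in A$ this constant equals $u(x)$, giving $\mu(A\setminus V_{u,x}) = 0$. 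The main obstacle is the cohesion-preservation lemma, whose verification hinges on reusing the antisymmetric dual certificate $\tilde z$ from the original cohesion and on choosing the single element $g = c_{\mu'} - c^*$ of $\partial G(c^*)$ (independent of $y$) to close the augmented KKT equation; the rest is standard convex analysis.
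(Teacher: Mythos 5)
Your proof is correct, and while its first half coincides with the paper's argument, the second half takes a genuinely different route. For the first claim, averaging the KKT identity of \cref{thm:wcharacterization} over a cluster and subtracting it from the pointwise identity is exactly what the paper does (via \cref{lem:regularity-importantbit}, inside the proof of \cref{thm:splitcondition}), so there is nothing new there beyond noting the trivial case $\mu(V_{u,x})=0$. For the second claim, the paper follows \cref{prop:concentrateonV} (after Jiang--Vavasis--Zhai): it glues the dual certificate $w_{\mathrm{in}}$ of the cohesive piece with a certificate $w_{\mathrm{out}}$ for the collapsed measure $\mathcal{M}_{u}(\mu)$ to show that $u_{\mu,\lambda}$ factors through the collapse, hence is constant on $A$. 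You instead restrict the global problem to $A$, absorb the interaction with $A^{\mathrm{c}}$ into the convex Lipschitz potential $G(\xi)=2\lambda\int_{A^{\mathrm{c}}}|\xi-u(y')|\,\dif\mu(y')$, and prove that adding $\int G(v)\,\dif\mu'$ to $J_{\mu',\lambda}$ for a $\lambda$-cohesive $\mu'$ keeps the minimizer constant, by reusing the cohesion certificate $\tilde z$ together with a single constant subgradient of $G$ at the prox point; this needs only the easy inclusion $\partial J_{\mu',\lambda}(v)+\partial H(v)\subset\partial(J_{\mu',\lambda}+H)(v)$ plus strict convexity, and avoids any construction involving the collapsed measure. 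Your route buys a reusable ``cohesion is stable under convex tilts'' principle and proves the containment statement without invoking \cref{thm:splitcondition}, whereas the paper's gluing argument additionally yields the stronger factorization \cref{eq:factorsthroughcohesive}, which your argument does not produce. Two small points: since the quadratic term contributes $2(v-\cdot)$ to the subdifferential, the stationarity condition should read $2(c_{\mu'}-c^*)\in\partial G(c^*)$, i.e.\ $c^*$ minimizes $\xi\mapsto|\xi-c_{\mu'}|^2+G(\xi)$, rather than $c_{\mu'}-c^*\in\partial G(c^*)$ (a harmless factor of two, as you only need existence and uniqueness of such a point); and the degenerate cases $\mu(A)=0$ and $\mu(V_{u,x})=0$ should be dispatched as trivial before averaging or forming centroids.
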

It is not difficult to see, directly from \cref{e.def.J}, that if $\mu$ is $\lambda$-cohesive,
then it is also $\lambda'$-cohesive for any $\lambda'\ge\lambda$. As explained in more details in \cref{sec:splitcondition}, \cref{thm:chiquet-characterization} therefore implies the following theorem, referred to in the literature as the \emph{agglomeration conjecture} of \citet{HVBJ11}, and also proved in the discrete case by \citet{CGR17}.
\begin{thm}
	\label{thm:agglomeration}If $\lambda\le \lambda'$ then for $\mu$-a.e.\ $x$ we have $\mu(V_{u_{\mu,\lambda},x}\setminus V_{u_{\mu,\lambda'},x})=0$. In words, for $\mu$-almost every $x$, the $\lambda'$-cluster of $x$ is a subset of the $\lambda$-cluster of $x$.
\end{thm}

The discrete case of \cref{thm:chiquet-characterization} (in combination with a condition
for $\lambda$-cohesivity described in \cref{thm:lambda1} below) is
described by \citet{JVZ19} as an ``almost exact characterization''
of the clusters.
Our first main theoretical contribution is an ``exact'' characterization of the minimizer $u_{\mu,\lambda}$. This characterization (\cref{thm:splitcondition} below) seems to
be new even in the discrete case. We need a few definitions and notations. We call a Borel set $V\subset\R^d$ \emph{$\mu$-regular} if either $V$ is a singleton or $\mu(V)>0$. For a $\mu$-regular set $V\subset\mathbf{R}^{d}$,
let
\begin{equation}
	\mathcal{C}_{\mu}(V)\coloneqq\begin{cases}
		\fint_{V}x\,\dif\mu(x) & \text{ if } \mu(V)>0; \\
		x                      & \text{ if } V=\{x\}\end{cases}\label{eq:centroid}
\end{equation}
be the $\mu$-centroid of $V$. (Here and henceforth we write $\fint_{V}f\,\dif\mu\coloneqq \frac{1}{\mu(V)}\int_{V}f\,\dif\mu$.) Note that when $V$ is a singleton with  $\mu(V)>0$ the two cases of \cref{eq:centroid} agree.
\begin{defn}\label{defn:regular}
    We say that a measurable function $u\in L^2(\mu;\R^d)$ is \emph{$\mu$-regular} if there is a measurable representative of $u$ and a Borel set $A\subset\R^d$ such that $\mu(\R^d\setminus A)=0$, $V_{u,x}\cap A$ is $\mu$-regular for $\mu$-a.e.~$x$, and $\mathcal{C}_\mu(V_{u,x}\cap A)\ne\mathcal{C}_\mu(V_{u,z} \cap A)$ for $\mu$-a.e.~$x,z$ with $u(x)\ne u(z)$. If $u$ is $\mu$-regular, we define $\mathcal{E}_{\mu,u}(x)\coloneqq\mathcal{C}_\mu(V_{u,x}\cap A)$, and we note that $\mathcal{E}_{\mu,u}$ is a well-defined element of $L^\infty(\mu;\R^d)$, independent of the choice of $A$ or the choice of representative of $u$. (See \cref{lem:Adoesnotmatter} below.)
	In this case, we let
	\[
		\mathcal{M}_{u}(\mu)\coloneqq(\mathcal{E}_{\mu,u})_*(\mu)=\int\delta_{\mathcal{E}_{\mu,u}(x)}\,\dif\mu(x)
	\]
	be the image of the measure $\mu$ under $\mathcal{E}_{\mu,u}$. By this we mean that for any Borel set $B$, we have
		\[
		\mathcal{M}_{u}(\mu)(B)=\mu(\mathcal{E}_{\mu,u}^{-1}(B)).
	\]
	In words, the measure $\mathcal{M}(u)$ is derived from $\mu$ by concentrating all of the $\mu$-mass
	in each level set of $u$ at the $\mu$-centroid of the level set.
\end{defn}
When the support of $\mu$ is finite, a function $u \colon \supp \mu \to \Rd$ is $\mu$-regular if and only if $\mathcal{C}_\mu(V_{u,x})\ne\mathcal{C}_\mu(V_{u,z} )$ for every $x,z \in \supp \mu$ with $u(x)\ne u(z)$. In words, we ask that different level sets of $u$ have different centroids, and in this case, we have $\mcl M_u(\mu) = \int \delta_{\mcl C_\mu(V_{u,x})} \, \d \mu(x)$. The phrasing of Definition~\ref{defn:regular} is more complicated due to some measure-theoretic technical difficulties that arise when the support of $\mu$ is uncountable. We will prove the following preliminary proposition in \cref{sec:KKT} below.
\begin{prop}\label{prop:regular}
	The function $u_{\mu,\lambda}$ is $\mu$-regular.
\end{prop}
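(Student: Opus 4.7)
My plan is to verify the two requirements of $\mu$-regularity separately: the structural condition (existence of a Borel set $A$ of full $\mu$-measure on which each $V_{u,x}\cap A$ is either a singleton or of positive $\mu$-measure, writing $u := u_{\mu,\lambda}$) and the centroid-distinguishability condition. For the structural part, I would use the first-order KKT-type optimality conditions for $u$ developed in \cref{sec:KKT}, which I expect to take the form
\[
  x - u(x) = \lambda \int \xi(x,y) \,\dif\mu(y), \qquad \text{for $\mu$-a.e.\ } x,
\]
with $\xi(x,y) = (u(x) - u(y))/|u(x) - u(y)|$ when $u(x) \ne u(y)$, and $\xi(x,y)$ an antisymmetric selection of norm at most $1$ otherwise. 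Let $A$ be the full-$\mu$-measure Borel set on which this identity holds. Subtracting the identity at two distinct points $x, y \in A$ with $u(x) = u(y)$, the contributions from $z$ outside the common level set $V := u^{-1}(u(x))$ cancel by antisymmetry, leaving $|x-y| \le 2\lambda\,\mu(V)$. Hence any level set meeting $A$ in two or more distinct points has positive $\mu$-measure, while level sets meeting $A$ in just one point are singletons; either way, $V_{u,x}\cap A$ is $\mu$-regular for every $x \in A$.

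The heart of the proof is a ``merging'' argument that supplies the centroid-distinguishability property. Suppose for contradiction that two distinct values $v_1 \ne v_2$ in the range of $u$ correspond to level sets $V_1, V_2$, both of positive $\mu$-measure, with a common $\mu$-centroid $\mcl C_\mu(V_1) = \mcl C_\mu(V_2) =: c$. Define a competitor $u'$ equal to $u$ off $V_1 \cup V_2$ and equal on $V_1 \cup V_2$ to the weighted mean
\[
  w := \frac{\mu(V_1)\,v_1 + \mu(V_2)\,v_2}{\mu(V_1) + \mu(V_2)}.
\]
Because the two clusters share the centroid $c$, the expansion $\int_{V_i}|z-x|^2\,\dif\mu(x) = \mu(V_i)|z-c|^2 + K_i$ (with $K_i$ independent of $z$) gives the exact fidelity change
\[
  \int |u'(x) - x|^2\,\dif\mu(x) - \int |u(x) - x|^2\,\dif\mu(x) = -\frac{\mu(V_1)\,\mu(V_2)}{\mu(V_1) + \mu(V_2)}\,|v_1 - v_2|^2 < 0.
\]
The penalty can only decrease as well: pairs with both endpoints in $V_1 \cup V_2$ contribute $2\mu(V_1)\mu(V_2)|v_1 - v_2|$ before and $0$ after, while for pairs with exactly one endpoint in $V_1 \cup V_2$ the convexity estimate $(\mu(V_1) + \mu(V_2))|w - u(y)| \le \mu(V_1)|v_1 - u(y)| + \mu(V_2)|v_2 - u(y)|$ yields a nonpositive contribution after integrating in $y$, and the remaining pairs are unchanged. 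Thus $J_{\mu,\lambda}(u') < J_{\mu,\lambda}(u)$, contradicting optimality. A short Fubini argument then handles comparisons between singleton level sets $\{x\}$ and cluster centroids: by the structural step, non-cluster points are automatically non-atoms of $\mu$, and only countably many cluster centroids arise, so the set of $(x,z)$ for which $x$ happens to equal some cluster centroid is $\mu \otimes \mu$-null.

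The main obstacle I anticipate is not the merging computation but the measure-theoretic bookkeeping needed to set it up: fixing a representative of $u_{\mu,\lambda}$ on which pointwise statements along the KKT identity are meaningful, and verifying that identity rigorously in the continuous setting (the business of \cref{sec:KKT}). Once these foundations are in place, the merging argument provides the strict inequality that drives the whole conclusion.
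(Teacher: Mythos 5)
Your proof is correct, but its main step takes a genuinely different route from the paper. The structural half is essentially the paper's own argument: both use the KKT characterization (\cref{thm:wcharacterization}) to show that a level set of zero $\mu$-measure meets the full-measure set $A$ in a single point (one quibble: in your subtraction the contributions from $z$ outside the common level set cancel because $u(x)=u(y)$ makes the two $\sgn$ factors equal, not ``by antisymmetry''; the bound $|x-y|\le 2\lambda\mu(V)$ stands). For centroid-distinguishability, the paper instead averages the KKT identity over a cluster (\cref{lem:regularity-importantbit}) to get $\mathcal{E}(x)-u(x)=\lambda\int_{V_{u,x}^{\mathrm{c}}}\sgn(u(x)-u(y))\,\dif\mu(y)$, and then proves the monotonicity inequality of \cref{lem:centersnotthesame}, $(u(x)-u(z))\cdot(\mathcal{E}(x)-\mathcal{E}(z))\ge |u(x)-u(z)|^2+\lambda[\mu(V_{u,x})+\mu(V_{u,z})]\,|u(x)-u(z)|$, which forces distinct centroids for a.e.\ pair in one stroke and is moreover quantitative: it is reused later for the separation estimate \cref{eq:centroidsfarapart} in \cref{prop:ballsdontintersect} and for the well-definedness of $\tilde u$ in the proof of \cref{thm:splitcondition}. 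You rule out two positive-mass clusters sharing a centroid by a direct variational merging argument: collapsing both to their mass-weighted mean strictly decreases the fidelity term (your exact computation $-\frac{\mu(V_1)\mu(V_2)}{\mu(V_1)+\mu(V_2)}|v_1-v_2|^2$ is correct and uses the shared centroid precisely where it should), while convexity of the norm makes the fusion term nonincreasing, contradicting minimality of $u_{\mu,\lambda}$; singleton-versus-cluster pairs are then dispatched by the countability/Fubini observation, and singleton-versus-singleton pairs are trivial. Your route is more elementary for this part---it needs only optimality, not the averaged KKT identity---and it fully suffices for the a.e.\ statement in the definition of $\mu$-regularity; what it does not deliver is the quantitative centroid separation that the paper's monotonicity lemma provides and exploits later, and it requires a case analysis that the paper's a.e.\ inequality handles uniformly.
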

Now we can state our exact characterization of the minimizer $u_{\mu,\lambda}$.
\begin{thm}
	\label{thm:splitcondition}Let $u$ be a $\mu$-regular function and $\lambda\ge0$. The following are equivalent.
	\begin{enumerate}
		\item For $\mu$-a.e.~$x$, we have $V_{u,x} = V_{u_{\mu,\lambda},x}$ up to a $\mu$-null set.
		\item The measure $\mathcal{M}_{u}(\mu)$ is $\lambda$-shattered and,
		      for $\mu$-a.e.~$x$, the restriction $\mu|_{V_{u,x}}$ is $\lambda$-cohesive.
	\end{enumerate}
\end{thm}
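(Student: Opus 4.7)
The strategy is a two-level decomposition: cohesivity of each $\mu|_{V_{u,x}}$ controls the ``within-cluster'' minimization, while shattering of $\mathcal{M}_u(\mu)$ controls the ``between-cluster'' one, and the two are glued by a bridging identity. Concretely, I would first establish that for every $\mu$-regular $u$, every $w \in L^2(\mathcal{M}_u(\mu); \Rd)$, and $v := w \circ \mathcal{E}_{\mu,u}$,
\[
    J_{\mu,\lambda}(v) = J_{\mathcal{M}_u(\mu),\lambda}(w) + \int |x - \mathcal{E}_{\mu,u}(x)|^2 \, \dif\mu(x).
\]
The derivation combines the parallel-axis identity, the pushforward definition of $\mathcal{M}_u(\mu)$, and the vanishing of the cross term $\int_V (w(c) - c) \cdot (c - x)\,\dif\mu(x) = 0$ on each level set $V$ with centroid $c = \mathcal{C}_\mu(V)$.

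For $(1)\Rightarrow(2)$, take $u = u_{\mu,\lambda}$, which is $\mu$-regular by \cref{prop:regular}. Cohesivity of $\mu|_{V_{u,x}}$ is exactly \cref{thm:chiquet-characterization}. For shattering, define $w$ on $\supp \mathcal{M}_u(\mu)$ by $w(\mathcal{E}_{\mu,u}(x)) = u(x)$, which is well-defined by $\mu$-regularity. For any competitor $w'$, the induced $v' := w' \circ \mathcal{E}_{\mu,u}$ is a valid competitor in $L^2(\mu; \Rd)$, and the bridging identity combined with global minimality of $u$ forces $J_{\mathcal{M}_u(\mu),\lambda}(w) \le J_{\mathcal{M}_u(\mu),\lambda}(w')$. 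Uniqueness of the minimizer yields $w = u_{\mathcal{M}_u(\mu),\lambda}$; since distinct level sets of $u$ carry distinct values of $u$ by definition, $w$ is injective, i.e., $\mathcal{M}_u(\mu)$ is $\lambda$-shattered.

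For $(2)\Rightarrow(1)$, set $w := u_{\mathcal{M}_u(\mu),\lambda}$, which is injective by shattering, and define $v := w \circ \mathcal{E}_{\mu,u}$, so that $v$ and $u$ have the same level sets modulo $\mu$-null sets. It suffices to show $v = u_{\mu,\lambda}$. For this I would verify the KKT characterization of the minimizer developed in \cref{sec:KKT}: produce a measurable, antisymmetric $\xi:\Rd\times\Rd\to\bar B_1(0)$ such that $\xi(x,y) = (v(x)-v(y))/|v(x)-v(y)|$ when $v(x)\ne v(y)$, and $v(x) = x - \lambda \int \xi(x,y)\,\dif\mu(y)$ for $\mu$-a.e.~$x$. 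Between points in distinct level sets, $\xi$ is forced by the nondegeneracy of $v$ to be the normalized difference of the values of $w$ at the corresponding centroids; within each level set $V$, the $\lambda$-cohesivity of $\mu|_V$ supplies, via its own KKT relation, an antisymmetric kernel $\eta_V$ on $V\times V$ with $|\eta_V|\le 1$ and $\int_V \eta_V(x,y)\,\dif\mu(y) = (x - \mathcal{C}_\mu(V))/\lambda$ for $\mu$-a.e.~$x\in V$. Gluing these two pieces and invoking the shattered KKT relation $w(c) = c - \lambda \int (w(c)-w(c'))/|w(c)-w(c')|\,\dif\mathcal{M}_u(\mu)(c')$ gives $v(x) + \lambda \int\xi(x,y)\,\dif\mu(y) = x$ for $\mu$-a.e.~$x$, closing the argument.

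The main obstacle is the joint measurability of the glued kernel $\xi$ in the $(2)\Rightarrow(1)$ direction, since the family of level sets is potentially uncountable. I would address this via a disintegration of $\mu$ with respect to $\mathcal{E}_{\mu,u}$, applying the KKT framework from \cref{sec:KKT} to the resulting family of cohesive conditional measures, and reading the required measurable selection directly from the structure of the subdifferential. The remaining measure-theoretic book-keeping, around singleton level sets and null-set modifications required by the definition of $\mu$-regularity, should be routine once the bridging identity is in place.
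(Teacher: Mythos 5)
Your bridging identity is correct (the cross term vanishes because $\mathcal{E}_{\mu,u}(x)$ is the centroid of the level set through $x$, and equals $x$ a.e.\ on null clusters by $\mu$-regularity), and it gives a clean purely variational proof of the \emph{shattering} half of $(1)\Rightarrow(2)$: comparing $J_{\mu,\lambda}(u_{\mu,\lambda})$ with $J_{\mu,\lambda}(w'\circ\mathcal{E}_{\mu,u})$ and using uniqueness of minimizers does show that $X\mapsto u(x)$ (for $X=\mathcal{E}_{\mu,u}(x)$) is the injective minimizer for $\mathcal{M}_u(\mu)$. This is a genuinely different route from the paper, which instead verifies the KKT conditions of \cref{thm:wcharacterization} for this candidate using \cref{eq:averagesplit-1}. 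Your $(2)\Rightarrow(1)$ direction is essentially the paper's argument (glue the within-cluster multipliers coming from cohesivity with $\sgn(v(x)-v(y))$ across clusters and check \cref{eq:xminmusux}); the measurability worry you flag is real but mild, since clusters of positive mass are at most countable and pairs lying in a common null cluster form a $\mu^{\otimes 2}$-null set, so no disintegration machinery is needed.

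The genuine gap is in the \emph{cohesivity} half of $(1)\Rightarrow(2)$: you justify it by citing \cref{thm:chiquet-characterization}, but in this paper that theorem is itself deduced from \cref{thm:splitcondition} (together with \cref{prop:concentrateonV}), so your argument is circular as written, and no independent mechanism is offered. The bridging identity does not obviously supply one either: replacing $u_{\mu,\lambda}$ on a single cluster $V$ by a non-constant competitor changes the fusion term between $V$ and $V^{\mathrm{c}}$, so a naive exchange argument does not close. The direct repair is the paper's: take the multiplier $w$ from \cref{thm:wcharacterization} for $u_{\mu,\lambda}$, average \cref{eq:splitx-1} over the cluster as in \cref{lem:regularity-importantbit}, and subtract to obtain
\begin{equation*}
x-\mathcal{E}_{\mu,u_{\mu,\lambda}}(x)=\lambda\int_{V_{u_{\mu,\lambda},x}}w(x,y)\,\dif\mu(y)
\qquad\text{for $\mu$-a.e.\ }x,
\end{equation*}
which exhibits an admissible multiplier for the restricted problem and hence shows that the constant $\mathcal{E}_{\mu,u_{\mu,\lambda}}(x)$ minimizes $J_{\mu|_{V_{u_{\mu,\lambda},x}},\lambda}$, i.e.\ $\mu|_{V_{u_{\mu,\lambda},x}}$ is $\lambda$-cohesive. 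All the ingredients for this fix (\cref{thm:wcharacterization}, \cref{lem:regularity-importantbit}) are proved independently of \cref{thm:splitcondition}, so with this substitution your proof is complete.
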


Shortly after we posted the first version of this article, \cite{NM21} derived several results on the properties of the optimal clusters. Our framework allows us to recover one of their main results in the measure-valued setting. %
The following proposition, which is analogous to Theorem~3 of \cite{NM21}, states that each cluster is contained in a ball centered at the centroid of the cluster and of radius $\lambda$ times the total mass of the cluster; and that the centroids of the different clusters are sufficiently far apart from one another that these balls do not intersect.
We denote by $\overline B_r(x)$ the closed Euclidean ball of radius $r \ge 0$ centered at $x \in \Rd$.
\begin{prop}\label{prop:ballsdontintersect}
    For $\mu$-a.e.\ $x,z\in\R^d$, we have 
        \begin{equation}
    \label{e.clusterball}
    V_{u_{\mu,\lambda},x} \subset \overline B_{\lambda \mu(V_{u_{\mu,\lambda},x})}\Ll(\mathcal E_{\mu,u_{\mu,\lambda}}(x)\Rr),
    \end{equation}
    and whenever $u_{\mu,\lambda}(x)\ne u_{\mu,\lambda}(z)$,
    \begin{equation}
        |\mathcal{E}_{\mu,u_{\mu,\lambda}}(x)-\mathcal{E}_{\mu,u_{\mu,\lambda}}(z)|> \lambda[\mu(V_{u_{\mu,\lambda},x})+\mu(V_{u_{\mu,\lambda},z})].\label{eq:centroidsfarapart}
    \end{equation}
\end{prop}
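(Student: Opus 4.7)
The plan is to invoke the first-order (subdifferential) optimality conditions for $u := u_{\mu,\lambda}$, which the KKT analysis of the earlier section should produce in the form of an antisymmetric measurable selection $\xi : \R^d \times \R^d \to \R^d$ with $|\xi(x,y)| \le 1$ everywhere, $\xi(x,y) = (u(x) - u(y))/|u(x) - u(y)|$ whenever $u(x) \ne u(y)$, and
\[
x - u(x) = \lambda \int \xi(x,y) \, \d \mu(y) \qquad \text{for } \mu\text{-a.e.\ } x.
\]
For \eqref{e.clusterball}, pick a typical $x$, let $V := V_{u,x}$ and $c := u(x)$, and integrate the above identity in $x$ over $V$. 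The $y$-integral splits: the part with $y \in V$ vanishes by the antisymmetry of $\xi$, and on $V^c$ the integrand equals $(c - u(y))/|c - u(y)|$, independently of $x \in V$. Dividing by $\mu(V)$ yields
\[
\mathcal{C}_\mu(V) - c = \lambda \int_{V^c} \frac{c - u(y)}{|c - u(y)|} \, \d \mu(y).
\]
Substituting this identity back into the Euler--Lagrange equation at any $x \in V$ cancels the $V^c$-part and leaves $x - \mathcal{C}_\mu(V) = \lambda \int_V \xi(x,y) \, \d \mu(y)$; the bound $|\xi| \le 1$ then gives $|x - \mathcal{C}_\mu(V)| \le \lambda \mu(V)$.

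For \eqref{eq:centroidsfarapart}, take two clusters $V_1, V_2$ on which $u$ is constantly equal to $c_1 \ne c_2$, and set $e := (c_1 - c_2)/|c_1 - c_2|$. Apply the boxed identity above to both $V_1$ and $V_2$, take inner products with $e$, and split each complementary domain as $V_1^c = V_2 \cup (V_1 \cup V_2)^c$ and $V_2^c = V_1 \cup (V_1 \cup V_2)^c$. The ``opposite cluster'' contributions evaluate to $\pm \lambda$ times the mass of that cluster, with signs that combine to produce $+\lambda(\mu(V_1) + \mu(V_2))$, leaving
\[
(\mathcal{C}_\mu(V_1) - \mathcal{C}_\mu(V_2)) \cdot e = |c_1 - c_2| + \lambda(\mu(V_1) + \mu(V_2)) + \lambda \int_{(V_1 \cup V_2)^c} g(u(y)) \, \d \mu(y),
\]
where $g(c') := \frac{(c_1 - c') \cdot e}{|c_1 - c'|} - \frac{(c_2 - c') \cdot e}{|c_2 - c'|}$. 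A short geometric lemma shows $g \ge 0$ pointwise: after reducing to $c_1 = a e$, $c_2 = -a e$ and writing $c' = \alpha e + w$ with $w \perp e$, one has $g(c') = \frac{a - \alpha}{\sqrt{(a - \alpha)^2 + |w|^2}} + \frac{a + \alpha}{\sqrt{(a + \alpha)^2 + |w|^2}}$, which is non-negative because $t \mapsto t/\sqrt{t^2 + |w|^2}$ is odd and increasing. Dropping the non-negative integral, using $|\mathcal{C}_\mu(V_1) - \mathcal{C}_\mu(V_2)| \ge (\mathcal{C}_\mu(V_1) - \mathcal{C}_\mu(V_2)) \cdot e$, and observing $|c_1 - c_2| > 0$, I obtain the strict inequality \eqref{eq:centroidsfarapart}.

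The main hurdle is not conceptual but measure-theoretic: once the subdifferential identity for $u_{\mu,\lambda}$ is available, the rest is a short convex-geometric calculation. The genuine care needed lies in simultaneously choosing measurable cluster representatives on a single full-measure set on which the Euler--Lagrange equation, the selection $\xi$, and the centroid map $\mathcal{E}_{\mu,u_{\mu,\lambda}}$ are all well-defined, so that the ``$\mu$-a.e.\ $x,z$'' qualifiers in the statement of the proposition can be reconciled with the pointwise containment \eqref{e.clusterball}.
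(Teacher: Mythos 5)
Your proposal is correct and follows essentially the same route as the paper: your boxed centroid identity and the subtraction step are exactly \cref{lem:regularity-importantbit} and the computation inside the proof of \cref{thm:splitcondition} (both consequences of the KKT characterization \cref{thm:wcharacterization}), and your two-cluster inner-product argument with the pointwise nonnegativity of $g$ is \cref{lem:centersnotthesame}, where the same positivity is obtained by Cauchy--Schwarz. The paper's proof of \cref{prop:ballsdontintersect} merely cites these results (together with \cref{prop:lambda1lb}), whereas you inline them; the measure-theoretic caveat you flag (singleton clusters, a.e.\ representatives) is handled in the paper by \cref{prop:regular} and does not affect correctness.
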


We will prove \cref{thm:splitcondition,prop:ballsdontintersect} in \cref{sec:splitcondition}
below.

\cref{thm:splitcondition} motivates taking particular interest in the
properties of $\lambda$-cohesive and $\lambda$-shattered sets. We are mostly interested in situations
in which a dataset can be partitioned into a bounded number of clusters
in the presence of a large number of datapoints. In light of \cref{thm:splitcondition},
this means that there should be a $\lambda$ such that the centroids
of the clusters, weighted by the fraction of datapoints in the cluster,
form a $\lambda$-shattered set, while the datapoints in each
cluster form a $\lambda$-cohesive set. In the regime where there
is a bounded number of clusters but the number of datapoints tends
to infinity, the question of the $\lambda$-shattering of the set of centroids
is a bounded-size optimization problem. In this paper we only address
it in the simplest case. On the other hand, the question of $\lambda$-cohesion
of each cluster lends itself to asymptotic analysis, so this will
interest us in the sequel. We will consider the
``continuum limit'' of situations with continuous measures, and
also provide ``law of large numbers'' results for atomic measures
drawn from the corresponding continuous distributions.

We noted above that if $\mu$ is $\lambda$-cohesive,
then it is also $\lambda'$-cohesive for any $\lambda'\ge\lambda$. By \cref{thm:chiquet-characterization}, this means that if $\mu$ is
$\lambda$-shattered (which \cref{thm:chiquet-characterization,prop:regular} tell us happens if and only if there are no  $\lambda$-cohesive sets of positive $\mu$-measure), then it is also $\lambda'$-shattered for any
$\lambda'\le\lambda$. Thus we define
\begin{equation}\label{eq:lambda1def}
	\lambda_{1}(\mu)\coloneqq\inf\{\lambda\ge0\mid\text{\ensuremath{\mu} is \ensuremath{\lambda}-cohesive}\}
    \end{equation}
and
\begin{equation}\label{eq:lambdastardef}
	\lambda_{*}(\mu)\coloneqq\sup\{\lambda\ge0\mid\text{\ensuremath{\mu} is \ensuremath{\lambda}-shattered}\}.
    \end{equation}
We then say that the level sets of a $\mu$-regular function $u$ are \emph{detectable
	for $\mu$} if
\begin{equation}
	\lambda_{*}(\mathcal{M}_{u}(\mu))>\esssup_{x\sim\mu}\lambda_{1}(\mu|_{V_{u,x}}).
	\label{eq:lambda-detectable}
\end{equation}
By \cref{thm:splitcondition}, this is equivalent to there existing
some $\lambda$ such that the level sets of $u$ are the same (up
to $\mu$-null modifications) as those of $u_{\mu,\lambda}$. We define
the \emph{detection parameter set} to be the  (possibly empty) interval
\begin{equation}
	\Lambda(\mu,u)\coloneqq\left(\esssup_{x\sim\mu}\lambda_{1}(\mu|_{V_{u,x}}),\lambda_{*}(\mathcal{M}_{u}(\mu))\right).\label{eq:Lambdadef}
\end{equation}

The parameter $\lambda_{1}(\mu)$ can be characterized up to a factor of $2$
by simple geometric properties of $\mu$. Define the ``radius'' of the measure $\mu$ by
\begin{equation}
	R(\mu)\coloneqq\esssup_{x\sim\mu}\left|x-\mathcal{C}_{\mu}(\mathbf{R}^{d})\right|,
	\label{eq:Rmudef}
\end{equation} and for $V\subset\mathbf{R}^{d}$, let $\diam V$ denote the Euclidean
diameter of $V$. It turns out (see \cref{prop:lambda1lb} below) that, if $\mu(\R^d)>0$,
\begin{equation}
	\frac{R(\mu)}{\mu(\mathbf{R}^{d})}\le\lambda_{1}(\mu)\le\frac{\diam(\supp\mu)}{\mu(\mathbf{R}^{d})}.\label{eq:lambda1lb}
\end{equation}
Since $R(\mu)\le\diam(\supp\mu)\le2R(\mu)$, this characterizes $\lambda_{1}(\mu)$
up to a factor of $2$  in terms of only the radius and the diameter of
$\supp\mu$. On the other hand, we will compute in \cref{prop:twopoints}
below that, for $a_{0},a_{1}>0$ and $x_{0},x_{1}\in\mathbf{R}^{d}$,
we have
\[
	\lambda_{*}(a_{0}\delta_{x_{0}}+a_{1}\delta_{x_{1}})=\frac{|x_{1}-x_{0}|}{a_{0}+a_{1}}.
\]
Therefore, by \cref{thm:splitcondition}, if equality holds in the first
inequality in \cref{eq:lambda1lb}, then the partition of $\mu+\tau_{x}\mu$---the
sum of $\mu$ and its translation by $x$---into $\supp\mu$ and
$\tau_{x}\supp\mu$ is detectable as long as $|x|>2R(\mu)$. We could
certainly hope for no better since if $|x|\le R(\mu)$ then the supports
of $\mu$ and its translation may overlap (cf. \cref{prop:ballsdontintersect}). On the other hand, if $\lambda_{1}(\mu)>\frac{R(\mu)}{\mu(\mathbf{R}^{d})}$
then for this partition to be detectable we actually need greater
separation than the obvious condition for the supports to not overlap
would suggest. For this reason we are motivated to resolve the value
of $\lambda_{1}(\mu)$ more precisely than is done by \cref{eq:lambda1lb}.
Of particular interest are measures $\mu$ for which $\lambda_{1}(\mu)=\frac{R(\mu)}{\mu(\mathbf{R}^{d})}$,
which are such that combinations with any translation by at least twice the radius
are detectable.

We now state a characterization of $\lambda_{1}(\mu)$, which will follow from a more general theorem (\cref{thm:wcharacterization}
below) giving the KKT characterization of the minimizer of~$J_{\mu,\lambda}$.
(\cref{thm:wcharacterization} will also be crucial for the proof of
\cref{thm:splitcondition}.) In the discrete setting this result follows
from the work of \citet{CGR17}; see also Theorem~1 of \citet{JVZ19}.

\begin{thm}
	\label{thm:lambda1}We have
	\begin{equation}
		\lambda_{1}(\mu)=\mu(\mathbf{R}^{d})^{-1}\min_{q\in\mathcal{Q}(\mu)}\|q\|_{\infty},\label{eq:lambda1cond}
	\end{equation}
	where $\mathcal{Q}(\mu)$ is the set of all $q\in L^{\infty}(\mu^{\otimes2};\Rd)$
	satisfying, for $\mu$-a.e.\ $x,y \in \Rd$,
	\begin{equation}
		q(x,y)=-q(y,x)\label{eq:qantisym}
	\end{equation}
	and
	\begin{equation}
		x-\mathcal{C}_{\mu}(\mathbf{R}^{d})=\fint q(x,z)\,\dif\mu(z).
		\label{eq:qcentroidcond}
	\end{equation}
\end{thm}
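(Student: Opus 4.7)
The strategy is to derive this as a direct consequence of the KKT characterization stated as \cref{thm:wcharacterization}. The plan is to apply that characterization to the specific candidate minimizer $u \equiv \mathcal{C}_{\mu}(\R^d)$, whose optimality is precisely $\lambda$-cohesivity.

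First I would observe that $\mu$ is $\lambda$-cohesive if and only if $u_{\mu,\lambda}$ is $\mu$-a.e.\ equal to the global centroid $\bar{x}:=\mathcal{C}_{\mu}(\R^d)$. The ``only if'' direction follows by integrating any first-order optimality identity against $\dif\mu$: the contribution from the interaction term vanishes by antisymmetry, leaving $\int (u_{\mu,\lambda}(x)-x)\,\dif\mu(x)=0$, which forces the common value to be $\bar{x}$. With this in hand, $\lambda$-cohesivity becomes a purely algebraic question: does the constant function $u\equiv\bar{x}$ satisfy the KKT conditions for $J_{\mu,\lambda}$?

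Next, I would substitute $u\equiv\bar{x}$ into the KKT characterization provided by \cref{thm:wcharacterization}. This should yield: $\mu$ is $\lambda$-cohesive if and only if there exists a measurable antisymmetric ``subgradient selector'' $q_{0}\in L^{\infty}(\mu^{\otimes 2};\R^d)$ with $\|q_{0}\|_{\infty}\le 1$ such that, for $\mu$-a.e.~$x$,
\[
    x-\bar{x}=\lambda\int q_{0}(x,y)\,\dif\mu(y).
\]
Rescaling via $q:=\lambda\mu(\R^d)q_{0}$ converts the existence of such a $q_{0}$ into the existence of $q\in\mathcal{Q}(\mu)$ with $\|q\|_{\infty}\le\lambda\mu(\R^d)$, and the correspondence is a bijection. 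Taking the infimum over admissible $\lambda$ on the left-hand side and over $q\in\mathcal{Q}(\mu)$ on the right-hand side therefore gives $\lambda_{1}(\mu)\mu(\R^d)=\inf_{q\in\mathcal{Q}(\mu)}\|q\|_{\infty}$.

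The only non-routine step is justifying that the infimum is attained and, relatedly, that $\lambda$-cohesivity holds at the threshold $\lambda=\lambda_{1}(\mu)$. For this I would first note that $\mathcal{Q}(\mu)$ is nonempty---the choice $q(x,y):=x-y$ is antisymmetric, bounded (since $\supp\mu$ is compact), and clearly satisfies the centroid condition. The constraints defining $\mathcal{Q}(\mu)$ (antisymmetry and the affine identity \cref{eq:qcentroidcond}) are stable under weak-$*$ convergence in $L^{\infty}(\mu^{\otimes 2};\R^d)$, so any minimizing sequence $\{q_{n}\}$ with $\|q_{n}\|_{\infty}$ bounded admits a weak-$*$ cluster point $q\in\mathcal{Q}(\mu)$, and weak-$*$ lower semicontinuity of the $L^{\infty}$-norm yields $\|q\|_{\infty}\le\liminf_{n}\|q_{n}\|_{\infty}$. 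This gives a genuine minimum and closes the argument.

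The main potential obstacle is ensuring the subgradient form supplied by \cref{thm:wcharacterization} really matches the antisymmetric, $L^{\infty}$-bounded object demanded by the definition of $\mathcal{Q}(\mu)$; once that bookkeeping is in place, the rest is a straightforward rescaling together with the weak-$*$ compactness argument above.
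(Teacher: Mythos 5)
Your proposal is correct and follows essentially the same route as the paper: integrate the KKT identity of \cref{thm:wcharacterization} to identify the constant value as $\mathcal{C}_\mu(\mathbf{R}^d)$, observe that $\lambda$-cohesivity is equivalent to the existence of an antisymmetric, $1$-bounded $w$ with $x-\mathcal{C}_\mu(\mathbf{R}^d)=\lambda\int w(x,y)\,\dif\mu(y)$, and rescale $q=\lambda\mu(\mathbf{R}^d)w$. Your additional weak-$*$ compactness argument for attainment of the minimum is a sound supplement to a point the paper's proof leaves implicit.
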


We will prove \cref{thm:lambda1} as a consequence of the KKT conditions
in \cref{sec:KKT}.

In \cref{sec:examples}, we use our tools to estimate or compute $\lambda_1(\mu)$ for $\mu$ the uniform measures on the $d$-sphere,
the $d$-ball, and the vertices of the cross-polytope. In $d\ge2$,
these examples do not yield equality in the first inequality of \cref{eq:lambda1lb}.
Thus we also give an explicit example of a nontrivial measure in $d\ge2$
(a ball with density given by a power of the distance from the origin)
for which equality does indeed hold.

In \cref{sec:experiments}, we show the results of some additional numerical experiments regarding the examples considered in \cref{sec:examples}.

\subsection{Stability of the clusters}
We now turn our attention to the stability of the splittings. As the quantities
in \cref{thm:lambda1} are often more analytically tractable in the
presence of symmetries, it can be easier to reason about
the detectability of partitions in the case when measures have a nice symmetry property or a continuous density. On the other hand, in applications one is ultimately interested
in atomic measures, often with some amount of randomness. In \cref{sec:stability}
we prove several stability results showing that the clustering properties
of these models approach the clustering properties of their limits.
As example applications of these results, we prove \cref{t.stoch.ball} as well as the following
theorem.

\begin{thm}
	\label{thm:stability}Let $\mu$ be a probability measure on $\mathbf{R}^{d}$ such that
	\begin{equation}
		\supp\mu=\bigcup_{i=1}^{I}\overline{U_{i}}\label{eq:suppartition}
	\end{equation}
	for some bounded connected open sets $U_{1},\ldots,U_{I}$, each with a Lipschitz
	boundary. Assume that the measure $\mu$ is absolutely continuous with respect to the Lebesgue
	measure, with Radon--Nikodym derivative bounded above and away from
	zero on each $U_{i}$. Let $u$ be an arbitrary function that is constant on each
	$\overline{U_{i}}$, and suppose that $u$ is detectable for $\mu$.
	Let $(X_{n})_{n\ge1}$ be a sequence of independent
	random variables, each with law $\mu$, and define
	\[
		\mu_{N}\coloneqq \frac{1}{N}\sum_{n=1}^{N}\delta_{X_{n}}.
	\]
	Then the endpoints of $\Lambda(\mu_{N},u)$ converge to those of $\Lambda(\mu,u)$
	in probability as $N\to\infty$.
\end{thm}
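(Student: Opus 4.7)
The plan is to treat the two endpoints of $\Lambda(\mu_N,u)$ separately. Because $u$ is constant on each $\overline{U_i}$, it takes only finitely many values $v_1,\dots,v_J$, and each level set $V_j := u^{-1}(v_j)$ is a finite union of the closed sets $\overline{U_i}$, hence a bounded set with Lipschitz boundary on which the density of $\mu$ is bounded above and away from zero. Consequently,
\[
\esssup_{x\sim\mu_N}\lambda_1\bigl(\mu_N|_{V_{u,x}}\bigr)
= \max_{1\le j\le J}\lambda_1\bigl(\mu_N|_{V_j}\bigr)
\quad\text{and}\quad
\mathcal{M}_u(\mu_N)=\sum_{j=1}^J \mu_N(V_j)\,\delta_{\mathcal{C}_{\mu_N}(V_j)},
\]
with analogous formulas for $\mu$. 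The theorem thus reduces to showing (a) that $\lambda_1(\mu_N|_{V_j})\to\lambda_1(\mu|_{V_j})$ in probability for each $j$, and (b) that $\lambda_*(\mathcal{M}_u(\mu_N))\to\lambda_*(\mathcal{M}_u(\mu))$ in probability.

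Part (b) is the easier half. By the strong law of large numbers, $\mu_N(V_j)\to\mu(V_j)$ and $\mathcal{C}_{\mu_N}(V_j)\to\mathcal{C}_\mu(V_j)$ almost surely for each $j$. Since $u$ is detectable for $\mu$, the limiting centroids $\mathcal{C}_\mu(V_1),\dots,\mathcal{C}_\mu(V_J)$ are pairwise distinct, so with high probability the empirical centroids stay bounded away from one another. I would then establish a continuity lemma for $\lambda_*$ on finitely supported probability measures whose atoms remain pinned to distinct positions: by \cref{thm:splitcondition}, such a $\lambda_*$ equals the largest $\lambda$ for which the identity map is a minimizer of the associated functional, and by \cref{thm:lambda1} applied to each nontrivial grouping of atoms, this $\lambda$ is the value of a minimum over finitely many linear programs of the min-$\|\cdot\|_\infty$ form in \cref{eq:lambda1cond,eq:qantisym,eq:qcentroidcond}, each of whose value depends continuously on the atom locations and weights.

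Part (a) is the crux. Write $\nu:=\mu|_{V_j}$ and $\nu_N:=\mu_N|_{V_j}$; conditionally on its total mass, $\nu_N$ consists of i.i.d.\ samples from the normalized version of $\nu$, and $\nu_N(V_j)\to\nu(V_j)$ almost surely. I would prove the convergence $\lambda_1(\nu_N)\to\lambda_1(\nu)$ via matching upper and lower bounds derived from \cref{thm:lambda1}. For the upper bound, take a near-optimal $q\in\mathcal{Q}(\nu)$, mollify to a continuous $q_\eps$ with $\|q_\eps\|_\infty\le\|q\|_\infty$ and centroid condition perturbed by $O(\eps)$, restrict to the sample pairs to define $q^{(N)}(X_k,X_\ell)$, and add an antisymmetric correction $r(X_k)-r(X_\ell)$ chosen so that the empirical centroid condition is exactly restored. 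The required magnitude of $r$ is controlled by the uniform empirical fluctuations of $\mathcal{C}_{\nu_N}(V_j)$ and of the averages $\fint q_\eps(\cdot,z)\,\d\nu_N(z)$, both of which concentration arguments bound by $O(N^{-1/2}\log N)$; antisymmetry makes the mean correction automatic, and one obtains $\|q^{(N)}\|_\infty\le\|q\|_\infty+o(1)$ with high probability. For the lower bound, I would reverse the construction: starting from a near-optimal $q^{(N)}\in\mathcal{Q}(\nu_N)$, partition $V_j$ into cells of mesh $\delta=\delta(N)$, use local averages of $q^{(N)}$ over sample pairs in each pair of cells to build a piecewise-constant $\tilde q$ on $V_j\times V_j$, mollify, and apply the analogous antisymmetric correction to restore the continuum centroid condition against $\nu$.

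The main obstacle is this lower-bound construction: transferring a dual certificate from $\nu_N$ (which lives on $O(N^2)$ sample pairs) back to the continuum $\nu$ while keeping the $L^\infty$ norm essentially unchanged. The antisymmetry and centroid constraints \cref{eq:qantisym,eq:qcentroidcond} are rigid, so the extension must be perturbed to satisfy them exactly. The leverage is the hypothesis that the density of $\mu$ is bounded above and away from zero on each Lipschitz domain $U_i$: standard covering arguments then guarantee that a mesh-$\delta$ partition has $\Theta(N\delta^d)$ sample points per cell with high probability, so one can choose $\delta\to 0$ slowly enough that the local averaging washes out the $O(N^{-1/2})$ sample-level fluctuations while remaining fine enough to preserve $\|q^{(N)}\|_\infty$ up to $o(1)$.
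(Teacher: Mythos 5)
Your high-level decomposition matches the paper's: reduce to (a) convergence of $\lambda_1(\mu_N|_{V_j})$ on each cluster and (b) convergence of $\lambda_*$ of the atomic measure of centroids. Part (b) as you describe it is essentially sound: reducing $\lambda_*$ of a finite atomic measure to $\min_S\lambda_1(\nu|_S)$ over subsets $S$ containing at least two atoms (via \cref{thm:chiquet-characterization}) and then invoking continuity of each $\lambda_1(\nu|_S)$ in the atom positions and weights is a legitimate alternative to the paper's \cref{prop:close-shattered}, which instead deduces stability of shattering from the $\mathcal{W}_1$-stability of the minimizer itself (\cref{prop:W1-perturbations}); just note that \cref{thm:lambda1} characterizes cohesiveness, not shattering, so the reduction to subsets is a needed intermediate step, not a direct application.

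The gap is in part (a), and it sits exactly where you flag it. The paper does not build empirical dual certificates by hand: it invokes the $\mathcal{W}_\infty$-convergence of empirical measures on Lipschitz domains with density bounded above and below (Theorem~1.1 of Garc\'ia Trillos--Slep\v{c}ev), fixes the random total masses with \cref{prop:stability-AC}, and then applies \cref{prop:Winfty-perturbations}, whose proof transfers a near-optimal $q$ through the disintegration of an optimal transport plan --- a construction that works symmetrically in both directions, automatically preserves antisymmetry and the $L^\infty$ bound, and leaves only a centroid defect of size $\mathcal{W}_\infty(\mu,\tilde\mu)$ that is repaired by \cref{prop:patchupthedifference}. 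Your empirical-to-continuum direction replaces this with a mesh-averaging scheme applied to a certificate $q^{(N)}$ that lives only on sample pairs and carries no regularity; you would then need to verify, uniformly in $x$ over the Lipschitz domain, that the averaged certificate's centroid defect is $o(1)$ (a sum over $\Theta(\delta^{-d})$ cells of mass fluctuations of order $\sqrt{\nu(C)/N}$, plus $O(\delta)$ discretization error, plus boundary-cell effects) while the $L^\infty$ norm does not increase under averaging, mollification, and the final antisymmetric correction. None of this is carried out, and it is the entire analytic content of the theorem. If you want to salvage your route with least effort, prove (or cite) the $\mathcal{W}_\infty$ empirical convergence and then establish the single deterministic bound $|\lambda_1(\tilde\mu)-\lambda_1(\mu)|\le 3\,\mathcal{W}_\infty(\mu,\tilde\mu)/\mu(\mathbf{R}^d)$ by pushing $q$ through a transport plan; that one lemma subsumes both of your directions at once.
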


\cref{thm:stability} is proved in \cref{sec:stability} as a consequence of quantitative continuity estimates for the clustering algorithm with respect to perturbations of $\mu$. Both absolutely continuous and Wasserstein perturbations of $\mu$ are considered; see \cref{prop:stability-AC,prop:Winfty-perturbations,prop:close-shattered}. These propositions can be applied directly to attain stability results analogous to \cref{thm:stability} for other random configurations, or to obtain quantitative results for finite numbers of datapoints.

Several variants of the clustering method discussed in this paper can also be considered. For instance, in the fusion term $\iint |u(x)-u(y)|\,\dif\mu(x)\,\dif\mu(y)$ appearing in \eqref{e.def.J}, one can consider replacing the Euclidean norm $|\cdot|$ by another norm, such as the $\ell^1$ norm. While this modification may be interesting from a computational perspective, it will also destroy the rotational invariance of the functional $J_{\mu,\lambda}$, and in general, we expect that these modified methods will also fail to correctly resolve the stochastic ball model with nearby balls. Another possibility is to introduce weights in the fusion term, such as
\begin{equation*}  %
	\iint_{x \neq y}|x-y|^{-\alpha} |u(x) - u(y)| \, \d \mu(x) \, \d \mu(y),
\end{equation*}
for some exponent $\alpha \in (0,d)$ to be decided. The choice of a power-law weight can be motivated by the desire to ensure that the set of partitions discovered by the algorithm as we vary $\lambda$ is only rescaled under a rescaling of the measure; if one has in mind possibly  complex datasets involving multiple scales, this seems like a natural requirement. Alternative possibilities that do not satisfy this property include replacing $|x-y|^{-\alpha}$ by $\exp(-c |x-y|)$, or other decreasing functions of the distance $|x-y|$. In the discrete setting, one can enforce stronger locality by restricting the sum to connected pairs in the $k$-nearest-neighbor graph. The latter possibility offers significant computational benefits, see \cite{chi2015splitting}. After posting the first ArXiv version of this paper, we showed in  \cite{local-clustering} that the introduction of suitably adjusted exponential weights allows us to recover very general cluster shapes. In particular, the SON clustering algorithm with suitably adjusted weights succeeds in identifying disjoint balls in stochastic ball models, no matter how close they are; and it can also recover clusters whose convex hulls interesect. This contrasts with the results stated in \cref{t.stoch.ball} and \cref{prop:ballsdontintersect} for the unweighted SON clustering algorithm. On the other hand, the addition of weights breaks the symmetries that allow us to prove the theoretical results in the present work.

\section{Examples}\label{sec:examples}

In this section we compute $\lambda_{1}(\mu)$ for several choices
of $\mu$.
\begin{prop}[Two points]
	\label{prop:twopoints}Let $x_{0},x_{1}\in\mathbf{R}^{d}$, $a_0,a_1>0$, and let
	$\mu=a_{0}\delta_{x_{0}}+a_{1}\delta_{x_{1}}$. Then
	\begin{equation}
		\lambda_{1}(\mu)=\lambda_{*}(\mu)=\frac{|x_{1}-x_{0}|}{a_{0}+a_{1}}.\label{eq:twopts}
	\end{equation}
\end{prop}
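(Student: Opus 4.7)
My plan is to deduce the two equalities in \eqref{eq:twopts} separately: first $\lambda_1(\mu)=|x_1-x_0|/(a_0+a_1)$ by a direct application of \cref{thm:lambda1}, and then $\lambda_*(\mu)=\lambda_1(\mu)$ via a complementarity argument that is special to the two-point setting.

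For the first equality, I would observe that $\mu^{\otimes 2}$ is concentrated on $\{x_0,x_1\}^2$, so an element $q\in\mathcal{Q}(\mu)$ is determined $\mu^{\otimes 2}$-a.e.\ by its four values on that set. The antisymmetry condition \eqref{eq:qantisym} forces $q(x_i,x_i)=0$ and $q(x_1,x_0)=-q(x_0,x_1)$, leaving only the single free vector $v:=q(x_0,x_1)\in\R^d$. The centroid condition \eqref{eq:qcentroidcond} at $x=x_0$ then reduces to the linear equation
\[
\frac{a_1(x_0-x_1)}{a_0+a_1}\;=\;x_0-\mathcal{C}_\mu(\R^d)\;=\;\fint q(x_0,z)\,\dif\mu(z)\;=\;\frac{a_1 v}{a_0+a_1},
\]
which pins down $v=x_0-x_1$ uniquely (the analogous equation at $x_1$ follows from antisymmetry). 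Hence the minimum in \eqref{eq:lambda1cond} is achieved at this unique admissible $q$, whose sup-norm is $|v|=|x_1-x_0|$, yielding the claimed value of $\lambda_1(\mu)$.

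For the second equality, the key observation is that for a two-point measure, being $\lambda$-shattered and being $\lambda$-cohesive are complementary conditions: the former is equivalent to $u_{\mu,\lambda}(x_0)\neq u_{\mu,\lambda}(x_1)$ (and in that case one can trivially extend $u_{\mu,\lambda}|_{\{x_0,x_1\}}$ to a measurable injection on the uncountable set $\R^d$), while the latter is equivalent to $u_{\mu,\lambda}(x_0)=u_{\mu,\lambda}(x_1)$. Combined with the monotonicity of cohesivity in $\lambda$ recorded just after \cref{thm:chiquet-characterization}, this forces the set of cohesive $\lambda$'s to be an upward-closed interval with left endpoint $\lambda_1(\mu)$, and the set of shattered $\lambda$'s to be its complement. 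Either way, $\sup$ of the shattered set equals $\inf$ of the cohesive set, giving $\lambda_*(\mu)=\lambda_1(\mu)$.

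I do not foresee a genuine obstacle: step one is a one-parameter linear computation, and step two is an almost tautological complementarity argument. The only mildly delicate point is arranging the injective measurable extension needed by the shattering definition, which amounts to a cosmetic modification of any fixed injection on $\R^d\setminus\{x_0,x_1\}$.
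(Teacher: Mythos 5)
Your proposal is correct and follows essentially the same route as the paper: the paper also notes that for a two-point support cohesivity and shattering are complementary (hence $\lambda_1(\mu)=\lambda_*(\mu)$), and it likewise identifies $q(x,y)=x-y$ as the unique element of $\mathcal{Q}(\mu)$ and applies \cref{thm:lambda1}. The only difference is cosmetic: the paper writes down $q$ and verifies \cref{eq:qcentroidcond}, while you solve the linear constraint to derive the same $q$.
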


\begin{proof}
	Since the support of $\mu$ has only two points, it is clear that
	$\lambda_{1}(\mu)=\lambda_{*}(\mu)$. (For a given~$\lambda$, either
	$\mu$ is $\lambda$-cohesive or it is $\lambda$-shattered.) We observe that 
	\begin{equation*}  %
	\mathcal{C}_{\mu}(\mathbf{R}^{d}) = \frac{a_{0}x_{0}+a_{1}x_{1}}{a_{0}+a_{1}}.
	\end{equation*}
	For a function $q$ to satisfy \cref{eq:qantisym}--\cref{eq:qcentroidcond}, we must have that
	\[
		x_{0}-\frac{a_{0}x_{0}+a_{1}x_{1}}{a_{0}+a_{1}}=\frac{a_{1}}{a_{0}+a_{1}}[x_{0}-x_{1}]=\fint q(x_{0},y)\,\dif\mu(y) = \frac{a_1}{a_0 + a_1} q(x_0,x_1)
	\]
	and
	\[
		x_{1}-\frac{a_{0}x_{0}+a_{1}x_{1}}{a_{0}+a_{1}}=\frac{a_{0}}{a_{0}+a_{1}}[x_{1}-x_{0}]=\fint q(x_{1},y)\,\dif\mu(y) = \frac{a_0}{a_0 + a_1} q(x_1, x_0).
	\]
	The only function $q$ that satisfies the conditions \cref{eq:qantisym}--\cref{eq:qcentroidcond} is therefore the function $q(x,y) \coloneqq x-y$. Then \cref{eq:twopts} follows from \cref{thm:lambda1}.
\end{proof}
\begin{prop}[Interval]
	\label{prop:balld1}Let $d=1$ and let $\mu$ be the Lebesgue measure
	on $[-1/2,1/2]$ (with total mass $1$). Then $\lambda_{1}(\mu)=1/2$.
\end{prop}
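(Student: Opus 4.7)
The plan is to prove $\lambda_1(\mu) = 1/2$ by establishing matching upper and lower bounds, each via results we already have in hand.

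For the lower bound, I would simply invoke \cref{prop:lambda1lb}. Here the centroid $\mathcal{C}_\mu(\mathbf{R}) = 0$, so $R(\mu) = \esssup_{x \sim \mu}|x| = 1/2$, and $\mu(\mathbf{R}) = 1$. Thus
\[
\lambda_1(\mu) \;\ge\; \frac{R(\mu)}{\mu(\mathbf{R})} \;=\; \frac{1}{2}.
\]

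For the upper bound, by \cref{thm:lambda1} it suffices to exhibit an antisymmetric $q \in L^\infty(\mu^{\otimes 2};\mathbf{R})$ satisfying the centroid identity \cref{eq:qcentroidcond} and with $\|q\|_\infty \le 1/2$. The natural choice, given that the target is $x - \mathcal{C}_\mu(\mathbf{R}) = x$ and the symmetry of the interval invites a $\sgn$-based transport, is
\[
q(x,y) \;:=\; \tfrac{1}{2}\sgn(x - y).
\]
This is manifestly antisymmetric and bounded by $1/2$. The verification of \cref{eq:qcentroidcond} amounts to the one-line computation
\[
\fint_{-1/2}^{1/2} q(x,y)\,\dif y \;=\; \tfrac{1}{2}\!\left[\bigl(x-(-\tfrac{1}{2})\bigr) - \bigl(\tfrac{1}{2}-x\bigr)\right] \;=\; x
\]
for each $x \in (-1/2,1/2)$. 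Applying \cref{thm:lambda1} then gives $\lambda_1(\mu) \le \|q\|_\infty / \mu(\mathbf{R}) = 1/2$, completing the proof.

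There is no real obstacle here: the lower bound is immediate from the geometric estimate already proved, and the upper bound reduces to guessing the right $q$, which is forced by antisymmetry and the need to integrate to the identity on a symmetric interval. The only subtle point to keep in mind is that \cref{prop:lambda1lb} does not give equality in general (as the later examples will illustrate in higher dimension), so the coincidence of $\lambda_1(\mu)$ with $R(\mu)/\mu(\mathbf{R})$ here is a genuine feature of the one-dimensional uniform measure.
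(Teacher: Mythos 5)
Your argument is correct and is essentially identical to the paper's proof: the same choice $q(x,y)=\tfrac12\sgn(x-y)$ with the same one-line verification of \cref{eq:qcentroidcond} gives the upper bound via \cref{thm:lambda1}, and the lower bound is taken from \cref{eq:lambda1lb} exactly as in the paper. No gaps.
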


\begin{proof}
	Note that $\mathcal{C}_{\mu}(\mathbf{R}^{d})=0$. Letting $q(x,y)\coloneqq\frac{1}{2}\sgn(x-y)$, we have
	\[
		\int_{-\frac{1}{2}}^{\frac{1}{2}}\frac{1}{2}\sgn(x-y)\,\dif y=\frac{1}{2}\left[(x-(-1/2))-(1/2-x)\right]=x,
	\]
	so \cref{eq:qcentroidcond} holds, and $\|q\|_{\infty}=1/2$, which
	means that $\lambda_{1}\le1/2$ by \cref{thm:lambda1}. On the other
	hand, \cref{eq:lambda1lb} shows that $\lambda_{1}(\mu)\ge1/2$, so
	in fact $\lambda_{1}(\mu)=1/2$.
\end{proof}
The next proposition is a characterization of $\lambda_1(\mu)$ for measures $\mu$ with support in the unit sphere that satisfy certain symmetry properties. We will next apply this result to several concrete examples.
\begin{prop}[Symmetric measures]
	\label{prop:lambda1spheresymmetric}Suppose that $\mu$ is supported
	on $S^{d-1}=\partial B_{1}(0)\subset\mathbf{R}^{d}$, the support
	of $\mu$ comprises at least two points, and there is a subgroup $G\subset\mathrm{O}(d)$
	(the group of Euclidean isometries of $\mathbf{R}^{d}$ preserving
	the origin) preserving $\mu$, %
	acting transitively on $\supp\mu$, and such that for each $x\in\supp\mu$ and each $y\in S^{d-1}\setminus \{x,-x\}$, there is a $g\in G$ such that $g\cdot x=x$ but $g\cdot y\ne y$. Then for every $y\in\supp\mu$
	we have
	\begin{equation}
		\lambda_{1}(\mu)=\frac{2}{\int|x-y|\,\dif\mu(x)}\label{eq:lambda1symmetric}
	\end{equation}
	and
	\begin{equation}
		\lambda_{1}(\mu)\mu(\mathbf{R}^{d})\ge\sqrt{2}.\label{eq:NOTOUCHING}
	\end{equation}
\end{prop}

\begin{proof}
	The strict convexity of $J_{\mu,\lambda}$ noted in the introduction
	implies that the minimizer $u_{\mu,\lambda}$ is unique. Since the measure $\mu$ is invariant under the action of $G$, the minimizer $u_{\mu,\lambda}$ must also be invariant under the action of $G$, in the sense that, for every $g \in G$ and $\mu$-a.e.\ $x \in \Rd$, we have
	\begin{equation}
	\label{e.u.g.invariance}
	    u_{\mu,\lambda}(g\cdot x) = g \cdot u_{\mu,\lambda}(x).
	\end{equation}
	For each $x\in \supp \mu$, if $u_{\mu,\lambda}(x)\not\in\R x$, then by assumption there is a $g\in G$ such that $g\cdot x = x$ and $g\cdot u_{\mu,\lambda}(x)\ne u_{\mu,\lambda}(x)$; but this would imply that $u_{\mu,\lambda}(x) = u_{\mu,\lambda}(g\cdot x) = g\cdot u_{\mu,\lambda}(x)\ne u_{\mu,\lambda}(x)$, a contradiction. Therefore, $u_{\mu,\lambda}(x)\in \R x$ for $\mu$-a.e.\ 
$x \in \Rd$. In other words, for $\mu$-a.e.\ $x \in \Rd$, we can find some $a_{\lambda,x} \in \R$ such that $u_{\mu,\lambda}(x) = a_{\lambda,x} x$. Using again \cref{e.u.g.invariance}, we deduce that for every $g \in G$, we must have $u_{\mu,\lambda}(g\cdot x) = g\cdot u_{\mu,\lambda}(x) = a_{\lambda,x} g \cdot x$. By the transitivity of the action of $G$ on $\supp\mu$, we must thus therefore have a fixed $a_\lambda\in\R$, depending only on $\lambda$ and not on $x$, such that 
	$u_{\mu,\lambda}(x)=a_{\lambda}x$ for $\mu$-a.e.\ $x \in \Rd$. %
	Since $\mu$ is invariant under the action of $G$, which acts transitively on $\supp \mu$, we have that the integral
	$\int |x-y| \, \d \mu(x)$
	does not depend on the choice of $y \in \supp \mu$. Recalling also that $\supp \mu \subset S^{d-1}$, we see that, for every $a \in \R$ and an arbitrary $y\in\supp\mu$,
	\begin{align}
		J_{\mu,\lambda}(x\mapsto ax) 
		& = \int |ax-x|^2 \, \d \mu(x) + \lambda \iint |ax-az| \, \d \mu(x) \, \d \mu(z) \\
		&  =\mu(\mathbf{R}^{d})\left[a^{2}+\lambda|a|\int|x-y|\,\dif\mu(x)-2a+1\right].
		\label{eq:Gforinvariant}
	\end{align}
	The function $u_{\mu,\lambda}$ is constant if and only if the quantity in \eqref{eq:Gforinvariant} is minimized for $a = 0$. This occurs 
	exactly when
	\[
		\lambda\ge\frac{2}{\int|x-y|\,\dif\mu(x)},
	\]
	and we have therefore shown \cref{eq:lambda1symmetric}.

	We now argue that $\mathcal{C}_{\mu}(\mathbf{R}^{d}) = 0$. Integrating the identity \eqref{e.u.g.invariance} in $x$, we see that $\mathcal{C}_{\mu}(\mathbf{R}^{d})$ must be a fixed point of the action of the group $G$. If $\supp \mu$ is of the form $\{x,-x\}$ for some $x \in \Rd$, then by transitivity the measure $\mu$ places the same mass on $x$ and $-x$, so $\mathcal{C}_{\mu}(\mathbf{R}^{d}) = 0$. Otherwise, we observe that the group $G$ has no other fixed point than the origin.  Indeed, if $G$ had another fixed point, then by scaling we could obtain a fixed point $y \in S^{d-1}$. Since $\supp \mu$ is not of the form $\{y,-y\}$, we can find some $x \in \supp \mu \setminus \{y,-y\}$. The assumption on $G$ then guarantees the existence of some $g \in G$ with $g\cdot y \neq y$, a contradiction.

	Now that $\mathcal{C}_{\mu}(\mathbf{R}^{d})=0$ is established, we apply Jensen's inequality to get that 
	\begin{align*}
		\frac{1}{\mu(\mathbf{R}^{d})}\int|x-y|\,\dif\mu(x) & \le\left(\frac{1}{\mu(\mathbf{R}^{d})}\int|x-y|^{2}\,\dif\mu(x)\right)^{1/2}             \\
		                                                   & =\left(\frac{1}{\mu(\mathbf{R}^{d})}\int2(1-x\cdot y)\,\dif\mu(x)\right)^{1/2}
		                                                   \\
		                                                   & 
= \left(2 - \mathcal{C}_{\mu}(\mathbf{R}^{d}) \cdot y \right)^{1/2}                       =\sqrt{2}.
	\end{align*}
	Combining this with \cref{eq:lambda1symmetric} yields that
	\[
		\lambda_{1}(\mu)\mu(\mathbf{R}^{d})\ge\frac{2\mu(\mathbf{R}^{d})}{\int|x-y|\,\dif\mu(x)}\ge\sqrt{2}.\qedhere
	\]
\end{proof}
\begin{cor}[$d$-sphere]
	\label{cor:dsphere}Suppose that $d\ge2$ and let $\mu$ be the uniform
	measure on the unit sphere $S^{d-1}=\partial B_{1}(0)$. Then
	\begin{equation}
		\lambda_{1}(\mu)\mu(\mathbf{R}^{d})=\frac{\Gamma(d-1/2)\Gamma((d-1)/2)}{\Gamma(d-1)\Gamma(d/2)},\label{eq:lambda1sphere}
	\end{equation}
	where $\Gamma(z)=\int_{0}^{\infty}t^{z-1}\e^{-t}\,\dif t$ denotes
	the standard gamma function. In particular,
	\begin{equation}
		\lim_{d\to\infty}\lambda_{1}(\mu)\mu(\mathbf{R}^{d})=\sqrt{2}.\label{eq:lambda1spherelimit}
	\end{equation}
\end{cor}

\begin{proof}
	Assume without loss of generality that $\mu(\Rd)$ is the area of $S^{d-1}$, that is,
	\[
		\mu(\mathbf{R}^{d})=\frac{2\pi^{d/2}}{\Gamma(d/2)}.
	\]
	We also have
	\begin{align*}
		\int|\mathbf{e}_{1}-x|\,\dif\mu(x) & =\frac{2\pi^{(d-1)/2}}{\Gamma((d-1)/2)}\int_{0}^{\pi}(1-\cos^{2}\theta)^{\frac{d-2}{2}}\sqrt{(\cos\theta-1)^{2}+\sin^{2}\theta}\,\dif\theta \\
		                                   & =\frac{2^{d}\pi^{(d-1)/2}}{\Gamma((d-1)/2)}\int_{0}^{\pi}\sin^{d-1}(\theta/2)\cos^{d-2}(\theta/2)\,\dif\theta                               \\
		                                   & =\frac{2^{d}\pi^{(d-1)/2}}{\Gamma((d-1)/2)}\int_{0}^{1}t^{d/2-1}(1-t)^{(d-3)/2}\,\dif t                                                     \\
		                                   & =\frac{2^{d}\pi^{(d-1)/2}\Gamma(d/2)}{\Gamma(d-1/2)}                                                                                        \\
		                                   & =\frac{4\pi^{d/2}\Gamma(d-1)}{\Gamma((d-1)/2)\Gamma(d-1/2)}.
	\end{align*}
	The second identity is by the half-angle formulas for sine and cosine, the third is by making the substitution $t=\sin^2(\theta/2)$, the fourth is by the standard formula for the beta integral, and the last is by the Legendre duplication formula. Hence \cref{eq:lambda1sphere} follows from \cref{prop:lambda1spheresymmetric}, noting that the group $G$ can be taken to be all of $\mathrm{O}(d)$, which clearly satisfies the hypotheses.
	The limit~\cref{eq:lambda1spherelimit} is then a simple computation using Stirling's approximation.
\end{proof}
\begin{cor}[Vertices of the $n$-gon]\label{cor:n-gon} Let $d=2$, $n\ge 2$, and let $\mu$ be a uniform measure on the vertices of the regular $n$-gon inscribed in the unit circle, namely
	\[\mu = \frac1n\sum_{j=1}^n \delta_{\e^{2\pi\ii j/n}},\]
	where we identify $\R^2$ with $\C$. Then we have\[\lambda_1(\mu)\mu(\Rd) = n\tan\left(\frac\pi{2n}\right).\]
\end{cor}
\begin{proof}
	We have 
	\[\frac1{\mu(\Rd))}\int|x-y|\,\dif \mu(x) = \frac1n\sum_{j=1}^n |1-\e^{2\pi\ii j/n}|=\frac2n\sum_{j=1}^n \sin(\pi j/n) =\frac2n\cot\left(\frac\pi{2n}\right),\] and the result follows from \cref{prop:lambda1spheresymmetric}.
\end{proof}
\begin{cor}[Vertices of the cross-polytope]
	\label{cor:crosspolytope}Consider the measure on $\mathbf{R}^{d}$
	given by
	\[
		\mu=\sum_{i=1}^{d}[\delta_{\mathbf{e}_{i}}+\delta_{-\mathbf{e}_{i}}].
	\]
	Then
	\[
		\lambda_{1}(\mu)\mu(\mathbf{R}^{d})=\frac{2d}{(d-1)\sqrt{2}+1}
	\]
	and in particular
	\[
		\lim_{d\to\infty}\lambda_{1}(\mu)\mu(\mathbf{R}^{d})=\sqrt{2}.
	\]
\end{cor}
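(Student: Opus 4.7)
The plan is to apply \cref{prop:lambda1spheresymmetric} with the group $G$ equal to the hyperoctahedral group, i.e., the group of signed permutation matrices acting on $\mathbf{R}^d$. The support of $\mu$ consists of $2d \ge 2$ points lying on $S^{d-1}$, and $G$ obviously preserves $\mu$ (it permutes the $\pm\mathbf{e}_i$ among themselves) and acts transitively on $\supp\mu$. The main hypothesis to check is the stabilizer condition: given $x = \mathbf{e}_i \in \supp\mu$ (the case $x = -\mathbf{e}_i$ is identical) and $y \in S^{d-1}\setminus\{\mathbf{e}_i,-\mathbf{e}_i\}$, we need some $g \in G$ fixing $\mathbf{e}_i$ but moving $y$. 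Since $y \neq \pm \mathbf{e}_i$, there is some coordinate $j \neq i$ with $y_j \neq 0$, and the element of $G$ that negates the $j$-th coordinate and fixes every other coordinate does the job.

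Once the hypotheses are verified, \cref{prop:lambda1spheresymmetric} gives
\[
\lambda_1(\mu) \;=\; \frac{2}{\int |x - \mathbf{e}_1|\,\dif\mu(x)}.
\]
The remaining step is a direct calculation of the integral. Splitting the sum defining $\mu$, the $i=1$ contribution is $|\mathbf{e}_1-\mathbf{e}_1| + |-\mathbf{e}_1-\mathbf{e}_1| = 2$, while for each $i \neq 1$ both $|\mathbf{e}_i - \mathbf{e}_1|$ and $|-\mathbf{e}_i-\mathbf{e}_1|$ equal $\sqrt{2}$, contributing $2\sqrt{2}$ per index. Therefore
\[
\int |x-\mathbf{e}_1|\,\dif\mu(x) \;=\; 2 + 2(d-1)\sqrt{2},
\]
and combining with $\mu(\mathbf{R}^d) = 2d$ yields
\[
\lambda_1(\mu)\,\mu(\mathbf{R}^d) \;=\; \frac{2\cdot 2d}{2 + 2(d-1)\sqrt{2}} \;=\; \frac{2d}{(d-1)\sqrt{2}+1},
\]
exactly as claimed. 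The limit $\sqrt{2}$ as $d \to \infty$ is immediate.

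There is no real obstacle here beyond bookkeeping: the symmetry is maximal and all hypotheses of \cref{prop:lambda1spheresymmetric} are essentially built into the definition of $\mu$. The only subtlety is to ensure that the pointwise stabilizer condition is formulated correctly (excluding the antipode $-\mathbf{e}_i$, which is forced since every element of $G$ fixing $\mathbf{e}_i$ also fixes $-\mathbf{e}_i$), and this is already accounted for in the statement of \cref{prop:lambda1spheresymmetric}.
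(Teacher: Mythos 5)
Your proof is correct and follows the same route as the paper: apply \cref{prop:lambda1spheresymmetric} and compute $\int|\mathbf{e}_1-x|\,\dif\mu(x)=2(d-1)\sqrt{2}+2$. Your explicit verification of the hypotheses via the signed-permutation group is a detail the paper leaves implicit, but the argument is otherwise identical.
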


\begin{proof}
	We have
	\[
		\int|\mathbf{e}_{1}-x|\,\dif\mu(x)=2(d-1)\sqrt{2}+2
	\]
	and the result follows from \cref{prop:lambda1spheresymmetric}.
\end{proof}
\begin{prop}[$d$-ball]
	\label{p.ball}
	Let $\gamma_d$ be as defined in \eqref{e.def.gamma}, and $\mu$ be a uniform measure on the unit ball $B_1(0) \subset \Rd$. Then
	\begin{equation}
		\label{e.ball}
		\gamma_d \le \lambda_1(\mu)  \mu(\Rd) \le 2^{1-\frac 1 d}.
	\end{equation}
\end{prop}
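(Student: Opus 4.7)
The plan is to prove the two bounds separately: the lower bound by applying \cref{thm:lambda1} with the test function $\phi(x)=x$, and the upper bound by a contradiction argument using \cref{prop:ballsdontintersect}. Throughout, let $V_d := \mu(\mathbf{R}^d)$ denote the volume of the unit $d$-ball.

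For the lower bound, I will use the characterization from \cref{thm:lambda1}. Since the centroid of $\mu$ is the origin, every admissible $q\in\mathcal{Q}(\mu)$ satisfies $V_d x=\int q(x,z)\,\dif\mu(z)$ by \eqref{eq:qcentroidcond}. Taking the dot product with $x$, integrating against $\dif\mu(x)$, and using the antisymmetry of $q$ to symmetrize the resulting double integral, one obtains
\begin{equation*}
V_d \int |x|^2 \,\dif\mu(x) = \tfrac{1}{2}\iint (x-z)\cdot q(x,z)\,\dif\mu(x)\,\dif\mu(z) \le \tfrac{1}{2}\|q\|_\infty \iint |x-z|\,\dif\mu(x)\,\dif\mu(z).
\end{equation*}
Minimizing over $q$ and using the standard identity $\int_{B_1}|x|^2\,\dif x=V_d\, d/(d+2)$ gives $\lambda_1(\mu)V_d\ge 2d/((d+2)m_d)$, where $m_d$ is the classical expected pairwise distance between two uniform samples in $B_1(0)$. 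The remaining step is an algebraic verification that $2d/((d+2)m_d)=\gamma_d$, using the known closed-form expression for $m_d$ (which takes distinct forms in even and odd dimension); this matches the small-$d$ values $\gamma_1=1$, $\gamma_2=45\pi/128$, $\gamma_3=7/6$ recorded in the introduction.

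For the upper bound, I argue by contradiction. Set $\lambda:=2^{1-1/d}/V_d$ and suppose $\mu$ is not $\lambda$-cohesive, so the minimizer $u_{\mu,\lambda}$ has at least two level sets $V_1,V_2,\ldots$ of positive $\mu$-measure, with $\sum_i \mu(V_i)=V_d$. By \cref{prop:ballsdontintersect}, each $V_i$ is contained in the closed ball $\bar B_{r_i}(c_i)$, where $r_i:=\lambda\mu(V_i)$ and $c_i$ is the centroid of $V_i$, and pairwise $|c_i-c_j|>r_i+r_j$. The ball containment gives the volume estimate $\mu(V_i)\le V_d r_i^d$, which upon substituting $r_i=\lambda\mu(V_i)$ and using $\lambda V_d=2^{1-1/d}$ rearranges (for $d\ge2$) to $\mu(V_i)\ge V_d/2$. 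Combined with the total-mass constraint $\sum_i\mu(V_i)=V_d$, this admits at most two level sets, and in the case of exactly two each must have mass $V_d/2$, so $r_1=r_2=2^{-1/d}$.

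Saturation of the volume estimate further forces each $\bar B_{r_i}(c_i)$ to lie inside $B_1(0)$, whence $|c_i|\le 1-2^{-1/d}$. The triangle inequality then yields $|c_1-c_2|\le 2(1-2^{-1/d})=2-2^{1-1/d}$, contradicting the lower bound $|c_1-c_2|>r_1+r_2=2^{1-1/d}$ because $2^{1-1/d}\ge 1$ for every $d\ge 1$. This shows $\mu$ is $\lambda$-cohesive, proving the upper bound. The borderline case $d=1$ is handled analogously: the volume estimate becomes an equality, forcing each $V_i$ to be an exact closed interval, and the centroid-separation inequality then contradicts the fact that the intervals must cover $[-1,1]$. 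The main obstacle is the algebraic identity $2d/((d+2)m_d)=\gamma_d$; apart from that, the argument is a direct application of the machinery developed in \cref{sec:KKT,sec:splitcondition}.
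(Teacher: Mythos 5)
Your lower bound is correct and is essentially the paper's argument in dual form: pairing \eqref{eq:qcentroidcond} with $x$, antisymmetrizing in $(x,z)$, and applying the Cauchy--Schwarz inequality yields for every admissible $q$ the bound $\|q\|_\infty\ge 2\mu(\R^d)\int|x|^2\,\dif\mu\big/\iint|x-y|\,\dif\mu\,\dif\mu$, which is exactly the ratio the paper obtains by testing $J_{\mu,\lambda}$ on the dilations $x\mapsto ax$. The closing identity $2d/((d+2)m_d)=\gamma_d$ is precisely how \eqref{e.def.gamma} relates to the known formula for the mean pairwise distance in the ball, so that step is fine.

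The upper bound, however, has a genuine gap. You assume that if $\mu$ is not $\lambda$-cohesive then its mass decomposes into level sets $V_1,V_2,\ldots$ of \emph{positive} measure with $\sum_i\mu(V_i)=\mu(\R^d)$. A priori, a positive fraction of the mass (possibly all of it) can sit in clusters of $\mu$-measure zero; this is exactly what happens when a measure is $\lambda$-shattered, and nothing you invoke excludes this for the uniform measure on the ball at $\lambda=2^{1-1/d}/\mu(\R^d)$. For a null cluster, \eqref{e.clusterball} only says $V_{u_{\mu,\lambda},x}=\{x\}$ and \eqref{eq:centroidsfarapart} degenerates to $|x-z|>0$, so your volume estimate $\mu(V_i)\ge\mu(\R^d)/2$ and the subsequent counting and saturation arguments say nothing about that part of the mass. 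Concretely, the cases ``one positive-measure cluster together with a positive-measure union of singleton clusters'' and ``fully shattered'' are not addressed, and ruling them out is not routine: for instance, \cref{thm:chiquet-characterization} would do the job if every point lay in a positive-measure $\lambda$-cohesive set, but by scaling, $\lambda_1$ of a sub-ball of radius $\rho$ equals $\rho^{1-d}$ times that of the unit ball, which is worse for $d\ge2$, so small sub-balls are not $\lambda$-cohesive. The paper sidesteps all of this by exhibiting an explicit antisymmetric certificate $q_1(x,y)=\pm\alpha\sgn(x)$ (with the sign determined by whether $|x|>|y|$) and invoking \cref{prop:patchupthedifference} with $\alpha=2^{(d-1)/d}/d$, which certifies cohesiveness directly without any structural analysis of a hypothetical non-constant minimizer. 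Unless you can separately prove that no positive $\mu$-mass lies in null clusters at this $\lambda$, you should replace the contradiction argument by such a certificate.
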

\begin{proof}
	Similarly to the proof of \cref{prop:lambda1spheresymmetric}, we start by computing, for every $a \ge 0$,
	\begin{equation*}  %
		J_{\mu,\lambda}(x\mapsto ax)  = (1-a)^2 \int |x|^2 \, \d \mu(x)+ \lambda a \iint |x-y| \, \d \mu(x) \, \d \mu(y).
	\end{equation*}
	If the ball is $\lambda$-cohesive, then the quantity above must be minimal when $a = 0$. In such a case, we must have
	\begin{equation*}  %
		\lambda \ge \frac{2 \int |x|^2 \, \d \mu(x)}{\iint |x-y| \, \d \mu(x) \, \d \mu(y)}.
	\end{equation*}
	In other words, we have
	\begin{equation}
		\label{e.lowerbound.lambda1}
		\lambda_1(\mu) \ge \frac{2 \int |x|^2 \, \d \mu(x)}{\iint |x-y| \, \d \mu(x) \, \d \mu(y)}.
	\end{equation}
	The numerator in \eqref{e.lowerbound.lambda1} is
	\begin{equation}
		\label{e.lb.numerator}
		\mu(\Rd) \frac{\int_0^1 r^{2+d-1} \, \d r}{\int_0^1 r^{d-1} \, \d r} = \mu(\Rd) \frac{d}{d+2}.
	\end{equation}
	Denoting
	\begin{equation*}  %
		\beta_d \coloneqq \fint \fint |x-y| \, \d \mu(x) \, \d \mu(y),
	\end{equation*}
	we have that
	\begin{equation*}
		\beta_d = \frac{2d}{2d+1} \cdot
		\begin{cases}
			\frac{2^{3d+1}((d/2)!)^2 d!}{(d+1)(2d)!\pi}      & \mbox{ if  $d$ is even}, \\
			\frac{2^{d+1}(d!)^3}{(d+1)(( (d-1)/2 )!)^2(2d)!} & \mbox{ if  $d$ is odd}.
		\end{cases}
	\end{equation*}
	For $d = 2$, the proof of this identity can be found in \cite{Du97}, \citet[Exercise~4.13.4]{GSbook}, or \citet[Section~4.2]{Sabook}. In higher dimension, the computation  is only sketched in \cite{Du97}, but does not pose additional difficulties (the high-dimensional integral splits into a product of Wallis integrals). One can verify that, for every $d \ge 1$,
	\begin{equation*}  %
		\frac{\beta_{d+2}}{\beta_d} = \frac{(2d+2)(2d+4)^3}{2d(2d+3)(2d+5)(2d+6)} = 1 + \frac{9d^2 + 35d + 32}{d(2d+3)(2d+5)(d+3)}.
	\end{equation*}
	Combining this with \eqref{e.lowerbound.lambda1} and \eqref{e.lb.numerator}, we obtain the first inequality in \eqref{e.ball}.

	For the second inequality in \eqref{e.ball}, if $d=1$ then the inequality follows from \cref{prop:balld1}, so assume that $d\ge 2$.
	Fix $\alpha\in\mathbf{R}$ to be chosen later and set
	\[
		q_{1}(x,y)=\begin{cases}
			\alpha\sgn(x)  & \text{ if } |x|>|y|; \\
			-\alpha\sgn(y) & \text{ if } |x|<|y|; \\
			0              & \text{ if } |x|=|y|.
		\end{cases}
	\]
	Then we have
	\[
		\fint q_{1}(x,y)\,\dif\mu(y)=\alpha\frac{\mu\{y:|y|<|x|\}}{\mu(B_{1}(0))}\sgn(x)=\alpha|x|^{d}\sgn(x)=\alpha|x|^{d-1}x.
	\]
	Let $x,y\in B_{1}(0)$ with $|x|>|y|$. We have
	\begin{align*}
		 & \left|q_{1}(x,y)+x-y-\fint q_{1}(x,z)\,\dif\mu(z)+\fint q_{1}(y,z)\,\dif\mu(z)\right|                        \\
		 & \qquad=\left|\alpha\sgn(x)+x-y-\alpha|x|^{d-1}x+\alpha |y|^{d-1}y\right|                                     \\
		 & \qquad=\left|\sgn(x)[\alpha+|x|-\alpha|x|^{d}]-\sgn(y)[|y|-\alpha|y|^{d}]\right|                             \\
		 & \qquad\le\alpha + \left||x|-\alpha|x|^{d}\right|+\left||y|-\alpha|y|^{d}\right|                              \\
		 & \qquad\le\alpha+2\left(\frac{1}{(\alpha d)^{\frac{1}{d-1}}}-\frac{\alpha}{(\alpha d)^{\frac{d}{d-1}}}\right) \\
		 & \qquad=\alpha+\frac{2}{(\alpha d)^{\frac{1}{d-1}}}\left(1-\frac{1}{d}\right).
	\end{align*}
	Now taking $\alpha = 2^{\frac{d-1}{d}}/d$,
	we get
	\[
		\left|q_{1}(x,y)+x-y-\fint q_{1}(x,z)\,\dif\mu(z)+\fint q_{1}(y,z)\,\dif\mu(z)\right|\le\frac{2^{\frac{d-1}{d}}}{d}+2^{1-1/d}\left(1-\frac{1}{d}\right)=2^{1-1/d}.
	\]
	Thus by \cref{prop:patchupthedifference} we have
	\[
		\lambda_{1}(B_{1}(0))\le\mu(B_{1}(0))^{-1}2^{1-1/d}.\qedhere
	\]
\end{proof}

\begin{prop}[Power-law weighted ball]
	\label{prop:reweightedball}Let $R\in(0,\infty)$ and $\mu$ be the measure given by
	\[
		\dif\mu(x)=|x|^{-(d-1)}\mathbf{1}\{|x|\le R\}\dif x.
	\]
	Then
	\[
		\lambda_{1}(\mu)=\frac{R(\mu)}{\mu(\mathbf{R}^{d})}=\frac{2}{\alpha_{d-1}},
	\]
	where $\alpha_{d-1}=\frac{2\pi^{d/2}}{\Gamma(d/2)}$ is the area of
	the unit $(d-1)$-sphere.
\end{prop}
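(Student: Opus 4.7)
The plan is to obtain the lower bound from \cref{prop:lambda1lb} and to match it with an upper bound produced via \cref{thm:lambda1} by exhibiting an explicit element $q\in\mathcal{Q}(\mu)$ whose $L^{\infty}$ norm equals $R$. The power-law weight $|x|^{-(d-1)}$ is tailored precisely to make such a construction clean.

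First I would dispatch the elementary geometric computations. In spherical coordinates, the weight $|x|^{-(d-1)}$ cancels the Jacobian $r^{d-1}$, so the radial marginal of $\mu$ is the uniform (Lebesgue) measure on $[0,R]$ scaled by $\alpha_{d-1}$. This yields $\mu(\mathbf{R}^{d}) = \alpha_{d-1} R$, and radial symmetry gives $\mathcal{C}_{\mu}(\mathbf{R}^{d}) = 0$ and $R(\mu) = R$, so both identities in the proposition reduce to the single claim $\lambda_{1}(\mu) = R/\mu(\mathbf{R}^{d})$.

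For the upper bound, my candidate kernel is
\[
q(x,y) := \begin{cases} R\,\sgn(x) & \text{if } |y|<|x|, \\ -R\,\sgn(y) & \text{if } |y|>|x|, \\ 0 & \text{if } |y|=|x|, \end{cases}
\]
with $\sgn$ as in \cref{e.def.sgn}; the coincidence set $\{|x|=|y|\}$ is $\mu^{\otimes 2}$-null since $\mu$ is diffuse. Antisymmetry is built in, and $\|q\|_{\infty}\le R$ by inspection. For the centroid condition \cref{eq:qcentroidcond}, radial symmetry of $\mu$ makes the outer contribution $\int_{|y|>|x|}\sgn(y)\,\dif\mu(y)$ vanish, leaving
\[
\int q(x,y)\,\dif\mu(y) = R\,\sgn(x)\,\mu(B_{|x|}(0)) = R\,\sgn(x)\cdot|x|\,\alpha_{d-1} = \mu(\mathbf{R}^{d})\,x,
\]
where the second equality uses once more that the radial marginal of $\mu$ is a constant multiple of uniform. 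Dividing by $\mu(\mathbf{R}^{d})$ gives \cref{eq:qcentroidcond}, so $q\in\mathcal{Q}(\mu)$, and \cref{thm:lambda1} then yields $\lambda_{1}(\mu)\le R/\mu(\mathbf{R}^{d})$, matching the lower bound.

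The main conceptual step is recognising that the weight $|x|^{-(d-1)}$ is engineered precisely so that $\mu(B_{\rho}(0))=\alpha_{d-1}\rho$ grows linearly in $\rho$. This forces the natural guess of a radially piecewise-constant $q$ of magnitude $R$ whose direction is dictated by whichever of $x,y$ lies farther from the origin, with antisymmetry fixing the signs. Once this structural guess is made, the verification reduces to a short radial computation; the analogous construction would fail for a generic radial profile, which explains why equality in \cref{eq:lambda1lb-1} holds for this example but not for the unweighted ball of \cref{p.ball}.
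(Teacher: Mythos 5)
Your proof of the main identity is correct and follows essentially the same route as the paper: the lower bound from \cref{prop:lambda1lb}, and the upper bound from \cref{thm:lambda1} applied to exactly the same kernel $q(x,y)=R\,\sgn(x)$ when $|x|>|y|$, $-R\,\sgn(y)$ when $|x|<|y|$, zero on the ($\mu^{\otimes2}$-null) coincidence set, with \cref{eq:qcentroidcond} verified by the same radial computation that $\mu(B_\rho(0))$ grows linearly in $\rho$.

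One point to flag concerning the constant. Your normalization $\mu(B_\rho(0))=\alpha_{d-1}\rho$, hence $\mu(\mathbf{R}^d)=\alpha_{d-1}R$, is the correct one (in spherical coordinates the weight exactly cancels the Jacobian, and the area of $S^{d-1}$ is $\alpha_{d-1}$). But then $R(\mu)/\mu(\mathbf{R}^d)=1/\alpha_{d-1}$, whereas the proposition asserts the value $2/\alpha_{d-1}$; the paper's own proof obtains that value by writing $\mu(B_s(0))=\tfrac12\alpha_{d-1}s$, i.e.\ by taking the area of $S^{d-1}$ to be $\tfrac12\alpha_{d-1}$, which appears to be a factor-of-two slip that does not affect the structural identity $\lambda_1(\mu)=R(\mu)/\mu(\mathbf{R}^d)$ (the normalization cancels in the ratio $\mu(B_{|x|}(0))/\mu(B_R(0))$). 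So your statement that ``both identities in the proposition reduce to the single claim $\lambda_1(\mu)=R/\mu(\mathbf{R}^d)$'' is not quite accurate: with your (correct) computation the second stated equality would read $1/\alpha_{d-1}$, and you should note that discrepancy explicitly rather than fold it into the first identity.
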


\begin{proof}
	We first note that, for any $s\in[0,R]$, we have using spherical
	coordinates that
	\begin{align*}
		\mu(B_{s}(0)) & =\int_{0}^{s}\int_{S^{d-1}}\dif\mathcal{H}^{d-1}(\boldsymbol{\theta})\,\dif r=\frac{1}{2}\alpha_{d-1}s,
	\end{align*}
	Define
	\[
		q(x,y)=\begin{cases}
			R\sgn(x)  & \text{ if } |x|>|y|; \\
			-R\sgn(y) & \text{ if } |x|<|y|; \\
			0         & |x|=1.
		\end{cases}
	\]
	Then $q$ is evidently antisymmetric and $\|q\|_{\infty}=R$, and
	we have, using spherical coordinates and symmetry, that
	\begin{align*}
		\fint_{\mathbf{R}^{2}}q(x,y)\,\dif\mu(y) & =\frac{1}{\mu(B_{R}(0))}\int_{0}^{R}\int_{S^{d-1}}q(x,r\boldsymbol{\theta})\,\dif\mathcal{H}^{d-1}(\boldsymbol{\theta})\,\dif r=R\sgn(x)\frac{\mu(B_{|x|}(0))}{\mu(B_{R}(0))}=x.
	\end{align*}
	By \cref{thm:lambda1} this implies that
	\[
		\lambda_{1}\le\frac{R}{\frac{1}{2}\alpha_{d-1}R}=\frac{2}{\alpha_{d-1}}.
	\]
	On the other hand, we have by \cref{prop:lambda1lb} that
	\[
		\lambda_{1}\ge\frac{R}{\mu(\mathbf{R}^{d})}=\frac{2}{\alpha_{d-1}}.\qedhere
	\]
\end{proof}

\section{Numerical experiments}\label{sec:experiments}
In this section we supplement our theoretical results with some numerical experiments; see also \cref{fig:lambda-effect-explanation,fig:stochastic-ball-model-simple}. The code is available at 
\begin{center}
\url{https://github.com/ajdunlap/son-clustering-experiments}. 
\end{center}
Our experiments were performed using the algorithm of \cite{JV20}. This algorithm provides a certificate that the ouput clustering is correct. When~$\lambda$ is very close to a value at which the number of clusters changes, limitations on computer time and numerical accuracy make it difficult to perform the calculations to sufficient accuracy to obtain the certificate. In particular, for situations such as that described by Theorem~\ref{t.stoch.ball}, the SON clustering algorithm becomes numerically very challenging to resolve for $\lambda$ close to $\lambda_c$, while the clustering structures that are produced for other values of $\lambda$ are not the expected partition into two parts. This further clarifies how the SON algorithm fails to resolve this clustering problem successfully in practice. 
 Further work would be required to numerically probe the behavior of the algorithm very close to these critical values of $\lambda$.

\subsection{Polygons} We begin with a case in which we can theoretically compute everything exactly. Fix some integer $n$ and let $\mu$ be a probability measure given by a Dirac mass at each vertex of two regular $n$-gons (each inscribed in a circle of radius $1$) whose centers are separated by a distance $2r$. Our clustering characterization \cref{thm:splitcondition}, combined with \cref{prop:twopoints,cor:n-gon}, tell us that sum-of-norms clustering makes exactly one cluster from each $n$-gon exactly when $2n\tan\left(\frac\pi{2n}\right) < \lambda < 2r$. We test this numerically with $n=8$ and $r=1.7$.  In this case, $2n\tan\left(\frac\pi{2n}\right) \simeq 3.18$. We perform simulations with $\lambda=3.1,3.3,3.5$ (noting that $3.1< 2n\tan\left(\frac\pi{2n}\right)<3.3<2r<3.5$) and show the results in \cref{fig:two-octagons}. We see that our theoretical results are matched by the experiments.

\begin{figure}
    \centering
    \begin{subfigure}[t]{2.9in}
        \centering
        \input{plots/two-octagons-3.10.tex}
        \caption{$\lambda = 3.1$. Each point is in its own cluster.}
    \end{subfigure}
    \hfill
    \begin{subfigure}[t]{2.9in}
        \centering
        \input{plots/two-octagons-3.30.tex}
        \caption{$\lambda = 3.3$. Two clusters.}
    \end{subfigure}
    \hfill
    \begin{subfigure}[t]{2.9in}
        \centering
        \input{plots/two-octagons-3.50.tex}
        \caption{$\lambda = 3.5$. One cluster.}
    \end{subfigure}
    \hfill
    \caption{\label{fig:two-octagons}Clustering results for the vertices of two octagons. Vertices assigned to the same cluster are drawn in the same color.}
\end{figure}

\subsection{\texorpdfstring{$\lambda_1$}{λ₁} for a ball}
\cref{p.ball} does not precisely determine $\lambda_1(\mu)$ where $\mu$ is the indicator function of the unit ball. Here we perform a numerical experiment to estimate $\lambda_1(\mu)$ in dimension $d=2$. We approximate the interior of the ball by the set of all points on a rectangular lattice with spacing $\delta$ lying inside the ball, i.e. $\{x\in\delta\Z^2\mid |x|\le 1\}$, and compute the number of clusters for varying choices of $\lambda$. The results are shown in \cref{fig:find-lambda1}. In view of \cref{cor:dsphere,eq:gammas,prop:Winfty-perturbations} below, we know that the limit as $\delta\searrow 0$ of $\lambda_1$ is between $1.104\ldots$ and $1.414\ldots$. The results of \cref{fig:find-lambda1} are roughly consistent with this, and suggest that the true limit is closer to the lower end of the theoretically proved range. The numerical results also suggest that $\lambda_1$ and $\lambda_*$ may be equal for the ball, which has not been studied theoretically, and thus is an interesting conjecture.

In \cref{fig:find-lambda1}, the scheme of \cite{JV20} is again used to compute the clusterings. When $\lambda$ is close to a value at which the number of clusters changes, the certification procedure of \cite{JV20} may fail even when the duality gap in the clustering algorithm is close to machine precision. This is the reason for the missing values in the figure. Using a more sophisticated algorithm to more precisely estimate the values of $\lambda_1$ and $\lambda_*$ for the ball is an interesting topic for future work.

\begin{figure}
    \vspace{-26pt}
    \foreach\mydelta in {0.1150,0.1075,0.1000,0.0925,0.0850,0.0775,0.0700,0.0625}{
        \begin{subfigure}[t]{2.9in}
            \centering
            \vskip 0.33in
            \input{plots/find-lambda1-\mydelta.tex}
        \caption{$\delta=\mydelta$}
    \end{subfigure}
    \hfill
    }
    \caption{\label{fig:find-lambda1}The number of clusters produced by sum-of-norms clustering run on the measure $\mu$ given by the uniform distribution on $\{x\in\delta\Z^2\mid |x|\le 1\}$, for varying choices of $\lambda$ and $\delta$. Missing values correspond to failures to certify the clustering using the procedure of \cite{JV20}.}
\end{figure}

\section{KKT characterization of the minimizer\label{sec:KKT}}

Recall that, for convenience, we assume throughout the paper that the measure $\mu$ is finite (meaning that $\mu(\Rd) < \infty$) and has compact support.
We start by justifying the existence and uniqueness of a minimizer for $J_{\mu,\lambda}$. It is clear (or see \cref{lem:Jcts} below) that the functional $J_{\mu,\lambda}$ is continuous on $L^{2}(\mu;\R^d)$. Moreover, $J_{\mu,\lambda}$ is uniformly convex: for every $u,v\in L^{2}(\mu;\R^d)$, we have
\begin{equation}
	\frac{1}{2}\left(J_{\mu,\lambda}(u+v)+J_{\mu,\lambda}(u-v)\right)-J_{\mu,\lambda}(u)\ge\int v^{2}\,\dif\mu.\label{eq:convexity}
\end{equation}
Finally, it is clear that the functional $J_{\mu,\lambda}$ is coercive,
i.e.\ that there exist  $c_1>0$ and $c_2\ge 0$ such that $J_{\mu,\lambda}(u)\ge  c_1 \|u\|_{L^2(\mu;\R^d)}^2-c_2$ for all $u\in L^2(\mu;\R^d)$.
Thus there exists a unique minimizer $u_{\mu,\lambda}\in L^{2}(\mu;\R^d)$
for $J_{\mu,\lambda}$ \cite[Section~8.2]{evansbook}.

The key to most of our analysis is the following theorem, which evaluates
the subdifferential of $J_{\mu,\lambda}$ and derives the resulting
KKT characterization of the minimizer. For each $z \in \Rd \setminus \{0\}$, we write
\begin{equation}
	\label{e.def.sgn}
	\sgn(z) \coloneqq \frac z {|z|}.
\end{equation}

\begin{thm}
	\label{thm:wcharacterization}Let $u\in L^{2}(\mu;\Rd)$. We have $u=u_{\mu,\lambda}$ ($\mu$-a.e.) if and only if
	there exists $w\in L^{\infty}(\mu^{\otimes2};\Rd)$ such that, for $\mu$-a.e.~$x,y\in\mathbf{R}^{d}$,
	we have
	\begin{equation}
		w(x,y)=-w(y,x),\label{eq:wantisymmetric}
	\end{equation}
	\begin{equation}
		u(x)\ne u(y)\implies w(x,y)=\sgn(u(x)-u(y)),\label{eq:sgnfordifferent}
	\end{equation}
	\begin{equation}
		|w(x,y)|\le1,\label{eq:wsup}
	\end{equation}
	and
	\begin{equation}
		x-u(x)=\lambda\int w(x,z)\,\dif\mu(z).\label{eq:xminmusux}
	\end{equation}
\end{thm}

\begin{proof}
	For every measure $\nu$ and functional $F\colon L^{2}(\nu;\Rd)\to\mathbf{R}$,
	we define the subdifferential \cite[Section~I.5]{ETbook} of $F$ at $u\in L^{2}(\nu;\Rd)$ by
	\begin{equation}
		\partial F(u)\coloneqq\left\{ p\in L^{2}(\nu;\Rd)\ :\ \forall v\in L^{2}(\nu;\Rd),\ F(u+v)\ge F(u)+\int p\cdot v\,\dif\nu\right\} .\label{eq:subdifferential}
	\end{equation}

	\emph{Step 1}. In this step, for every probability measure $\nu$
	on $\mathbf{R}^{d}$ with compact support, we identify the subdifferential
	of the functional
	\begin{equation}
		F(u)\coloneqq\int|u|\,\dif\nu\label{eq:Fdef}
	\end{equation}
	at $u\in L^{2}(\nu;\Rd)$ as
        \begin{equation}
		\label{eq:subdifferential-claim}
                \begin{aligned}
                &\partial F(u) =
		\\
                &\left\{ w\in L^{\infty}(\nu;\Rd)\ :\ \|w\|_{L^{\infty}}\le1\text{ and for \ensuremath{\nu}-a.e.~\ensuremath{x\in\mathbf{R}^{d}}, }u(x)\ne0\implies w(x)=\sgn(u(x))\right\} .
                \end{aligned}
            \end{equation}
	We denote by $K_{1}(u)$ the set on the right side of \cref{eq:subdifferential-claim}.
	Note that for every $a,b,w\in\mathbf{R}^{d}$, if $|w|\le1$
	satisfies
	\[
		a\ne0\implies w=\sgn(a),
	\]
	then
	\[
		|a+b|\ge|a|+w\cdot b.
	\]
	From this observation, we can verify that $K_{1}(u)\subseteq\partial F(u)$
	directly from \cref{eq:subdifferential} and \cref{eq:subdifferential-claim}.
	In order to show the opposite inclusion, we argue by contradiction
	and suppose that there exists $p\in\partial F(u)\setminus K_{1}(u)$.
	Since $K_{1}(u)$ is convex and closed in the Hilbert space $L^{2}(\nu; \Rd)$,
	the hyperplane separation theorem \cite[Section~I.1]{ETbook} guarantees the existence of a function
	$v\in L^{2}(\nu; \Rd)$ such that
	\begin{equation}
		\int p\cdot v\,\dif\nu>\sup_{w\in K_{1}(u)}\int w\cdot v\,\dif\nu.\label{eq:hyperplane-separation}
	\end{equation}
	Defining $w\in L^{\infty}(\nu; \Rd)$ by
	\[
		w(x)=\begin{cases}
			\sgn(u(x)) & \text{ if }u(x)\ne0; \\
			\sgn(v(x)) & \text{ otherwise},
		\end{cases}
	\]
	we have for every $\eps>0$ that
	\[
		\eps^{-1}(F(u+\eps v)-F(u))=\int w\cdot v\,\dif\nu+\int r_{\eps}\,\dif\nu,
	\]
	where
	\[
		r_{\eps}=\eps^{-1}(|u+\eps v|-|u|-\eps w\cdot v).
	\]
        At a point where $u=0$, we have $r_\eps = |v|-\sgn(v)\cdot v = 0$ by the definitions, while at a point where $u\ne 0$, we have $r_\eps = 0$ for $\eps$ sufficiently small by the local linearity of $|\cdot|$, so
	the function $r_{\eps}$ tends to $0$ $\nu$-a.e.\ as $\eps\downarrow0$. Moreover,
	by the Cauchy--Schwarz and triangle inequalities we see that
        $|r_{\eps}|\le \eps^{-1}(|u|+\eps|v|-|u|+\eps|w||v|)=2|v|$. It thus follows from dominated convergence that
	\[
		\lim_{\eps\downarrow0}\eps^{-1}(F(u+\eps v)-F(v))=\int w\cdot v\,\dif\nu.
	\]
	On the other hand, recalling that $p\in\partial F(u)$, we must also
	have for every $\eps>0$ that
	\[
		\eps^{-1}(F(u+\eps v)-F(u))\ge\int p\cdot v\,\dif\nu.
	\]
	But the two previous displays contradict \cref{eq:hyperplane-separation}.

	\emph{Step 2}. In this step, we show that the subdifferential of the functional
	\[
		G(u)\coloneqq\int|u(x)-u(y)|\,\dif\mu(x)\,\dif\mu(y)
	\]
	at $u\in L^{2}(\mu; \Rd)$ is given by
	\begin{equation}
		\partial G(u)=\left\{ x\mapsto2\int w(x,y)\,\dif\mu(y)\ :\ w\text{ satisfies \cref{eq:wantisymmetric}--\cref{eq:wsup}}\right\} .\label{eq:dGclaim}
	\end{equation}
	We denote by $K_{2}(u)$ the set on the right side of \cref{eq:dGclaim}.
	Similarly to the previous step, one can check that $K_{2}(u)\subseteq\partial G(u)$.
	To show the opposite inclusion, we first introduce some notation.
	For every $v\in L^{2}(\mu; \Rd)$, define $\tilde{v}\in L^{2}(\mu^{\otimes2}; \Rd)$
	by $\tilde{v}(x,y)=v(x)-v(y)$, and by $F$ we denote the functional
	\cref{eq:Fdef} with the measure $\nu=\mu^{\otimes2}$. By definition,
	we have for every $v\in L^{2}(\mu; \Rd)$ that $G(v)=F(\tilde{v})$. We
	fix $p\in\partial G(u)$, so that for every $v\in L^{2}(\mu; \Rd)$, we
	have
	\[
		F(\tilde{u}+\tilde{v})\ge F(\tilde{u})+\int p\cdot v\,\dif\mu.
	\]
	Since $G$ does not change if we add a constant to its argument, it
	must be that $\int p\,\dif\mu=0$. As a consequence, we can rewrite
	the last inequality as
	\[
		F(\tilde{u}+\tilde{v})\ge F(\tilde{u})+\frac{1}{2}\int\tilde{p}\cdot\tilde{v}\,\dif\mu^{\otimes2}.
	\]
	This implies that the sets
	\begin{equation}
		\left\{ \left(\tilde{v},F(\tilde{u})+\frac{1}{2}\int\tilde{p}\cdot\tilde{v}\,\dif\mu^{\otimes2}\right)\ :\ v\in L^{2}(\mu; \Rd)\right\} \label{eq:set1}
	\end{equation}
	and
	\begin{equation}
		\left\{ \left(v',\lambda\right)\ :\ v'\in L^{2}(\mu^{\otimes2}; \Rd)\text{ and }\lambda>F(\tilde{u}+v')\right\} \label{eq:set2}
	\end{equation}
	are disjoint and convex. Moreover, the set in \cref{eq:set2} is open
	in $L^{2}(\mu^{\otimes2}; \Rd)\times\mathbf{R}$. Therefore, there is a
	hyperplane that separates the two sets. This means that there exists
	a $w\in L^{2}(\mu^{\otimes2}; \Rd)$ such that for every $v\in L^{2}(\mu; \Rd)$
	and $v'\in L^{2}(\mu^{\otimes2}; \Rd)$, we have
	\[
		F(\tilde{u}+v')-\int w\cdot v'\,\dif\mu^{\otimes2}\ge F(\tilde{u})+\frac{1}{2}\int\tilde{p}\cdot\tilde{v}\,\dif\mu^{\otimes2}-\int w\cdot\tilde{v}\,\dif\mu^{\otimes2}.
	\]
	Taking $\tilde{v}=0$, we see that $w\in\partial F(\tilde{u})$, and
	taking $v'=0$, we see that
	\[
		\int(p(x)-p(y)-2w(x,y))\cdot(v(x)-v(y))\,\dif\mu(x)\,\dif\mu(y)=0
	\]
	for all $v\in L^{2}(\mu; \Rd)$. Recalling that $\int p\,\dif\mu=0$, we
	obtain that, for $\mu$-a.e.~$x\in\mathbf{R}^{d}$,
	\[
		p(x)=\int(w(x,y)-w(y,x))\,\dif\mu(y).
	\]
	Since $w\in\partial F(\tilde{\mu})$, the result of Step 1 gives us
	that $\|w\|_{L^{\infty}}\le1$ and, for $\mu$-a.e.~$x,y\in\mathbf{R}^{d}$,
	\[
		u(x)\ne u(y)\implies w(x,y)=\sgn(u(x)-u(y)).
	\]
	We have thus completed the verification of the fact that $p\in K_{2}(u)$.

	\emph{Step 3.} It follows from the result of Step 2 that, for every
	$u\in L^{2}(\mu; \Rd)$, we have
	\[
		\partial J_{\mu,\lambda}(u)=\left\{ x\mapsto2(u(x)-x)+2\lambda\int w(x,y)\,\dif\mu(y)\ :\ w\text{ satisfies \cref{eq:wantisymmetric}--\cref{eq:wsup}}\right\} .
	\]
	In particular, since $J_{\mu,\lambda}$ is convex, a function $u\in L^{2}(\mu; \Rd)$
	is a minimizer of $J_{\mu,\lambda}$ if and only if $0\in\partial J_{\mu,\lambda}(u)$.
	Equivalently,
	\[
		J_{\mu,\lambda}(u)=\inf_{v\in L^{2}(\mu; \Rd)}J_{\mu,\lambda}(v)\iff\exists w\in L^{\infty}(\mu; \Rd)\text{ satisfying \cref{eq:wantisymmetric}--\cref{eq:xminmusux}}.
	\]
	This completes the proof of the theorem.
\end{proof}
From \cref{thm:wcharacterization}, we can prove \cref{thm:lambda1} as
a simple corollary.
\begin{proof}[Proof of \cref{thm:lambda1}.]
	By integrating \cref{eq:xminmusux} in $x$ with respect to the measure
	$\mu$, we see that $\mu$ is $\lambda$-cohesive if and only if the minimizer
	of $J_{\mu,\lambda}$ is given by $u(x)=\mathcal{C}_{\mu}(\mathbf{R}^{d})$,
	which happens if and only if there is a $w$ satisfying \cref{eq:wantisymmetric}
	and \cref{eq:wsup} such that
	\begin{equation}
		x-\mathcal{C}_{\mu}(\mathbf{R}^{d})=\lambda\int w(x,y)\,\dif\mu(y).\label{eq:constantminimizer}
	\end{equation}
	Taking $q\coloneqq\mu(\mathbf{R}^{d})\lambda w$ completes the proof.
\end{proof}

We now state a couple of lemmas which we will use to prove \cref{prop:regular}. For every $V \subset \Rd$, we write $V^{\mathrm{c}} \coloneqq \Rd \setminus V$ to denote the complement of $V$.

\begin{lem}
\label{lem:regularity-importantbit}
There is a Borel set $A\subset\R^d$ such that $\mu(\R^d\setminus A)=0$ and, for $\mu$-a.e.~$x$, we have that $V_{u_{\mu,\lambda},x}\cap A$ is $\mu$-regular and
	\begin{equation}
		\mathcal{C}_{\mu}(V_{u_{\mu,\lambda},x}\cap A)-u_{\mu,\lambda}(x)=\lambda\int_{V_{u_{\mu,\lambda},x}^{\mathrm{c}}}\sgn(u_{\mu,\lambda}(x)-u_{\mu,\lambda}(y))\,\dif\mu(y).\label{eq:averagesplit-1}
	\end{equation}
	In particular, $\mathcal{E}_{\mu,u_{\mu,\lambda}}(x)\coloneqq \mathcal{C}_{\mu}(V_{u_{\mu,\lambda},x}\cap A)$ (as in \cref{defn:regular}) is well-defined as an element of $L^\infty(\mu;\R^d)$, independently of the choice of $A$ (up to a $\mu$-null modification).
\end{lem}

\begin{proof}
	For typographical convenience, we write $u=u_{\mu,\lambda}$.
	Define
	\[
		\mathcal{E}(x)\coloneqq u(x)+\int_{V_{u,x}^\mathrm{c}}\sgn(u(x)-u(y))\,\dif\mu(y).
	\]
	Let $A \coloneqq  \{x\in\R^d\mid \mu(V_{u,x})>0\text{ or }\mathcal{E}(x)= x\}$,
	and $w$ be as in the statement of \cref{thm:wcharacterization}.
	Using~\cref{eq:sgnfordifferent}, we can rewrite \cref{eq:xminmusux}
	as, for $\mu$-a.e.~$x$,
	\begin{equation}
		x-u(x)=\lambda\int_{V_{u,x}}w(x,y)\,\dif\mu(y)+\lambda\int_{V_{u,x}^{\mathrm{c}}}\sgn(u(x)-u(y))\,\dif\mu(y).\label{eq:splitx-1}
	\end{equation}
	Since $\mathcal{E}$ is constant on each $V_{u,x}$ by definition, if $x\in A$ and $\mu(V_{u,x})=0$, then $V_{u,x}\cap A=\{x\}$ and thus \cref{eq:averagesplit-1} holds. Moreover, \cref{eq:splitx-1} implies that $\mu(\R^d\setminus A)=0$. On the other hand, if $\mu(V_{u,x})>0$, then averaging \cref{eq:splitx-1} over $x\sim \mu|_{V_{u,x}}$, we have
	\begin{align}
		\mathcal{C}_{\mu}(V_{u,x})-u(x) & =\frac{\lambda}{\mu(V_{u,x})}\iint_{V_{u,x}^{2}}w(z,y)\,\dif\mu(y)\,\dif\mu(z)\nonumber                                      \\
		                                & \qquad+\frac{\lambda}{\mu(V_{u,x})}\int_{V_{u,x}}\int_{V_{u,x}^{\mathrm{c}}}\sgn(u(z)-u(y))\,\dif\mu(y)\,\dif\mu(z)\nonumber \\
		                                & =\lambda\int_{V_{u,x}^{\mathrm{c}}}\sgn(u(x)-u(y))\,\dif\mu(y),\label{eq:averagesplit-1`}
	\end{align}
	with the second identity by \cref{eq:wantisymmetric} (to eliminate
	the first term) and the fact that $u(z)=x$ for all $z\in V_{u,x}$
	(to simplify the second term).
\end{proof}

Roughly speaking, the next lemma states that the vector formed by the centroids of two clusters and the vector formed by the values taken by the mapping $u$ on these clusters must be positively correlated. One could also say that the mapping sending each cluster centroid to the image under $u$ of any point in this cluster is a monotone operator.
\begin{lem}\label{lem:centersnotthesame}
	For $\mu$-a.e.~$x,z$ we have
	\begin{equation}
	\begin{aligned}
	\label{eq.centersnotthesame}
		(u_{\mu,\lambda}(x)&-u_{\mu,\lambda}(z))  \cdot\left(\mathcal{E}_{\mu,u_{\mu,\lambda}}(x)-\mathcal{E}_{\mu,u_{\mu,\lambda}}(z)\right)\\&\ge \lambda[\mu(V_{u_{\mu,\lambda},x})+\mu(V_{u_{\mu,\lambda},z})]|u_{\mu,\lambda}(x)-u_{\mu,\lambda}(z)|+|u_{\mu,\lambda}(x)-u_{\mu,\lambda}(z)|^2.
		\end{aligned}
	\end{equation}
\end{lem}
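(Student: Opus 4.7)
The plan is to apply \cref{lem:regularity-importantbit} at both $x$ and $z$, subtract, take the inner product with $u_{\mu,\lambda}(x) - u_{\mu,\lambda}(z)$, and control the resulting cross-integral using the monotonicity of the $\sgn$ map. Throughout, write $u = u_{\mu,\lambda}$, $\mathcal{E} = \mathcal{E}_{\mu, u}$, and $V_x = V_{u, x}$. Since both sides of the claimed inequality vanish when $u(x) = u(z)$, I may restrict to $(x, z)$ with $u(x) \neq u(z)$, in which case the level sets $V_x$ and $V_z$ are disjoint.

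From \cref{lem:regularity-importantbit} applied at $x$ and at $z$ (and Fubini to pair the $\mu$-a.e.\ statements), subtracting yields
\begin{equation*}
\mathcal{E}(x) - \mathcal{E}(z) - (u(x) - u(z)) = \lambda \int_{V_x^{\mathrm c}} \sgn(u(x) - u(y))\,\dif\mu(y) - \lambda \int_{V_z^{\mathrm c}} \sgn(u(z) - u(y))\,\dif\mu(y).
\end{equation*}
I would then decompose $V_x^{\mathrm c} = V_z \cup (V_x \cup V_z)^{\mathrm c}$ and $V_z^{\mathrm c} = V_x \cup (V_x \cup V_z)^{\mathrm c}$. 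Since $u \equiv u(z)$ on $V_z$, the $V_z$-part of the first integral equals $\mu(V_z)\sgn(u(x) - u(z))$; similarly, the $V_x$-part of the second integral contributes $-\mu(V_x)\sgn(u(x) - u(z))$ after the outer sign flip. This produces
\begin{equation*}
\mathcal{E}(x) - \mathcal{E}(z) - (u(x) - u(z)) = \lambda [\mu(V_x) + \mu(V_z)] \sgn(u(x) - u(z)) + \lambda \int_{(V_x \cup V_z)^{\mathrm c}} [\sgn(u(x) - u(y)) - \sgn(u(z) - u(y))]\,\dif\mu(y).
\end{equation*}

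Taking the inner product with $u(x) - u(z)$ and rearranging, the left-hand side becomes $(u(x) - u(z)) \cdot (\mathcal{E}(x) - \mathcal{E}(z)) - |u(x) - u(z)|^2$, while the first right-hand term becomes $\lambda[\mu(V_x) + \mu(V_z)]|u(x) - u(z)|$. The bound will follow once I check that the remaining integral is nonnegative, which reduces pointwise to the elementary monotonicity identity
\begin{equation*}
(a - b) \cdot (\sgn a - \sgn b) \ge 0 \qquad \text{for all } a, b \in \Rd \setminus \{0\},
\end{equation*}
itself immediate from $a \cdot \sgn b \le |a|$ and its symmetric counterpart. Applying this with $a = u(x) - u(y)$ and $b = u(z) - u(y)$---both nonzero for $y \notin V_x \cup V_z$---closes the estimate.

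The main obstacle is mostly bookkeeping: organizing the decomposition of $V_x^{\mathrm c}$ and $V_z^{\mathrm c}$ and handling the $\mu$-null exceptional sets cleanly so that the averaging identity from \cref{lem:regularity-importantbit} can be invoked simultaneously at $x$ and $z$. The analytical content (that identity together with the monotonicity of $\sgn$) is then completely standard, and no further cancellation or estimate is needed.
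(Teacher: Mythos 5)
Your proposal is correct and follows essentially the same route as the paper: apply \cref{lem:regularity-importantbit} at $x$ and $z$, decompose $V_x^{\mathrm c}$ and $V_z^{\mathrm c}$ to extract the $\lambda[\mu(V_x)+\mu(V_z)]\sgn(u(x)-u(z))$ term, dot with $u(x)-u(z)$, and discard the remaining integral by the monotonicity of $\sgn$ (the paper verifies the same pointwise inequality via Cauchy--Schwarz, which is equivalent to your $a\cdot\sgn b\le|a|$ argument). No gaps.
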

\begin{proof}
	For typographical convenience, let $u=u_{\mu,\lambda}$ and $\mathcal{E}=\mathcal{E}_{\mu,u_{\mu,\lambda}}$. By \cref{lem:regularity-importantbit},
	for $\mu$-a.e.~$x$
	we have
	\[
		\mathcal{E}(x)-u(x)=\lambda\int_{V_{u,x}^{\mathrm{c}}}\sgn(u(x)-u(y))\,\dif\mu(y).
	\]
	Therefore, we have for $\mu$-a.e.~$x,z$ that
	\begin{align*}
		\mathcal{E}(x)-\mathcal{E}(z) & =u(x)-u(z)+\lambda\int_{V_{u,x}^{\mathrm{c}}}\sgn(u(x)-u(y))\,\dif\mu(y)                                         \\
		                              & \qquad-\lambda\int_{V_{u,z}^{\mathrm{c}}}\sgn(u(z)-u(y))\,\dif\mu(y)                                             \\
		                              & =u(x)-u(z)+\lambda[\mu(V_{u,x})+\mu(V_{u,z})]\sgn(u(x)-u(z))                                                     \\
		                              & \qquad+\lambda\int_{(V_{u,z}\cup V_{u,z})^{\mathrm{c}}}\left[\sgn(u(x)-u(y))-\sgn(u(z)-u(y))\right]\,\dif\mu(y).
	\end{align*}
	Taking the dot product of each side with $u(x)-u(z)$, we obtain
	\begin{equation}
		\begin{aligned}(u(x)-u(z)) & \cdot\left(\mathcal{E}(x)-\mathcal{E}(z)\right)                                                                                  \\
                           & =|u(x)-u(z)|^{2}+\lambda[\mu(V_{u,x})+\mu(V_{u,z})]|u(x)-u(z)|                                                                   \\
                           & \quad+\lambda\int_{(V_{u,z}\cup V_{u,z})^{\mathrm{c}}}(u(x)-u(z))\cdot\left[\sgn(u(x)-u(z))-\sgn(u(z)-u(y))\right]\,\dif\mu(y).
		\end{aligned}
		\label{eq:takedotproducts}
	\end{equation}
	We note that for any vectors $a,b,c\in\mathbf{R}^{d}$, we have
	\begin{align*}
            (a-b)\cdot(\sgn(a-c)-&\sgn(b-c))  =\left((a-c)-(b-c)\right)\cdot\left(\frac{a-c}{|a-c|}-\frac{b-c}{|b-c|}\right) \\
		                                & =|a-c|+|b-c|-\left(\frac{1}{|a-c|}+\frac{1}{|b-c|}\right)(a-c)\cdot(b-c)       \\
		                                & \ge|a-c|+|b-c|-\left(\frac{1}{|a-c|}+\frac{1}{|b-c|}\right)|a-c||b-c| =0,
	\end{align*}
	by the Cauchy--Schwarz inequality. (If $a-c=0$ or $b-c=0$ then the inequality is still clear.) This means that the integral on
	the right side of \cref{eq:takedotproducts} is nonnegative, which implies \eqref{eq.centersnotthesame}.%
\end{proof}

\begin{proof}[Proof of \cref{prop:regular}]
	\cref{thm:wcharacterization} gives us a $w$ and a set $A\subset\R^d$ with $\mu(\R^d\setminus A)=0$ so that for all $x\in A$ such that $\mu(V_{u_{\mu,\lambda},x})=0$ we have
	\begin{align*}
		x-u_{\mu,\lambda}(x) & =\lambda\int_{V_{u_{\mu,\lambda},x}}w(x,z)\,\dif\mu(z)+\lambda\int_{V_{u_{\mu,\lambda},x}^\mathrm{c}}\sgn(u_{\mu,\lambda}(x)-u_{\mu,\lambda}(z))\,\dif\mu(z) \\
		                     & = \lambda\int\sgn(u_{\mu,\lambda}(x)-u_{\mu,\lambda}(z))\,\dif\mu(z).
	\end{align*}
	This implies that  for all $y\in V_{u_{\mu,\lambda},x}\cap A$ we must have
	\begin{align*}
		y & = u_{\mu,\lambda}(y)+\lambda\int\sgn(u_{\mu,\lambda}(y)-u_{\mu,\lambda}(z))\,\dif\mu(z)   \\
		  & =u_{\mu,\lambda}(x)+\lambda\int\sgn(u_{\mu,\lambda}(x)-u_{\mu,\lambda}(z))\,\dif\mu(z)=x.
	\end{align*}
	This proves the first condition in the definition of $\mu$-regularity. The second condition follows immediately from \cref{lem:centersnotthesame}.
\end{proof}

As a simple consequence of \cref{thm:lambda1}, we can prove the bound
\cref{eq:lambda1lb} mentioned in the introduction.
\begin{prop}
	\label{prop:lambda1lb}For any $\mu$ we have
	\begin{equation}
		\frac{R(\mu)}{\mu(\mathbf{R}^{d})}\le\lambda_{1}(\mu)\le\frac{\diam_{|\cdot|}(\supp\mu)}{\mu(\mathbf{R}^{d})}.\label{eq:lambda1lb-1}
	\end{equation}
\end{prop}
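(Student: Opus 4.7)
The plan is to deduce both bounds directly from the characterization in \cref{thm:lambda1}, which gives $\lambda_1(\mu) = \mu(\R^d)^{-1} \min_{q\in\mathcal{Q}(\mu)} \|q\|_\infty$. The lower bound will come from an $L^\infty$--$L^1$ inequality applied to the constraint \cref{eq:qcentroidcond}, and the upper bound will come from exhibiting an explicit admissible $q$.

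For the lower bound, let $q\in\mathcal{Q}(\mu)$ be arbitrary. From the constraint $x-\mathcal{C}_\mu(\R^d)=\fint q(x,z)\,\dif\mu(z)$, the triangle inequality (valid for the Bochner integral) gives, for $\mu$-a.e.\ $x$,
\[
|x-\mathcal{C}_\mu(\R^d)|\le\fint|q(x,z)|\,\dif\mu(z)\le\|q\|_{\infty}.
\]
Taking the essential supremum over $x$ yields $R(\mu)\le\|q\|_\infty$. Since $q$ was arbitrary, infimizing and then dividing by $\mu(\R^d)$ gives $R(\mu)/\mu(\R^d)\le\lambda_1(\mu)$.

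For the upper bound, I would simply check that the candidate $q(x,y):=x-y$ lies in $\mathcal{Q}(\mu)$. Antisymmetry \cref{eq:qantisym} is obvious; the constraint \cref{eq:qcentroidcond} is the tautology $\fint(x-z)\,\dif\mu(z) = x - \mathcal{C}_\mu(\R^d)$; and, since $\mu$ has compact support, $|q(x,y)|=|x-y|\le\diam(\supp\mu)$ for $\mu^{\otimes 2}$-a.e.\ $(x,y)$, so $\|q\|_\infty\le\diam(\supp\mu)$. Hence $\lambda_1(\mu)\le\|q\|_\infty/\mu(\R^d)\le\diam(\supp\mu)/\mu(\R^d)$.

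There is no real obstacle here: the entire content of the proposition is packaged in \cref{thm:lambda1}, and the only thing to do is to bound the $L^\infty$ norm of an element of $\mathcal{Q}(\mu)$ from below (by the displacement from the centroid) and to exhibit one admissible element whose $L^\infty$ norm is bounded above (by the diameter).
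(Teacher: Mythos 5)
Your proposal is correct and follows essentially the same argument as the paper: the lower bound by bounding $|x-\mathcal{C}_\mu(\mathbf{R}^d)|$ through the constraint \cref{eq:qcentroidcond} via the triangle inequality (the paper applies this to the minimizing $q$ rather than an arbitrary one and then infimizing, a cosmetic difference), and the upper bound by exhibiting the admissible candidate $q(x,y)=x-y$, exactly as in the paper's proof.
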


\begin{proof}
	First we show the lower bound. From \cref{thm:lambda1},
	we have a $q\colon\mathbf{R}^{d}\times\mathbf{R}^{d}\to\mathbf{R}^{d}$
	such that \cref{eq:qantisym}--\cref{eq:qcentroidcond} hold and $\|q\|_{\infty}=\lambda_{1}(\mu)\mu(\mathbf{R}^{d})$.
	Therefore, we have for $\mu$-a.e.~$x$ that
	\[
		\left|x-\mathcal{C}_{\mu}(\mathbf{R}^{d})\right|\le\fint|q(x,y)|\,\dif\mu(y)\le\|q\|_{\infty}=\lambda_{1}(\mu)\mu(\mathbf{R}^{d}),
	\]
	which implies the lower bound in \cref{eq:lambda1lb-1}. To prove the
	upper bound, let
	\begin{equation}
		q(x,y)\coloneqq x-y.\label{eq:qasdifference}
	\end{equation}
	It is obvious that $q$ satisfies \cref{eq:qantisym}--\cref{eq:qcentroidcond},
	and that $\|q\|_{\infty}=\diam_{|\cdot|}(\supp\mu)$. Therefore, \cref{thm:lambda1}
	implies the upper bound in \cref{eq:lambda1lb-1}\@.
\end{proof}

We conclude this section with the following simple proposition that allows us to replace the exact equality in \cref{eq:qcentroidcond} with an approximation.

\begin{prop}
	\label{prop:patchupthedifference}For any antisymmetric function $q_{1}\colon\mathbf{R}^{d} \times \mathbf{R}^{d}\to\mathbf{R}^{d}$,
	we have
	\begin{equation}
		\lambda_{1}(\mu)\le\mu(\mathbf{R}^{d})^{-1}\esssup_{x,y\sim\mu}\left|q_{1}(x,y)+x-y-\fint q_{1}(x,z)\,\dif\mu(z)+\fint q_{1}(y,z)\,\dif\mu(z)\right|.\label{eq:q1bd}
	\end{equation}
\end{prop}

\begin{proof}
	Let
	\[
		q(x,y)\coloneqq q_{1}(x,y)+x-y-\fint q_{1}(x,z)\,\dif\mu(z)+\fint q_{1}(y,z)\,\dif\mu(z).
	\]
	We have
	\begin{align*}
		q(y,x) & =q_{1}(y,x)+y-x-\fint q_{1}(y,z)\,\dif\mu(z)+\fint q_{1}(x,z)\,\dif\mu(z)           \\
		       & =-q_{1}(x,y)+y-x+\fint q_{1}(x,z)\,\dif\mu(z)-\fint q_{1}(z,y)\,\dif\mu(z)=-q(x,y),
	\end{align*}
	so $q$ satisfies \cref{eq:qantisym}, and moreover
	\begin{align*}
		\fint q(x,y)\,\dif\mu(y) & =\fint q(x,y)\,\dif\mu(y)+\fint x\,\dif\mu(y)-\fint y\,\dif\mu(y)-\fint q_{1}(x,z)\,\dif\mu(z) \\
		                         & \qquad+\fint\fint q_{1}(y,z)\,\dif\mu(z)\,\dif\mu(y)                                           \\
		                         & =x,
	\end{align*}
	so $q$ satisfies \cref{eq:qcentroidcond}. Thus \cref{thm:lambda1} implies
	the result.
\end{proof}

\section{Exact characterization of the clusters\label{sec:splitcondition}}

In this section, we prove \cref{thm:splitcondition,thm:chiquet-characterization,prop:ballsdontintersect}.

\begin{proof}[Proof of \cref{thm:splitcondition}.]
	We first suppose that for $\mu$-a.e.~$x$, $V_{u,x}=V_{u_{\mu,\lambda},x}$  up to a $\mu$-null set and try to prove the second statement of the theorem. Since the second statement of the theorem concerns only the level sets of $u$, we can and do assume that $u=u_{\mu,\lambda}$.
	First we show that $\mu|_{V_{u,x}}$ is cohesive for $\mu$-a.e.~$x$.

	Subtracting \cref{eq:averagesplit-1}
	from \cref{eq:splitx-1}, we have
	\[
		x-\mathcal{E}_{\mu,u}(x)=\lambda\int_{V_{u,x}}w(x,y)\,\dif\mu(y)
	\]
	for $\mu$-a.e.~$x$. By \cref{thm:wcharacterization}, this
	implies that the constant $\mathcal{E}_{\mu,u}(x)$ is a minimizer
	of $J_{\mu|_{V_{u,x}},\lambda}$, so $\mu|_{V_{u,x}}$ is $\lambda$-cohesive.

	To prove that $\mathcal{M}_{u}(\mu)$ is $\lambda$-shattered, define
	\[
		\tilde{u}(\mathcal{E}_{\mu,u}(x))\coloneqq u(x).
	\]
	This is well-defined by \cref{lem:centersnotthesame}.
	Then $\tilde{u}$ is defined $\mathcal{M}_{u}(\mu)$-a.e., and it
	is clear that $\tilde{u}$ can be extended to an injection on $\Rd$. By \cref{eq:averagesplit-1}
	we have
	\[
		\tilde{u}(X)=X-\lambda\int\sgn(\tilde{u}(X)-\tilde{u}(Y))\,\dif\mathcal{M}_{u}(\mu)(Y)
	\]
	for $\mathcal{M}_{u}(\mu)$-a.e.~$X$. Taking $\tilde{w}(X,Y)=\sgn(X-Y)$
	as the $w$ in \cref{thm:wcharacterization}, we see that $\tilde{u}$
	is in fact a minimizer of $J_{\mathcal{M}_{u}(\mu),\lambda}$. Thus
	$\mathcal{M}_{u}(\mu)$ is $\lambda$-shattered.

	Now we prove the other direction, so suppose we have a $\mu$-regular function $u$ such that
	$\mathcal{M}_{u}(\mu)$ is $\lambda$-shattered and, for $\mu$-a.e.~$x$,
	the restriction $\mu|_{V_{u,x}}$ is $\lambda$-cohesive. Let $\tilde{u}$
	be the (injective) minimizer of $J_{\mathcal{M}_{u}(\mu),\lambda}$
	and define
	\begin{equation}
		v(x)=\tilde{u}(\mathcal{E}_{\mu,u}(x)),\label{eq:vdef}
	\end{equation}
	noting that the assumption that $u$ is $\mu$-regular means that $\mathcal{E}_{\mu,u}$ is defined.
	Since $\tilde{u}$ is injective, we see that $v$ has the same level
	sets as $u$. We want to prove that $v$ is a minimizer of $J_{\mu,\lambda}$.
	For $\mu$-a.e.\ $x$, by \cref{thm:wcharacterization} and the fact
	that $\mu|_{V_{u,x}}$ is $\lambda$-cohesive, we have an antisymmetric
	$w_{V_{u,x}}$, bounded in norm by $1$, such that
	\begin{equation}
		x-\mathcal{E}_{\mu,u}(x)=\lambda\int_{V_{u,x}}w_{V_{u,x}}(x,y)\,\dif\mu(y).\label{eq:betweencentroids}
	\end{equation}
	Moreover, using \cref{eq:vdef} and \cref{eq:xminmusux} we have
	\begin{align}
		\mathcal{E}_{\mu,u}(x)-v(x) & =\lambda\int\sgn(\tilde{u}(\mathcal{E}_{\mu,u}(x))-\tilde{u}(\mathcal{E}_{\mu,u}(y)))\,\dif\mathcal{M}_{u}(\mu)(y)\nonumber \\
		                            & =\lambda\int_{V_{u,x}^{\mathrm{c}}}\sgn(v(x)-v(y))\,\dif\mu(y).\label{eq:fromcentroid}
	\end{align}
	So define
	\[
		w(x,y)=\begin{cases}
			w_{V_{u,x}}(x,y) & \text{ if } u(x)=u(y);    \\
			\sgn(v(x)-v(y))  & \text{ if } u(x)\ne u(y).
		\end{cases}
	\]
	Then we have, using \cref{eq:betweencentroids} and \cref{eq:fromcentroid},
	that
	\begin{align*}
		x-v(x) & =x-\mathcal{E}_{\mu,u}(x)+\mathcal{E}_{\mu,u}(x)-v(x)                                                            \\
		       & =\lambda\int_{V_{u,x}}w_{V_{u,x}}(x,y)\,\dif\mu(y)+\lambda\int_{V_{u,x}^{\mathrm{c}}}\sgn(v(x)-v(y))\,\dif\mu(y) \\
		       & =\lambda\int w(x,y)\,\dif\mu(y),
	\end{align*}
	verifying \cref{eq:xminmusux}. Conditions~\cref{eq:wantisymmetric}--\cref{eq:wsup}
	are clearly satisfied for $w$, so this proves that $v$ is a minimizer
	of $J_{\mu,\lambda}$.
\end{proof}
Now we give a proof of \cref{thm:chiquet-characterization} in our setting.
The key ingredient is the following proposition proved in the discrete
case by \citet{CGR17}.
\begin{prop}
	\label{prop:concentrateonV}Fix a Borel set $A\subset\mathbf{R}^{d}$ with $\mu(A)>0$
	and assume that $\mu|_{A}$ is $\lambda$-cohesive. Define
	\[
		u(x)\coloneqq \begin{cases}
			\mathcal{C}_{\mu}(A) & \text{ if } x\in A;    \\
			x                    & \text{ if }x\not\in A.
		\end{cases}
	\]
	Thus $\mathcal{M}_{u}(\mu)$ is the measure obtained from $\mu$ by
	consolidating all of the mass in $A$ at $\mathcal{C}_{\mu}(A)$.
	Then, for $\mu$-a.e.~$x$, we have
	\begin{equation}
		u_{\mu,\lambda}(x)=u_{\mathcal{M}_{u}(\mu),\lambda}(u(x)).\label{eq:factorsthroughcohesive}
	\end{equation}
\end{prop}

\begin{proof}
	We follow the argument given by \citet[proof of Theorem 1(b)]{JVZ19}.
	We apply \cref{thm:wcharacterization} twice. First, by \cref{thm:wcharacterization}
	applied to $\mathcal{M}_{u}(\mu)$, there is an antisymmetric, $1$-bounded
	$w_{\mathrm{out}}\in L^{\infty}(\mathcal{M}_{u}(\mu)^{\otimes2}; \Rd)$
	satisfying
	\begin{equation}
		u_{\mathcal{M}_{u}(\mu),\lambda}(x)\ne u_{\mathcal{M}_{u}(\mu),\lambda}(y)\implies w_{\mathrm{out}}(x,y)=\sgn(u_{\mathcal{M}_{u}(\mu),\lambda}(x)-u_{\mathcal{M}_{u}(\mu),\lambda}(y))\label{eq:notthesame-Mumu}
	\end{equation}
	and
	\[
		x-u_{\mathcal{M}_{u}(\mu),\lambda}(x)=\lambda\int w_{\mathrm{out}}(x,z)\,\dif\mathcal{M}_{u}(\mu)(z)
	\]
	for $\mu$-a.e.~$x,y$. Second, by \cref{thm:wcharacterization} applied
	to $\mu|_{A}$, there is an antisymmetric, $1$-bounded $w_{\mathrm{in}}\in L^{\infty}((\mu|_{A})^{\otimes2}; \Rd)$
	satisfying
	\[
		x-\mathcal{C}_{\mu}(A)=\lambda\int_{A}w_{\mathrm{in}}(x,z)\,\dif\mu(z)
	\]
	for $\mu$-a.e.~$x\in A$.

	Now define
	\[
		w(x,y)=\begin{cases}
			w_{\mathrm{in}}(x,y)        & \text{ if } x,y\in A; \\
			w_{\mathrm{out}}(u(x),u(y)) & \text{ otherwise}.
		\end{cases}
	\]
	It is clear that $w$ is antisymmetric and $1$-bounded since $w_{\mathrm{in}}$
	and $w_{\mathrm{out}}$ are. It is also clear from \cref{eq:notthesame-Mumu}
	that if $u_{\mathcal{M}_{u}(\mu),\lambda}(u(x))\ne u_{\mathcal{M}_{u}(\mu),\lambda}(u(y))$
	then $w(x,y)=\sgn(u_{\mathcal{M}_{u}(\mu),\lambda}(u(x))-u_{\mathcal{M}_{u}(\mu),\lambda}(u(y)))$.
	For $\mu$-a.e.~$x\in A$, we have
	\begin{align*}
		\lambda\int w(x,z)\,\dif\mu(z) & =\lambda\int_{A}w(x,z)\,\dif\mu(z)+\lambda\int_{A^{\mathrm{c}}}w(x,z)\,\dif\mu(z)                                                                  \\
		                               & =\lambda\int_{A}w_{\mathrm{in}}(x,z)\,\dif\mu(z)+\lambda\int_{A^{\mathrm{c}}}w_{\mathrm{out}}(\mathcal{C}_{\mu}(A),z)\,\dif\mathcal{M}_{u}(\mu)(z) \\
		                               & =x-\mathcal{C}_{\mu}(A)+\mathcal{C}_{\mu}(A)-u_{\mathcal{M}_{u}(\mu),\lambda}(\mathcal{C}_{\mu}(A))                                                \\
		                               & =x-u_{\mathcal{M}_{u}(\mu),\lambda}(u(x)),
	\end{align*}
	while for $\mu$-a.e.~$x\not\in A$ we have
	\begin{align*}
		\lambda\int w(x,z)\,\dif\mu(z) & =\lambda\int w_{\mathrm{out}}(x,u(z))\,\dif\mu(z)                                                                                  \\
		                               & =\lambda\int_{A}w_{\mathrm{out}}(x,\mathcal{C}_{\mu}(A))\,\dif\mu(z)+\lambda\int_{A^{\mathrm{c}}}w_{\mathrm{out}}(x,z)\,\dif\mu(z) \\
		                               & =\lambda\int w_{\mathrm{out}}(x,z)\,\dif\mathcal{M}_{u}(\mu)(z)                                                                    \\
		                               & =x-u_{\mathcal{M}_{u}(\mu),\lambda}(u(x)).
	\end{align*}
	Then \cref{eq:factorsthroughcohesive} follows from \cref{thm:wcharacterization}.
\end{proof}
\begin{proof}[Proof of \cref{thm:chiquet-characterization}.]
	\cref{thm:splitcondition} implies that any level set of $u_{\mu,\lambda}$
	is $\lambda$-cohesive, and \cref{prop:concentrateonV} implies that
	$\mu(A)>0$ and $\mu|_{A}$ is $\lambda$-cohesive then $A$ is contained in a
	single level set of $u_{\mu,\lambda}$. These two facts together imply the statement of the theorem.
\end{proof}

\begin{proof}[Proof of \cref{thm:agglomeration}]
We fix $\lambda \le \lambda'$. We first confirm that if a measure $\mu$ is $\lambda$-cohesive, then it is $\lambda'$-cohesive. Indeed, if $\mu$ is $\lambda$-cohesive, then there exists a constant $c \in \Rd$ such that for every $u \in L^2(\mu;\Rd)$, 
\begin{equation*}  %
J_{\mu,\lambda}(c) \le J_{\mu,\lambda}(u). 
\end{equation*}
Since $\lambda' \ge \lambda$, we deduce that 
\begin{equation*}  %
J_{\mu,\lambda'}(c) = J_{\mu,\lambda}(c) \le J_{\mu,\lambda}(u) \le J_{\mu,\lambda'}(u). 
\end{equation*}
This shows that the constant $c$ minimizes $J_{\mu,\lambda'}$, and by uniqueness of the minimizer, that $\mu$ is $\lambda'$-cohesive.

The proof of \cref{thm:agglomeration} is now an application of \cref{thm:chiquet-characterization}. Outside a set of $\mu$-measure zero, every $x \in \Rd$ satisfies the statement of this theorem both for $\lambda$ and for $\lambda'$. For each such $x \in \Rd$, the measure $\mu|_{V_{u_{\mu,\lambda},x}}$ is $\lambda$-cohesive, so by the previous observation, it is $\lambda'$-cohesive. Applying the second part of \cref{thm:chiquet-characterization} with $\lambda'$, we deduce that $\mu(V_{u_{\mu,\lambda},x} \setminus V_{u_{\mu,\lambda'},x}) = 0$, as desired.
\end{proof}

Finally, we prove \cref{prop:ballsdontintersect}.
\begin{proof}[Proof of \cref{prop:ballsdontintersect}.]
By Theorem~\ref{thm:splitcondition}, the measure $\mu|_{V_{u_{\mu,\lambda},x}}$ is $\lambda$-cohesive. By \cref{prop:lambda1lb}, we must therefore have that
\begin{equation*}  %
\lambda \ge \lambda_1\Ll(\mu|_{V_{u_{\mu,\lambda},x}}\Rr) \ge \frac{R(\mu|_{V_{u_{\mu,\lambda},x}})}{\mu|_{V_{u_{\mu,\lambda},x}}(\Rd)}. 
\end{equation*}
Rearranging, we obtain \eqref{e.clusterball}.

We now turn to \cref{eq:centroidsfarapart}. 
By \cref{thm:splitcondition}, the measure $\mathcal{M}_{u_{\mu,\lambda}}(\mu)=(\mathcal{E}_{\mu,u_{\mu,\lambda}})_*(\mu)$ is $\lambda$-shattered. Then \cref{lem:centersnotthesame} implies that, for $\mathcal{M}_{u_{\mu,\lambda}}(\mu)$-a.e.\ $x,z$ with $x\ne z$, we have
\[
|x-z|> \lambda[\mu(V_{u_{\mu,\lambda},x})+\mu(V_{u_{\mu,\lambda},z})].
\]
This yields \cref{eq:centroidsfarapart} for $\mu$-a.e.\  $x,z$ with $u_{\mu,\lambda}(x)\ne u_{\mu,\lambda}(z)$.
\end{proof}

\section{Stability properties\label{sec:stability}}

In this section, we prove some stability results for $\lambda_{1}(\mu)$
and $\lambda_{*}(\mu)$. For this purpose, we introduce some definitions related to optimal transport. Let $\mu, \td \mu$ be finite measures of compact support such that $\mu(\Rd) = \td \mu(\Rd)$. We denote by $\Gamma(\mu,\td \mu)$ the set of Borel measures on $\Rd \times \Rd$ whose first marginal is $\mu(\Rd) \, \mu(\cdot)$ and second marginal is $\mu(\Rd) \, \td \mu(\cdot)$. For $p \in [1,\infty)$, the $p$-Wasserstein distance between $\mu$ and $\td \mu$ is
\begin{equation*}  %
	\mcl W_p(\mu, \td \mu) \coloneqq \Ll( \inf_{\pi \in \Gamma(\mu,\td \mu)} \int |x-\td x|^p \,  \d \pi(x,\td x) \Rr)^\frac 1 p,
\end{equation*}
while
\begin{equation*}  %
	\mcl W_\infty(\mu, \td \mu) \coloneqq \adjustlimits\inf_{\pi \in \Gamma(\mu,\td \mu)} \esssup_{(x,\td x) \sim \pi} |x-\td x|.
\end{equation*}
It is classical to show that for each $p \in [1,\infty]$, this problem admits an optimizer in $\Gamma(\mu,\td \mu)$. We call any optimizer a $p$-optimal transport plan from $\mu$ to $\td \mu$. At least when $p < \infty$ and if the measure $\mu$ is absolutely continuous with respect to the Lebesgue measure, there in fact exists a measurable mapping $T \colon \Rd \to \Rd$ such that the image of the measure $\mu$ by the mapping $(\mathrm{Id}, T)$ is an optimal transport plan from $\mu$ to $\td \mu$. In such a case, we call the mapping $T$ an optimal transport map from $\mu$ to $\td \mu$. In this paper, we will only make use of optimal transport maps for $p = 1$. In this case, a proof of existence can be found in \citet[Theorem~6.2]{ambrosio}.

\subsection{Stability of \texorpdfstring{$\lambda_{1}$}{λ₁}}

In this section we prove two stability results for $\lambda_{1}(\mu)$.
The first is that $\lambda_{1}(\mu)$ is continuous under absolutely
continuous perturbations of $\mu$. As is standard in measure theory, for measures
$\mu$ and $\tilde\mu$, we write $\tilde\mu\ll\mu$ to mean that $\tilde\mu$ is absolutely continuous with respect to $\mu$.
\begin{prop}[Absolutely continuous perturbations]
	\label{prop:stability-AC}Suppose that $\eps<1$ and $\tilde{\mu}$
	and $\mu$ are finite measures such that $\tilde{\mu}\ll\mu$,
	\[
		\left|\frac{\dif\tilde{\mu}}{\dif\mu}(z)-1\right|<\eps,
	\]
	and
	\[
		\tilde{\mu}(\mathbf{R}^{d})\ge(1-\eps)\mu(\mathbf{R}^{d}).
	\]
	Then
	\begin{equation}
		\lambda_{1}(\tilde{\mu})\le\frac{1+2\eps}{1-\eps}\lambda_{1}(\mu).\label{eq:lambda1comp}
	\end{equation}
\end{prop}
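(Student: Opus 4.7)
The approach I would take combines the dual characterization of $\lambda_{1}$ from \cref{thm:lambda1} with the flexibility afforded by \cref{prop:patchupthedifference}. First, I would fix an antisymmetric $q\in L^{\infty}(\mu^{\otimes2};\R^d)$ realizing the infimum in \cref{thm:lambda1} for $\mu$, so that $\|q\|_{\infty}=\lambda_{1}(\mu)\mu(\R^d)$ and $\fint q(x,z)\,\dif\mu(z)=x-\mcl C_{\mu}(\R^d)$. This $q$ does not belong to $\mcl Q(\tilde\mu)$ in general, since the centroid identity must be verified against $\tilde\mu$ rather than $\mu$, but the patch-up proposition is designed precisely to quantify the resulting discrepancy.

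The central calculation is to apply \cref{prop:patchupthedifference} to $\tilde\mu$ with the rescaled test function $q_{1}:=\bar\rho\,q$, where $\bar\rho:=\tilde\mu(\R^d)/\mu(\R^d)$. Writing $\rho:=\dif\tilde\mu/\dif\mu$, the hypotheses $|\rho-1|<\eps$ and $\tilde\mu(\R^d)\ge(1-\eps)\mu(\R^d)$ together give $\bar\rho\in[1-\eps,1+\eps]$. A short computation using the centroid identity for $q$ under $\mu$ then yields
\begin{equation*}
	\fint_{\tilde\mu}q_{1}(x,z)\,\dif\tilde\mu(z)=\Ll(x-\mcl C_{\mu}(\R^d)\Rr)+E(x),\qquad E(x):=\fint_{\mu}q(x,z)(\rho(z)-1)\,\dif\mu(z).
\end{equation*}
The scaling by $\bar\rho$ is chosen precisely so that the deterministic $(x-y)$ correction in the patch-up formula cancels the leading contribution exactly, leaving the simple residual $\tilde q(x,y)=\bar\rho\,q(x,y)-E(x)+E(y)$.

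From here the argument is purely arithmetic: the hypothesis $|\rho-1|<\eps$ immediately yields $|E(x)|\le\eps\|q\|_{\infty}$ pointwise, so $\|\tilde q\|_{\infty}\le(\bar\rho+2\eps)\|q\|_{\infty}$. Dividing by $\tilde\mu(\R^d)=\bar\rho\,\mu(\R^d)$ and using $\bar\rho\ge1-\eps$ produces $\lambda_{1}(\tilde\mu)\le(1+2\eps/\bar\rho)\lambda_{1}(\mu)\le\tfrac{1+\eps}{1-\eps}\lambda_{1}(\mu)$, which is in fact stronger than the stated bound $\tfrac{1+2\eps}{1-\eps}\lambda_{1}(\mu)$. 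I do not foresee a genuine obstacle in this strategy; the only subtle point, and where a less careful choice of $q_{1}$ would lose a factor of $\eps$, is the scaling by $\bar\rho$, which is precisely what makes the $(x-y)$ correction in the patch-up formula vanish.
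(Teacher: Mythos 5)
Your proof is correct and follows essentially the same route as the paper: pick a $q$ attaining the minimum in \cref{thm:lambda1} for $\mu$, feed it into \cref{prop:patchupthedifference} applied to $\tilde\mu$, and control the resulting discrepancy using $\bigl|\frac{\dif\tilde\mu}{\dif\mu}-1\bigr|<\eps$ and $\tilde\mu(\Rd)\ge(1-\eps)\mu(\Rd)$. Your rescaling $q_1=\bar\rho\,q$ with $\bar\rho=\tilde\mu(\Rd)/\mu(\Rd)$ is a nice refinement: it cleanly reconciles the two normalizations $\fint\cdot\,\dif\mu$ and $\fint\cdot\,\dif\tilde\mu$ (a point the paper's displayed computation treats somewhat loosely) and even gives the slightly sharper constant $\frac{1+\eps}{1-\eps}\le\frac{1+2\eps}{1-\eps}$.
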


The second stability result says that $\lambda_{1}(\mu)$ is continuous
under $\mathcal{W}_{\infty}$ perturbations of $\mu$:
\begin{prop}[$\mathcal{W}_{\infty}$ perturbations]
	\label{prop:Winfty-perturbations}Let $\tilde{\mu}$ and
	$\mu$ be finite measures of compact support such that $\mu(\mathbf{R}^{d})=\tilde{\mu}(\mathbf{R}^{d})$. Then we have
	\begin{equation}
		\left|\lambda_{1}(\tilde{\mu})-\lambda_{1}(\mu)\right|\le\frac{3\mathcal{W}_{\infty}(\mu,\tilde{\mu})}{\mu(\mathbf{R}^{d})}.\label{eq:Winftybound}
	\end{equation}
\end{prop}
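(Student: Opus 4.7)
The plan is to combine the KKT-style characterization of $\lambda_1$ from \cref{thm:lambda1} with the flexibility provided by \cref{prop:patchupthedifference}, so as to transport an optimizer $q \in \mathcal{Q}(\mu)$ into a near-optimizer for $\tilde\mu$. Write $M \coloneqq \mu(\mathbf{R}^d) = \tilde\mu(\mathbf{R}^d)$ and $W \coloneqq \mathcal{W}_\infty(\mu,\tilde\mu)$. Since the right-hand side of \cref{eq:Winftybound} is symmetric in $\mu$ and $\tilde\mu$, it suffices to establish the one-sided estimate $\lambda_1(\tilde\mu) \le \lambda_1(\mu) + 3W/M$. First, \cref{thm:lambda1} furnishes an antisymmetric $q \in L^\infty(\mu^{\otimes 2}; \mathbf{R}^d)$ obeying \cref{eq:qcentroidcond} with $\|q\|_\infty = \lambda_1(\mu) M$.

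Next, I would take an $\infty$-optimal transport plan $\pi \in \Gamma(\mu,\tilde\mu)$ and disintegrate it against its second marginal $M\tilde\mu$, producing probability kernels $\{\pi_{\tilde x}\}$ with $\dif \pi(x,\tilde x) = \dif \pi_{\tilde x}(x) \, \dif (M\tilde\mu)(\tilde x)$; the $\infty$-optimality of $\pi$ guarantees that, for $\tilde\mu$-a.e.\ $\tilde x$, the measure $\pi_{\tilde x}$ is supported in the closed ball $\bar B_W(\tilde x)$. Then set
\begin{equation*}
    \tilde q(\tilde x, \tilde y) \coloneqq \iint q(x,y) \, \dif \pi_{\tilde x}(x) \, \dif \pi_{\tilde y}(y).
\end{equation*}
The function $\tilde q$ is antisymmetric (inherited from $q$) and satisfies $\|\tilde q\|_\infty \le \|q\|_\infty$.

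The key computation is the ``centroid defect'' of $\tilde q$. Using the first-marginal identity $\int \pi_{\tilde x}(\cdot) \, \dif \tilde\mu(\tilde x) = \mu(\cdot)$ together with \cref{eq:qcentroidcond}, a Fubini argument gives, for $\tilde\mu$-a.e.\ $\tilde x$,
\begin{equation*}
    \fint \tilde q(\tilde x, \tilde z) \, \dif \tilde\mu(\tilde z) = \bar x(\tilde x) - \mathcal{C}_\mu(\mathbf{R}^d), \qquad \bar x(\tilde x) \coloneqq \int x \, \dif \pi_{\tilde x}(x),
\end{equation*}
and $|\bar x(\tilde x) - \tilde x| \le W$ by the support constraint on $\pi_{\tilde x}$. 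Substituting $\tilde q$ into \cref{prop:patchupthedifference}, the expression inside the essential supremum in \cref{eq:q1bd} simplifies to $\tilde q(\tilde x,\tilde y) + (\tilde x - \bar x(\tilde x)) - (\tilde y - \bar y(\tilde y))$, whose norm is bounded by $\|\tilde q\|_\infty + 2W \le \lambda_1(\mu) M + 2W$; dividing by $M$ delivers the desired bound.

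The main obstacle I foresee is purely measure-theoretic: verifying that a disintegration of $\pi$ exists with a jointly measurable kernel $\tilde x \mapsto \pi_{\tilde x}$ in the Polish setting, that the inclusion $\supp \pi_{\tilde x} \subset \bar B_W(\tilde x)$ follows from the $\infty$-optimality of $\pi$ in the appropriate almost-everywhere sense, and that the resulting $\tilde q$ is a genuine element of $L^\infty(\tilde\mu^{\otimes 2}; \mathbf{R}^d)$ so that \cref{prop:patchupthedifference} indeed applies. These checks should be routine appeals to disintegration theorems, but require care.
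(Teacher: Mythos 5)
Your argument is correct and follows essentially the same route as the paper: average an optimal $q\in\mathcal{Q}(\mu)$ against the disintegration kernels of an $\infty$-optimal transport plan, note the resulting centroid defect is at most $\mathcal{W}_\infty(\mu,\tilde\mu)$, and feed this into \cref{prop:patchupthedifference}. The only difference is cosmetic: your grouping of terms gives the slightly sharper constant $2\mathcal{W}_\infty(\mu,\tilde\mu)/\mu(\mathbf{R}^d)$, which of course implies the stated bound \cref{eq:Winftybound}.
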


Now we prove the two preceding propositions.
\begin{proof}[Proof of \cref{prop:stability-AC}.]
	Let $q$ satisfying \cref{eq:qantisym}--\cref{eq:qcentroidcond}
	(for $\mu$) be such that
	\[
		\|q\|_{\infty}=\lambda_{1}(\mu)\mu(\mathbf{R}^{d}).
	\]
	Then by \cref{prop:patchupthedifference} we have
	\begin{align*}
		\lambda_{1}(\tilde{\mu}) & \le\tilde{\mu}(\mathbf{R}^{d})^{-1}\esssup_{x,y\sim\mu}\left|q(x,y)+x-y-\fint q(x,z)\,\dif\tilde{\mu}(z)+\fint q(y,z)\,\dif\tilde{\mu}(z)\right|            \\
		                         & =\tilde{\mu}(\mathbf{R}^{d})^{-1}\esssup_{x,y\sim\mu}\left|q(x,y)+x-y-\fint (q(x,z)-q(y,z))\frac{\dif\tilde{\mu}}{\dif\mu}(z)\,\dif\mu(z)\right|            \\
		                         & =\tilde{\mu}(\mathbf{R}^{d})^{-1}\esssup_{x,y\sim\mu}\left|q(x,y)-\fint (q(x,z)-q(y,z))\left(\frac{\dif\tilde{\mu}}{\dif\mu}(z)-1\right)\,\dif\mu(z)\right| \\
		                         & \le(1+2\eps)\tilde{\mu}(\mathbf{R}^{d})^{-1}\|q\|_{\infty}                                                                                                  \\
		                         & \le\frac{1+2\eps}{1-\eps}\lambda_{1}(\mu),
	\end{align*}
	as announced.
\end{proof}
\begin{proof}[Proof of \cref{prop:Winfty-perturbations}.]
	Let $q$ satisfying \cref{eq:qantisym}--\cref{eq:qcentroidcond}
	(for $\mu$) be such that
	\[
		\|q\|_{\infty}=\lambda_{1}(\mu)\mu(\mathbf{R}^{d}).
	\]
	Let $\pi$ be an $\infty$-optimal transport plan from $\tilde{\mu}$
	to $\mu$. We write the disintegration \cite[Section~I.4]{PanLec}
	\[
		\dif\pi(x,x')=\dif\nu(x'\mid x)\dif\tilde{\mu}(x).
	\]
	Define
	\[
		q_{1}(x,y)\coloneqq\iint q(w,z)\,\dif\nu(z\mid y)\,\dif\nu(w\mid x),
	\]
	which is antisymmetric by Fubini's theorem. We note that
	\[
		\|q_{1}\|_{\infty}\le\|q\|_{\infty}=\lambda_{1}(\mu)\mu(\mathbf{R}^{d}).
	\]
	We also have
	\begin{align*}
		\fint q_{1}(x,y)\,\dif\tilde{\mu}(y) & =\frac{1}{\tilde{\mu}(\mathbf{R}^{d})}\iiint q(w,z)\,\dif\nu(z\mid y)\,\dif\nu(w\mid x)\,\dif\tilde{\mu}(y) \\
		                                     & =\frac{1}{\tilde{\mu}(\mathbf{R}^{d})}\iiint q(w,z)\,\dif\nu(w\mid x)\,\dif\pi(y,z)                         \\
		                                     & =\frac{1}{\mu(\mathbf{R}^{d})}\iint q(w,z)\,\dif\mu(z)\,\dif\nu(w\mid x)                                    \\
		                                     & =\int w\,\dif\nu(w\mid x)-\mathcal{C}_{\mu}(\mathbf{R}^{d}),
	\end{align*}
	with the last identity by \cref{eq:qcentroidcond}. Thus we have
	\[
		\left|\fint q_{1}(x,y)\,\dif\tilde{\mu}(y)-[x-\mathcal{C}_{\mu}(\mathbf{R}^{d})]\right|\le\mathcal{W}_{\infty}(\mu,\tilde{\mu}).
	\]
	Therefore, we have by \cref{prop:patchupthedifference} that
	\begin{align*}
		\lambda_{1}(\tilde{\mu}) & \le\tilde{\mu}(\mathbf{R}^{d})^{-1}\esssup_{x,y\sim\mu}\left|q_{1}(x,y)+x-y-\fint q_{1}(x,z)\,\dif\tilde{\mu}(z)+\fint q_{1}(y,z)\,\dif\tilde{\mu}(z)\right|                               \\
		                         & \le\tilde{\mu}(\mathbf{R}^{d})^{-1}\bigg[\esssup_{x,y\sim\mu}\left(\left|q_{1}(x,y)\right|+2\left|\fint q_{1}(x,z)\,\dif\tilde{\mu}(z)-[x-\mathcal{C}_{\mu}(\mathbf{R}^{d})]\right|\right) \\&\qquad\qquad\qquad\qquad+\left|\mathcal{C}_{\mu}(\mathbf{R}^{d})-\mathcal{C}_{\tilde{\mu}}(\mathbf{R}^{d})\right|\bigg]\\
		                         & \le\tilde{\mu}(\mathbf{R}^{d})^{-1}\left(\lambda_{1}(\mu)\mu(\mathbf{R}^{d})+3\mathcal{W}_{\infty}(\mu,\tilde{\mu})\right).
	\end{align*}
	By the symmetry between $\mu$ and $\tilde{\mu}$, this yields \cref{eq:Winftybound}.
\end{proof}

\subsection{Stability of \texorpdfstring{$\lambda_{*}$}{λ*}}

We now show that, for atomic measures, $\lambda_{*}$ is stable under
$\mathcal{W}_{1}$ perturbation of the measures. The key ingredient
will be the following continuity property.
\begin{prop}
	\label{prop:W1-perturbations}Let $\lambda>0$, $M\in(0,\infty)$,
	and let $\mu,\tilde{\mu}$ be two Borel probability measures on~$\mathbf{R}^{d}$
	such that $\supp\mu,\supp\tilde{\mu}\subset B_{M}(0)$.
	\begin{enumerate}
		\item \label{enu:disintegratedversion}For every $1$-optimal transport plan
		      $\pi$ from $\mu$ to $\tilde{\mu}$, denoting its disintegration by
		      \[
			      \dif\pi(x,\tilde{x})=\dif\nu(\tilde{x}\mid x)\,\dif\mu(x),
		      \]
		      we have
		      \begin{equation}
			      \int\left|u_{\mu,\lambda}(x)-\int u_{\tilde{\mu},\lambda}(\tilde x)\,\dif\nu(\tilde{x}\mid x)\right|^{2}\,\dif\mu(x)\le16M\mathcal{W}_{1}(\mu,\tilde{\mu}).\label{eq:cont-mu}
		      \end{equation}
		\item \label{enu:integratedversion}There exists a $1$-optimal transport plan
		      $\pi$ from $\mu$ to $\td{\mu}$
		      such that
		      \[
			      \int|u_{\mu,\lambda}(x)-u_{\tilde{\mu},\lambda}(\tilde{x})|^{2}\,\dif\pi(x,\tilde x)\le16M\mathcal{W}_{1}(\mu,\tilde{\mu}).
		      \]
	\end{enumerate}
\end{prop}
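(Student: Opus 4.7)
The plan is to prove both parts at once by establishing the single joint estimate
\[
\int |u_{\mu,\lambda}(x) - u_{\tilde\mu,\lambda}(\tilde x)|^2 \, \d\pi(x, \tilde x) \le CM \int |x - \tilde x|\, \d\pi(x, \tilde x)
\]
for every coupling $\pi \in \Gamma(\mu, \tilde\mu)$, where $C$ is an absolute constant. Specializing $\pi$ to a $1$-optimal transport plan would give part (2) directly, and part (1) would then follow from the same $\pi$ by applying Jensen's inequality to the disintegration $\d\pi(x, \tilde x) = \d\nu(\tilde x \mid x)\, \d\mu(x)$. A preliminary ingredient I would need is the a.e.\ bound $|u_{\mu,\lambda}|, |u_{\tilde\mu,\lambda}| \le M$: the orthogonal projection onto the closed convex hull of $\supp\mu \subset \bar B_M(0)$ is $1$-Lipschitz and does not increase either term of $J_{\mu,\lambda}$, so by uniqueness of the minimizer $u_{\mu,\lambda}$ must already take values in this ball.

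To establish the joint estimate, I would introduce on $L^2(\pi; \Rd) \times L^2(\pi; \Rd)$ the ``doubled'' functional
\[
\hat J(f,g) := \int |f-x|^2 \, \d\pi + \int |g - \tilde x|^2 \, \d\pi + \lambda\iint|f-f'|\,\d\pi^{\otimes 2} + \lambda\iint|g-g'|\,\d\pi^{\otimes 2},
\]
which decouples as $\hat J(f,g) = J^{(1)}(f) + J^{(2)}(g)$, two SON-type functionals on the measure $\pi$ (viewed on $\Rd \times \Rd$) with fidelity to the first and second coordinate respectively. A conditional-expectation argument---using Jensen's inequality both on the quadratic fidelity (via non-negativity of the conditional variance) and on the norm-valued interaction (via convexity of $|\cdot|$, applied twice)---should identify the unique minimizer of $J^{(1)}$ as $(x,\tilde x)\mapsto u_{\mu,\lambda}(x)$ and, similarly, the minimizer of $J^{(2)}$ as $(x,\tilde x)\mapsto u_{\tilde\mu,\lambda}(\tilde x)$. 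The uniform convexity inherited from $J_{\mu,\lambda}$ and $J_{\tilde\mu,\lambda}$ then yields
\[
\hat J(f,g) - \hat J(u_{\mu,\lambda}, u_{\tilde\mu,\lambda}) \ge \|f - u_{\mu,\lambda}\|_{L^2(\pi)}^2 + \|g - u_{\tilde\mu,\lambda}\|_{L^2(\pi)}^2.
\]

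The last step is to evaluate this inequality at the swapped pair $(f,g) = (u_{\tilde\mu,\lambda}(\tilde x), u_{\mu,\lambda}(x))$. The two interaction terms of $\hat J$ take identical values in both configurations---they depend only on the marginals $\mu$ and $\tilde\mu$---so the right-hand difference reduces to a sum of fidelity differences. Using the identity $|a - x|^2 - |a - \tilde x|^2 = (2a - x - \tilde x)\cdot(\tilde x - x)$, these collapse to $2\int(u_{\tilde\mu,\lambda}(\tilde x) - u_{\mu,\lambda}(x))\cdot(\tilde x - x)\,\d\pi$, which by Cauchy--Schwarz and $|u_{\mu,\lambda}|,|u_{\tilde\mu,\lambda}|\le M$ is at most $4M\int |x-\tilde x|\,\d\pi$, completing the joint estimate. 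The main obstacle will be the minimizer identification for $J^{(1)}$: I would need to verify that any competitor $f \in L^2(\pi; \Rd)$ can be replaced by its conditional expectation $\bar f(x) := \int f(x, \tilde x)\,\d\nu(\tilde x \mid x)$ without increasing $J^{(1)}$, which requires the two Jensen applications mentioned above. Once this is in hand, the remainder is algebraic bookkeeping.
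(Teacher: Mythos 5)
Your proposal is correct, and it takes a genuinely different route from the paper, most visibly for part~(2). The paper proves part~(1) by comparing $\inf J_{\mu,\lambda}$ and $\inf J_{\tilde\mu,\lambda}$ directly (testing $J_{\mu,\lambda}$ against the conditional average $\bar u(x)=\int u_{\tilde\mu,\lambda}(\tilde x)\,\mathrm{d}\nu(\tilde x\mid x)$ and invoking uniform convexity), and then obtains part~(2) by an approximation argument: smoothing $\mu$ to an absolutely continuous $\mu_\eps$, using $1$-optimal transport \emph{maps} $T_\eps,\tilde T_\eps$, applying part~(1) twice, extracting a subsequential limit coupling, and checking that the limit is itself $1$-optimal. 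Your doubled functional $\hat J$ on $L^2(\pi;\Rd)\times L^2(\pi;\Rd)$, together with the swap $(f,g)=(u_{\tilde\mu,\lambda}\circ\mathrm{pr}_2,\,u_{\mu,\lambda}\circ\mathrm{pr}_1)$, bypasses all of that: the identification of the minimizer of $J^{(1)}$ does work exactly as you sketch (conditional expectation plus two applications of Jensen reduce $J^{(1)}$ on $L^2(\pi)$ to $J_{\mu,\lambda}$ on functions of the first coordinate, and $u_{\mu,\lambda}\circ\mathrm{pr}_1$ attains the common infimum), the interaction terms indeed cancel under the swap, and the fidelity differences collapse to $2\int(u_{\tilde\mu,\lambda}(\tilde x)-u_{\mu,\lambda}(x))\cdot(\tilde x-x)\,\mathrm{d}\pi\le 4M\int|x-\tilde x|\,\mathrm{d}\pi$, using your (correct, and also implicitly needed by the paper) a priori bound $|u_{\mu,\lambda}|,|u_{\tilde\mu,\lambda}|\le M$ via projection onto the convex hull of the support. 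The outcome is stronger than the statement: the joint estimate holds for \emph{every} coupling $\pi\in\Gamma(\mu,\tilde\mu)$ with cost $\int|x-\tilde x|\,\mathrm{d}\pi$ in place of $\mathcal{W}_1$, so part~(2) holds for every $1$-optimal plan and part~(1) follows by Jensen, with a constant ($2M$ or $4M$) better than $16M$. Two small caveats: the strong-convexity consequence you invoke with constant $1$ does not follow literally from the paper's parallelogram inequality \cref{eq:convexity} (which only yields $\hat J(f,g)-\hat J(u^*,g^*)\ge\frac12(\|f-u^*\|^2+\|g-g^*\|^2)$); you either need the standard first-order strong-convexity argument (writing $\hat J$ minus the squared $L^2(\pi)$-norm as a convex functional), or accept a factor-$2$ loss, which is harmless here. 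What the paper's route buys instead is the Lipschitz estimate $|\inf J_{\mu,\lambda}-\inf J_{\tilde\mu,\lambda}|\le 4M\mathcal{W}_1(\mu,\tilde\mu)$ as a byproduct, but your argument is shorter, avoids the measure-theoretic approximation and limit-extraction entirely, and yields a cleaner, quantitatively sharper statement.
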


\begin{proof}
	We start with part~\eqref{enu:disintegratedversion}. For $\mu$-a.e.~$x\in\mathbf{R}^{d}$,
	we put
	\[
		\overline{u}(x)\coloneqq\int u_{\tilde\mu,\lambda}(\tilde{x})\,\dif\nu(\tilde{x}\mid x).
	\]
	We then observe that
	\begin{align*}
		\inf J_{\tilde{\mu},\lambda} & =\int|u_{\tilde{\mu},\lambda}(\tilde{x})-\tilde{x}|^{2}\,\dif\tilde{\mu}(\tilde{x})+\lambda\iint|u_{\tilde{\mu},\lambda}(\tilde{y})-u_{\tilde{\mu},\lambda}(\tilde{x})|\,\dif\tilde{\mu}(\tilde{x})\,\dif\tilde{\mu}(\tilde{y})                        \\
		                             & \ge\int|u_{\tilde{\mu},\lambda}(\tilde{x})-x|^{2}\,\dif\pi(x,\tilde{x})+\lambda\iint|u_{\tilde{\mu},\lambda}(\tilde{y})-u_{\tilde{\mu},\lambda}(\tilde{x})|\,\dif\tilde{\mu}(\tilde{x})\,\dif\tilde{\mu}(\tilde{y})-4M\mathcal{W}_{1}(\mu,\tilde{\mu}) \\
		                             & \ge\int|\overline{u}(x)-x|^{2}\,\dif\mu(x)+\lambda\iint|\overline{u}(y)-\overline{u}(x)|\,\dif\mu(x)\,\dif\mu(y)-4M\mathcal{W}_{1}(\mu,\tilde{\mu}),
	\end{align*}
	where we used the disintegration of $\pi$ and Jensen's inequality
	in the last step. We can rewrite this as
	\begin{equation}
		\inf J_{\mu,\lambda}\le J_{\mu,\lambda}(\overline{u})\le\inf J_{\tilde{\mu},\lambda}+4M\mathcal{W}_{1}(\mu,\tilde{\mu}).\label{eq:compare-tdu}
	\end{equation}
	By symmetry, we conclude that
	\begin{equation}
		\Ll|\inf J_{\mu,\lambda}-\inf J_{\tilde{\mu},\lambda}\Rr| \le 4 M \mathcal{W}_{1}(\mu,\tilde{\mu}).\label{eq:lipW}
	\end{equation}
	Using \cref{eq:convexity} and then \cref{eq:compare-tdu}, we thus deduce
	that
	\begin{align*}
		\frac{1}{4}\int|u_{\mu,\lambda}-\overline{u}|^2\,\dif\mu & \le\frac{1}{2}(J_{\mu,\lambda}(u_{\mu,\lambda})+J_{\mu,\lambda}(\overline{u}))-J_{\mu,\lambda}\left(\frac{u_{\mu,\lambda}+\overline{u}}{2}\right) \\
		                                                       & \le\frac{1}{2}\left(\inf J_{\tilde{\mu},\lambda}-\inf J_{\mu,\lambda}\right)+2M\mathcal{W}_{1}(\mu,\tilde{\mu}).
	\end{align*}
	Combining this with \cref{eq:lipW}, we obtain \cref{eq:cont-mu}.

	We now turn to the proof of part~\eqref{enu:integratedversion} of the proposition. We argue by approximation.
	For every $\eps>0$, we let $\mu_{\eps}$ be a measure on $B_{M}(0)$
	that is absolutely continuous with respect to the Lebesgue measure
	and such that
	\begin{equation}
		\mathcal{W}_{1}(\mu,\mu_{\eps})\le\eps.\label{eq:W1eps}
	\end{equation}
	We denote by $T_{\eps}$ and $\tilde{T}_{\eps}$ $1$-optimal transport
	maps from $\mu_{\eps}$ to $\mu$ and from $\mu_{\eps}$ to $\tilde{\mu}$,
	respectively. We have, for every $\delta>0$, that
	\begin{align*}
                &\int  |u_{\mu,\lambda}(T_{\eps}(x))-u_{\tilde{\mu},\lambda}(\tilde{T}_{\eps}(x))|^{2}\,\dif\mu_{\eps}(x)                                                                                                                   \\
                & \qquad\le(1+\delta^{-1})\int|u_{\mu,\lambda}(T_{\eps}(x))-u_{\mu_{\eps},\lambda}(x)|^{2}\,\dif\mu_{\eps}(x)\\&\qquad\qquad+(1+\delta)\int|u_{\mu_{\eps},\lambda}(x)-u_{\tilde{\mu},\lambda}(\tilde{T}_{\eps}(x))|^{2}\,\dif\mu_{\eps}(x).
	\end{align*}
	Using part~\eqref{enu:disintegratedversion} of the proposition
	and \cref{eq:W1eps}, we deduce that
	\[
		\int|u_{\mu,\lambda}(T_{\eps}(x))-u_{\tilde{\mu},\lambda}(\tilde{T}_{\eps}(x))|^{2}\,\dif\mu_{\eps}(x)\le16M^{2}(1+\delta^{-1})\eps+16M(1+\delta)\mathcal{W}_{1}(\mu_{\eps},\tilde{\mu}).
	\]
	The image of the measure $\mu_{\eps}$ under the mapping $(T_{\eps},\tilde{T}_{\eps})$
	is a coupling between the measures $\mu$ and $\tilde{\mu}$. Up to
	the extraction of a subsequence, we can assume that this image measure
	converges to a coupling $\pi$ as $\eps\downarrow0$. Using \cref{eq:W1eps}
	once more, we thus have that
	\[
		\int|u_{\mu,\lambda}(x)-u_{\tilde{\mu},\lambda}(\tilde{x})|^{2}\,\dif\pi(x,\tilde{x})\le16M(1+\delta)\mathcal{W}_{1}(\mu,\tilde{\mu}).
	\]
	Since $\delta>0$ was arbitrary, the factor $1+\delta$ on the right
	side can be removed. In order to conclude, we must show that $\pi$
	is an optimal transport plan. This follows from a similar line of
	reasoning: we have
	\begin{align*}
		\int|T_{\eps}(x)-\tilde{T}_{\eps}(\tilde{x})|\,\dif\mu_{\eps}(x) & \le\int|T_{\eps}(x)-x|\,\dif\mu_{\eps}(x)+\int|x-\tilde{T}_{\eps}(x)|\,\dif\mu_{\eps}(x) \\
		                                                                 & \le\eps+\mathcal{W}_{1}(\mu_{\eps},\tilde{\mu}),
	\end{align*}
	so that, upon passing to the limit $\eps\downarrow0$, we get
	\[
		\int|x-\tilde{x}|\,\dif\pi(x,\tilde{x})\le\mathcal{W}_{1}(\mu,\tilde{\mu}),
	\]
	as desired.
\end{proof}
\begin{prop}
	\label{prop:close-shattered}Let $M\in(0,\infty)$ and suppose that
	$\mu$ and $\tilde{\mu}$ are finite, purely atomic probability measures
	with support in $B_{M}(0)$. Suppose also that $\mu$ is $\lambda$-shattered,
	which means that $u_{\mu,\lambda}$ is injective on $\supp\mu$. Define
	\[
            \delta_{1}=\essinf_{\substack{(x,y)\sim\mu^{\otimes2}\\x\ne y}}|u_{\mu,\lambda}(x)-u_{\mu,\lambda}(y)|\qquad\text{ and }\qquad\delta_{2}=\essinf_{x\sim\mu}\tilde{\mu}(\{x\}).
	\]
	If
	\begin{equation}
		\mathcal{W}_{1}(\mu,\tilde{\mu})<\frac{\delta_{1}^{2}\delta_{2}}{32M},\label{eq:W1ub}
	\end{equation}
	then $\tilde{\mu}$ is also $\lambda$-shattered.
\end{prop}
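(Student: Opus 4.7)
I argue by contradiction: suppose that $\tilde{\mu}$ is not $\lambda$-shattered, so there exist two distinct atoms $\tilde{a},\tilde{b}\in\supp\tilde{\mu}$ with $v:=u_{\tilde{\mu},\lambda}(\tilde{a})=u_{\tilde{\mu},\lambda}(\tilde{b})$. Write $u:=u_{\mu,\lambda}$ and $\tilde{u}:=u_{\tilde{\mu},\lambda}$ throughout.

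The first step is to apply the second part of \cref{prop:W1-perturbations} to produce a $1$-optimal transport plan $\pi$ from $\mu$ to $\tilde{\mu}$ satisfying
\[
\int|u(x)-\tilde{u}(\tilde{x})|^{2}\,\dif\pi(x,\tilde{x})\le16M\mathcal{W}_{1}(\mu,\tilde{\mu})<\frac{\delta_{1}^{2}\delta_{2}}{2}.
\]
Since $u$ is injective on $\supp\mu$ with values pairwise separated by at least $\delta_{1}$, at most one atom $x_{i_{0}}\in\supp\mu$ can satisfy $|u(x_{i_{0}})-v|<\delta_{1}/2$. Setting $V:=\tilde{u}^{-1}(v)\supset\{\tilde{a},\tilde{b}\}$, every pair $(x,\tilde{x})\in(\supp\mu\setminus\{x_{i_{0}}\})\times V$ then has $|u(x)-\tilde{u}(\tilde{x})|\ge\delta_{1}/2$, and Markov's inequality applied to the above $L^{2}$-bound gives
\[
\pi\bigl((\supp\mu\setminus\{x_{i_{0}}\})\times V\bigr)\le\frac{16M\mathcal{W}_{1}(\mu,\tilde{\mu})}{(\delta_{1}/2)^{2}}<2\delta_{2}.
\]
Combined with the trivial $\pi(\{x_{i_{0}}\}\times V)\le\mu(\{x_{i_{0}}\})$, this yields $\tilde{\mu}(V)<\mu(\{x_{i_{0}}\})+2\delta_{2}$.

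The next step uses that every atom $x_{i}\in\supp\mu$ is also an atom of $\tilde{\mu}$ of mass at least $\delta_{2}$: by the standard swap argument for $\mathcal{W}_{1}$-optimal plans (relying on the triangle inequality), $\pi$ can be arranged to be diagonal-preserving, in the sense that $\pi(\{(x,x)\})\ge\min(\mu(\{x\}),\tilde{\mu}(\{x\}))$ at every common atom. For every atom $x_{j}\in V\cap\supp\mu$ with $j\ne i_{0}$, the diagonal pair $(x_{j},x_{j})$ then contributes at least $\min(\mu(\{x_{j}\}),\delta_{2})(\delta_{1}/2)^{2}$ to the $L^{2}$-cost, since $|u(x_{j})-\tilde{u}(x_{j})|=|u(x_{j})-v|\ge\delta_{1}/2$. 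Combining this with \cref{thm:chiquet-characterization} (which gives that $\tilde{\mu}|_{V}$ is $\lambda$-cohesive) and \cref{prop:ballsdontintersect} (which confines $V$ to a ball of radius $\lambda\tilde{\mu}(V)$ around its centroid) should produce the contradiction.

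The hard part is twofold. First, the plan supplied by \cref{prop:W1-perturbations} is not automatically diagonal-preserving, and the swap argument enforcing that property does not obviously preserve the $L^{2}$-bound, so an additional step is needed to secure both features simultaneously. Second, when $\tilde{a}$ or $\tilde{b}$ lies outside $\supp\mu$, its $\tilde{\mu}$-mass is not controlled by $\delta_{2}$; one must then invoke the diameter bound from \cref{prop:ballsdontintersect} together with the tight estimate $\tilde{\mu}(V)<\mu(\{x_{i_{0}}\})+2\delta_{2}$ in order to rule out two distinct atoms of $\tilde{\mu}$ lying in the same cluster $V$.
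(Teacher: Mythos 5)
Your proposal is a plan rather than a proof, and both difficulties you flag at the end are genuine gaps, not routine technicalities. First, the diagonal-preservation step: \cref{prop:W1-perturbations}, part~\eqref{enu:integratedversion}, only asserts the existence of \emph{some} $1$-optimal plan satisfying the $L^2$ bound (it is produced by an approximation and compactness argument, so you get no control over which optimal plan arises), while part~\eqref{enu:disintegratedversion}, which does hold for every $1$-optimal plan and in particular for a diagonal-preserving one, only controls $\int|u_{\mu,\lambda}(x)-\int u_{\tilde\mu,\lambda}(\tilde x)\,\dif\nu(\tilde x\mid x)|^2\,\dif\mu(x)$; Jensen's inequality goes the wrong way for the atom-by-atom cost accounting you want, so you cannot simply switch plans. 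The standard ``leave the common mass in place'' reduction for $\mathcal W_1$ produces a diagonal-preserving optimal plan, but nothing guarantees that this particular plan also satisfies the $L^2$ estimate, and you give no argument securing both. Second, and more seriously, the case where an atom of $V=\tilde u^{-1}(v)$ lies outside $\supp\mu$: its $\tilde\mu$-mass is not bounded below by $\delta_2$, your estimate $\tilde\mu(V)<\mu(\{x_{i_0}\})+2\delta_2$ is perfectly compatible with $V$ consisting of the heavy atom $x_{i_0}$ together with one tiny stray atom, and the appeal to \cref{thm:chiquet-characterization} and \cref{prop:ballsdontintersect} is never carried out. This is exactly the dangerous configuration: a $\tilde\mu$-atom of mass $\eps$ placed at distance of order $\lambda\,\mu(\{x_{i_0}\})$ from $x_{i_0}$ gets absorbed into the cluster of $x_{i_0}$ (by \cref{prop:twopoints} and \cref{thm:chiquet-characterization}) while contributing only $O(\eps)$ to $\mathcal W_1(\mu,\tilde\mu)$ and essentially nothing to the transported $L^2$ cost, so the quantities you control cannot, by themselves, yield a contradiction there; ``should produce the contradiction'' is precisely where the proof is still missing.

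For comparison, the paper's own argument is much shorter and bypasses diagonal preservation entirely: it takes the plan given by \cref{prop:W1-perturbations}, part~\eqref{enu:integratedversion}, disintegrates it over the $\tilde\mu$-coordinate, and for the two offending atoms $\tilde x_1,\tilde x_2$ with $u_{\tilde\mu,\lambda}(\tilde x_1)=u_{\tilde\mu,\lambda}(\tilde x_2)$ uses the elementary inequality $|u_{\mu,\lambda}(x_1)-u_{\tilde\mu,\lambda}(\tilde x_1)|^2+|u_{\mu,\lambda}(x_2)-u_{\tilde\mu,\lambda}(\tilde x_2)|^2\ge\tfrac12|u_{\mu,\lambda}(x_1)-u_{\mu,\lambda}(x_2)|^2$ to lower-bound the transported $L^2$ cost on $\R^d\times\{\tilde x_1,\tilde x_2\}$ by a quantity of size $\delta_1^2\delta_2$, contradicting \cref{eq:W1ub}; the role of $\delta_2$ there is to bound from below the $\tilde\mu$-mass carried by the two atoms. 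Note that this argument also relies on the $\mu$-points coupled to $\tilde x_1$ and $\tilde x_2$ being distinct (otherwise the right-hand side of the displayed inequality vanishes), which is exactly the distinctness issue your ``hard part'' identifies; so your diagnosis of where the difficulty lies is correct, but the route through diagonal-preserving plans, \cref{thm:chiquet-characterization} and \cref{prop:ballsdontintersect} does not resolve it, and as submitted the proposal does not prove the proposition.
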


\begin{proof}
	By \cref{prop:W1-perturbations}, there is a $1$-optimal
	transport plan $\pi$ from $\mu$ to $\tilde{\mu}$ such that
	\begin{equation}
		\int|u_{\mu,\lambda}(x)-u_{\tilde{\mu},\lambda}(\tilde{x})|^{2}\,\dif\pi(x,\tilde{x})\le16M\mathcal{W}_{1}(\mu,\tilde{\mu}).\label{eq:applyW1perturb}
	\end{equation}
	Suppose there are distinct points $\tilde{x}_{1},\tilde{x}_{2}\in\supp\tilde{\mu}$
	(a finite set) such that $u_{\tilde{\mu},\lambda}(\tilde{x}_{1})=u_{\tilde{\mu},\lambda}(\tilde{x}_{2})$.
	Then we have by the triangle inequality that
	\[
		|u_{\mu,\lambda}(x_{1})-u_{\tilde{\mu},\lambda}(\tilde{x}_{1})|^{2}+|u_{\mu,\lambda}(x_{2})-u_{\tilde{\mu},\lambda}(\tilde{x}_{2})|^{2}\ge\frac{1}{2}|u_{\mu,\lambda}(x_{1})-u_{\mu,\lambda}(x_{2})|^{2}\ge\delta_1^{2}/2.
	\]
	Denote the disintegration of $\pi$ over the first coordinate by
	\[
		\dif\pi(x,\tilde{x})=\dif\tilde{\nu}(x\mid\tilde{x})\dif\tilde{\mu}(\tilde{x}).
	\]
	Then we have
	\begin{align*}
            \delta_{1}^{2}&\delta_{2}  \le\frac{1}{2}\delta_{1}^{2}\left(\tilde{\mu}(\{x_{1}\})+\tilde{\mu}(\{x_{2}\})\right)                                                                                                                                                                                                            \\
		                         & \le\int_{\tilde{x}\in\{\tilde{x}_{1},\tilde{x}_{2}\}}\iint[|u_{\mu,\lambda}(x_{1})-u_{\tilde{\mu},\lambda}(\tilde{x})|^{2}+|u_{\mu,\lambda}(x_{2})-u_{\tilde{\mu},\lambda}(\tilde{x})|^{2}]\,\dif\tilde{\nu}(x_{1}\mid\tilde{x})\,\dif\tilde{\nu}(x_{2}\mid\tilde{x})\,\dif\tilde{\mu}(\tilde{x}) \\
		                         & =2\int_{(x,\tilde{x})\in\mathbf{R}^{d}\times\{\tilde{x}_{1},\tilde{x}_{2}\}}|u_{\mu,\lambda}(x)-u_{\tilde{\mu},\lambda}(\tilde{x})|^{2}\,\dif\pi(x,\tilde{x})                                                                                                                                     \\
		                         & \le32M\mathcal{W}_{1}(\mu,\tilde{\mu}),
	\end{align*}
	with the last inequality by \cref{eq:applyW1perturb}. But this contradicts
	\cref{eq:W1ub}. Therefore, $u_{\tilde{\mu},\lambda}$ must be injective
	on $\supp\tilde{\mu}$. This means that $\tilde{\mu}$ is $\lambda$-shattered.
\end{proof}

\subsection{Proofs of Theorems~\ref{thm:stability} and~\ref{t.stoch.ball}\label{subsec:thmstabilityproof}}

Now we can prove our main stability results, \cref{thm:stability,t.stoch.ball}.
\begin{proof}[Proof of \cref{thm:stability}]
	For $i\in\{1,\ldots,I\}$, define
	\[
		q_{i,N}=\#\{n\in\{1,\ldots,N\}\mid X_{n}\in\overline{U_{i}}\}.
	\]
	By the law of large numbers, we have with probability $1$ that
	\begin{equation}\label{eq:LLN}
		\lim_{N\to\infty}N^{-1}q_{i,N}=\mu(\overline{U_{i}}).
	\end{equation}
	Define
	\[
		\tilde{\mu}_{N,i}=\frac{1}{q_{i,N}}\mu_{N}|_{\overline{U_{i}}}.%
	\]
	By \cref{eq:LLN} and Theorem~1.1 of \citet{GS15} for $d\ge2$, or a similar result using the Glivenko--Cantelli theorem \cite[Theorem~2.4.7]{durrettbook}
	for $d=1$, we have that
	\[
		\tilde{\mu}_{N,i}\to\frac{1}{\mu(\overline{U_{i}})}\mu|_{\overline{U_{i}}}
	\]
	in probability as $N\to\infty$ with respect to the $\mathcal{W}^{\infty}$
	topology. Therefore, we have that
	\[
		\lim_{N\to\infty}\lambda_{1}(\tilde{\mu}_{N,i})=\lambda_{1}(\mu|_{\overline{U_{i}}})
	\]
	in probability by \cref{prop:Winfty-perturbations}. On the other hand,
	we have that
	\[
		\lim_{N\to\infty}|\lambda_{1}(\tilde{\mu}_{N,i})-\lambda_{1}(\mu_{N}|_{\overline{U}_{i}})|=0
	\]
	in probability by \cref{prop:stability-AC}. Combining the last two
	displays, we see that
	\begin{equation}
		\lambda_{1}(\mu_{N}|_{\overline{U}_{i}})\to\lambda_{1}(\mu|_{\overline{U_{i}}})\label{eq:lambda1conv}
	\end{equation}
	as $N\to\infty$. On the other hand, it is clear from the law of large
	numbers that
	\[
		\lim_{N\to\infty}\mathcal{M}_{u}(\mu_{N})=\mathcal{M}_{u}(\mu)
	\]
	in probability with respect to the $\mathcal{W}^{1}$ topology. Therefore,
	we have from \cref{prop:close-shattered} that
	\begin{equation}
		\lim_{N\to\infty}\lambda_{*}(\mathcal{M}_{u}(\mu_{N}))=\lambda_{*}(\mathcal{M}_{u}(\mu))\label{eq:lambdastarconv}
	\end{equation}
	in probability. Together, \cref{eq:lambda1conv} and \cref{eq:lambdastarconv}
	complete the proof of the theorem.
\end{proof}

\begin{proof}[Proof of \cref{t.stoch.ball}]
	We set $\lambda_{\mathrm{c}} \coloneqq \lambda_1(\mu)$. Using \cref{thm:stability} with $u = 0$, we see that $\lambda_1(\mu_N)$ tends to $\lambda_{\mathrm{c}}$ in probability as $N$ tends to infinity. Part \eqref{part1} of \cref{t.stoch.ball} thus follows.

	We now turn to the proof of part \eqref{part2}, and fix $\lambda > \lambda_{\mathrm{c}}$. By the definition of $\lambda_{\mathrm{c}}$ and \cref{thm:agglomeration}, the range of $u_{\mu,\lambda}$ contains at least two points. We decompose the rest of the proof into two steps.

	\emph{Step 1.} We show that the range of $u_{\mu,\lambda}$ contains at least three points. We argue by contradiction, assuming that the range of $u_{\mu,\lambda}$ is made of exactly two points. Notice that the measure $\mu$ is symmetric under rotations about the first coordinate axis, and under negations of any of the canonical basis vectors. By the uniqueness of the minimizer, it must be that $u_{\mu,\lambda}$ is invariant under these transformations. As we now argue, the range of $u_{\mu,\lambda}$ must therefore be a subset of the first coordinate axis. Indeed, this is easiest to see if $d\ge 3$, since otherwise the range of $u_{\mu,\lambda}$ would have to contain a circle, and in particular would contain infinitely many points. Suppose now that $d=2$ and that the range of $u_{\mu,\lambda}$ is made of exactly two points. By the invariance under reflections, the only possibility for the support to not be a subset of the first coordinate axis is that the two points forming the support of $u_{\mu,\lambda}$ are on the second coordinate axis; but in this case, the two level sets of $u_{\mu,\lambda}$ would each have to contain half of each of the balls, and this would contradict \cref{prop:ballsdontintersect}. 
	
Using again the invariance under reflections, we deduce that there exists $\rho > 0$ such that the range of $u_{\mu,\lambda}$ is the set  $\{-\rho \e_1, \rho \e_1\}$. Let $E \coloneqq u_{\mu,\lambda}^{-1}(\rho \e_1)$. Again by symmetry, it must be that, up to a set of null $\mu$-measure, we have $u_{\mu,\lambda}^{-1}(-\rho \e_1) = -E$, and $\mu(E) = \mu(-E) = 1/2$, so that
	\begin{equation}
		\label{e.constant.fusion}
		\iint |u_{\mu,\lambda}(x) - u_{\mu,\lambda}(y)| \, \d \mu(x) \, \d \mu(y) = \rho.
	\end{equation}
	Moreover,
	\begin{align*}  %
		\int_E |\rho \e_1 - x|^2 \, \d \mu(x)
		 & = \int_{E \cap B_1(re_1)} |\rho \e_1 - x|^2 \, \d \mu(x) + \int_{E \cap B_1(-re_1)} |\rho \e_1 - x|^2 \, \d \mu(x)
		\\
		 & = \int_{E \cap B_1(re_1)} |\rho \e_1 - x|^2 \, \d \mu(x) + \int_{(-E) \cap B_1(re_1)} |\rho \e_1 + x|^2 \, \d \mu(x)
		\\
		 & \ge \int_{E \cap B_1(re_1)} |\rho \e_1 - x|^2 \, \d \mu(x) + \int_{(-E) \cap B_1(re_1)} |\rho \e_1 - x|^2 \, \d \mu(x)
		\\
		 & \ge \int_{B_1(re_1)} |\rho \e_1 - x|^2 \, \d \mu(x),
	\end{align*}
	since $E \cap (-E)$ is a $\mu$-null set. This yields that
	\begin{equation*}  %
		\int |u_{\mu,\lambda}-x|^2 \, \d \mu(x) \ge \int_{B_1(re_1)} |\rho \e_1 - x|^2 \, \d \mu(x) + \int_{B_1(-re_1)} |-\rho \e_1 - x|^2 \, \d \mu(x).
	\end{equation*}
	Combining this with \eqref{e.constant.fusion}, we see that we must have, up to a $\mu$-null set, that $E = B_1(r\e_1)$. In other words, the minimizer $u_{\mu,\lambda}$ maps $B_1(r \e_1)$ to $\rho \e_1$ and $B_1(-r \e_1)$ to $-\rho \e_1$.

	By \cref{thm:splitcondition}, we must therefore have that
	\begin{equation}
		\label{e.cohesive.part}
		\mbox{the measure $\frac 1 2 \delta_{-r \e_1} + \frac 1 2 \delta_{r \e_1}$ is $\lambda$-shattered},
	\end{equation}
	and
	\begin{equation}
		\label{e.shattered.part}
		\mbox{the measure $\mu|_{B_1(r\e_1)}$ is $\lambda$-cohesive}.
	\end{equation}
	By \cref{prop:twopoints}, the requirement in \eqref{e.cohesive.part} imposes that $\lambda \le 2r$. By \cref{p.ball}, the requirement in \eqref{e.shattered.part} imposes that $\lambda \ge 2 \gamma_d$. Since we assume that $r < \gamma_d$, we have reached a contradiction.

	\emph{Step 2.} By the result of the previous step, there exist $c_1,c_2,c_3 \in \Rd$ and $\eta > 0$ such that for every $i\neq j \in \{1,2,3\}$, we have $|c_i - c_j| \ge 9 \eta$, and
	\begin{equation}
		\label{e.min.mass}
		\xi \coloneqq \min\Ll(\mu[u_{\mu,\lambda}^{-1}(B_\eta(c_1))], \mu[u_{\mu,\lambda}^{-1}(B_\eta(c_2))], \mu[u_{\mu,\lambda}^{-1}(B_\eta(c_3))]\Rr) > 0.
	\end{equation}
	Since the measure $\mu$ is absolutely continuous with respect to the Lebesgue measure, there exists a $1$-optimal transport map from $\mu$ to $\mu_N$, which we denote by $T_N$. By \cref{prop:W1-perturbations}, we have
	\begin{equation*}  %
		\int \Ll| u_{\mu,\lambda}(x) - u_{\mu_N,\lambda}(T_N(x)) \Rr| \, \d \mu(x)\le 16 M \mathcal{W}_1(\mu,\mu_N).
	\end{equation*}
In particular, for each $i \in \{1,2,3\}$, we have
	\begin{equation*}  %
		\int_{u_{\mu,\lambda}^{-1}(B_\eta(c_i))} |c_i - u_{\mu_N,\lambda}(T_N(x))|  \, \d \mu(x) \le 16 M \mathcal{W}_1(\mu,\mu_N) + \eta\mu[u_{\mu,\lambda}^{-1}(B_\eta(c_i))].
	\end{equation*}
		Recall that $\mathcal{W}_1(\mu,\mu_N)$ tends to zero in probability as $N$ tends to infinity \citep[see for instance][]{dudley}. 
	For every $\ep > 0$, we can therefore let $N$ be sufficiently large that with probability at least $1-\ep$, we have
	\begin{equation*}  %
		\int_{u_{\mu,\lambda}^{-1}(B_\eta(c_i))} |c_i - u_{\mu_N,\lambda}(T_N(x))|  \, \d \mu(x) \le 2\eta\mu[u_{\mu,\lambda}^{-1}(B_\eta(c_i))].
	\end{equation*}
	In particular, by Chebyshev's inequality,
	\begin{equation*}  %
		\int_{u_{\mu,\lambda}^{-1}(B_\eta(c_i))} \1_{\{|c_i - u_{\mu_N,\lambda}(T_N(x))|\ge 4\eta\}}  \, \d \mu(x) \le \frac 1 2 \mu[u_{\mu,\lambda}^{-1}(B_\eta(c_i))];
	\end{equation*}
	that is,
	\begin{equation*}  %
		\int_{u_{\mu,\lambda}^{-1}(B_\eta(c_i))} \1_{\{|c_i - u_{\mu_N,\lambda}(T_N(x))|< 4\eta\}}  \, \d \mu(x) \ge \frac 1 2 \mu[u_{\mu,\lambda}^{-1}(B_\eta(c_i))].
	\end{equation*}
	Recalling that $T_N$ is an optimal transport map from $\mu$ to $\mu_N$, we see that the left side is bounded from above by
	\begin{equation*}  %
		\int \1_{\{|c_i - u_{\mu_N,\lambda}(x)| < 4 \eta\}} \, \d \mu_N(x) = \frac 1 N \Ll| \{n \le N \ : \ |c_i - u_{\mu_N,\lambda}(X_n) | < 4 \eta\} \Rr|.
	\end{equation*}
	Recalling also the definition of $\xi$, we have shown that, with probability at least $1-\ep$, the following holds for every $N$ sufficiently large and $i \in \{1,2,3\}$:
	\begin{equation*}  %
		\frac 1 N \Ll| \{n \le N \ : \ |c_i - u_{\mu_N,\lambda}(X_n) | < 4 \eta\} \Rr| \ge \frac \xi 2.
	\end{equation*}
	Since $|c_i - c_j| \ge 9 \eta$ for every $i \neq j$, this yields the desired result, up to a redefinition of $\xi$.
\end{proof}

To conclude, we give a counterpart to \cref{t.stoch.ball} in the case when the two balls are sufficiently far apart.

\begin{prop}
	\label{p.separation}
	Let $r  > 2^{1-\frac 1 d}$, $\mu$ be the uniform measure on $B_1(-r\e_1) \cup B_1(r\e_1) \subset \R^d$, $(X_n)_{n \in \N}$ be independent random variables with law $\mu$, and for every integer $N \ge 1$, define the empirical measure
	\begin{equation*}
		\mu_N \coloneqq \frac 1 N \sum_{n = 1}^N \delta_{X_n}.
	\end{equation*}
	If $\lambda \in (2^{2-\frac 1 d},2r)$, then with high probability, the level sets of $u_{\mu_N,\lambda}$ are the sets
	\begin{equation*}  %
		\{X_n, \ n \le N\} \cap B_1(-r \e_1) \quad \text{ and } \quad \{X_n, \ n \le N\} \cap B_1(r \e_1).
	\end{equation*}
\end{prop}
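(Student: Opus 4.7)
The plan is to apply the stability result, \cref{thm:stability}, to the natural two-cluster partition, after checking that this partition is detectable for the continuous limit measure $\mu$ with a detection parameter set that comfortably covers the range $(2^{2-\frac 1 d}, 2r)$. I would first fix any measurable $u$ taking two distinct constant values on the closed balls $\overline{B_1(r\e_1)}$ and $\overline{B_1(-r\e_1)}$. Since $r > 2^{1-\frac 1 d} \ge 1$ these closures are disjoint, bounded, connected and have Lipschitz boundary; they partition $\supp\mu$, on which $\mu$ has constant density, so the structural hypotheses of \cref{thm:stability} are satisfied, and what remains is detectability of $u$ for $\mu$.

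To compute $\Lambda(\mu,u)$ I would use two already-proved ingredients. The concentrated measure is $\mathcal{M}_u(\mu) = \frac{1}{2}\delta_{-r\e_1} + \frac{1}{2}\delta_{r\e_1}$, so \cref{prop:twopoints} gives $\lambda_*(\mathcal{M}_u(\mu)) = 2r$. For the other endpoint I would first note the scaling $\lambda_1(c\nu) = c^{-1}\lambda_1(\nu)$, which is immediate from the identity $J_{c\nu,\lambda}(v) = c\,J_{\nu,c\lambda}(v)$. Each restriction $\mu|_{\overline{B_1(\pm r\e_1)}}$ is a translate of one half of the uniform probability measure on the unit ball, so \cref{p.ball} together with this scaling yields
\[
\lambda_1\bigl(\mu|_{\overline{B_1(\pm r\e_1)}}\bigr) \le \frac{2^{1-\frac 1 d}}{1/2} = 2^{2-\frac 1 d}.
\]
Hence $\Lambda(\mu,u) \supseteq (2^{2-\frac 1 d}, 2r)$, which is nonempty precisely because $r > 2^{1-\frac 1 d}$, and the given $\lambda$ lies strictly inside this interval.

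The last step is to invoke \cref{thm:stability}: the endpoints of $\Lambda(\mu_N,u)$ converge in probability to those of $\Lambda(\mu,u)$, so with high probability the prescribed $\lambda$ still lies in $\Lambda(\mu_N,u)$. By the very definition of the detection parameter set, combined with \cref{thm:splitcondition}, this means that the level sets of $u$ and of $u_{\mu_N,\lambda}$ agree up to a $\mu_N$-null set. Since $\mu_N$ is supported on $\{X_1,\ldots,X_N\}$, the null-set qualifier is vacuous there, and the level sets of $u_{\mu_N,\lambda}$ are exactly the intersections of the dataset with each of the two balls, as claimed.

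The only mildly delicate point is the careful bookkeeping of the mass-renormalization scaling, so that \cref{p.ball}, which controls the scale-invariant product $\lambda_1(\nu)\nu(\R^d)$, translates into the $2^{2-\frac 1 d}$ bound on the half-mass restriction $\mu|_{\overline{B_1(\pm r\e_1)}}$; once this bookkeeping is done, the conclusion is a direct invocation of the tools assembled in \cref{sec:KKT,sec:splitcondition,sec:stability}.
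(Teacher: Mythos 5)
Your proof is correct and follows essentially the same route as the paper's: verify detectability of the two-ball partition via \cref{prop:twopoints} (giving $\lambda_*=2r$) and \cref{p.ball} with the half-mass normalization (giving $\lambda_1\le 2^{2-\frac1d}$), characterize the clusters via \cref{thm:splitcondition}, and conclude with \cref{thm:stability}. The mass-rescaling bookkeeping you spell out is exactly what the paper uses implicitly when it invokes \cref{p.ball} for the condition $\lambda>2\cdot 2^{1-\frac1d}$.
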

\begin{proof}
	By \cref{thm:splitcondition}, the level sets of the function $u_{\mu,\lambda}$ are, up to $\mu$-null modifications, the two balls $B_1(-r\e_1)$ and $B_1(r\e_1)$, if and only if \eqref{e.cohesive.part} and \eqref{e.shattered.part} hold. By \cref{prop:twopoints}, the first condition holds whenever $\lambda < 2r$, and by \cref{p.ball}, the second condition holds whenever $\lambda > 2\cdot 2^{1-\frac 1 d}$. The result then follows by an application of \cref{thm:stability}.
\end{proof}

\section{Technical lemmas}
In this section we collect a few additional technical lemmas to avoid distracting from the flow of the paper.
\begin{lem}\label{lem:Adoesnotmatter}
Let $\mu$ be a finite Borel measure on $\R^d$.
    Let $u_1,u_2\colon\R^d\to\R^d$ be such that $u_1(x) = u_2(x)$ for $\mu$-a.e.~$x$, and let $A_1,A_2 \subseteq \R^d$ be Borel sets such that, for each $i=1,2$, we have $\mu(\R^d\setminus A_i)=0$ and $V_{u_i,x}\cap A_i$ is $\mu$-regular for $\mu$-a.e.~$x$. If we define $\mathcal{E}^{(i)}(x)\coloneqq \mathcal{C}_{\mu}(V_{u_i,x}\cap A_i)$, then $\mathcal{E}^{(1)}(x) = \mathcal{E}^{(2)}(x)$ for $\mu$-a.e.~$x$.
\end{lem}
\begin{proof}
    Let $B$ be the set of all $x\in\R^d$ such that $u_1(x) = u_2(x)$. Note that $\mu(A_1\cap A_2\cap B) = \mu(\R^d)$.
    Let $x\in A_1\cap A_2\cap B$. We claim that $\mathcal{E}^{(1)}(x) = \mathcal{E}^{(2)}(x)$. 
    We consider two cases.

    First, suppose that there is some $i$ such that $\mu(V_{u_i,x})>0$, and assume wlog that $i=1$. Then we have $V_{u_1,x}\cap B = \{y\in B\ :\ u_1(x) = u_1(y)\}= \{y\in B\ :\ u_2(x) = u_2(x)\} = V_{u_2,x}\cap B$, since $u_1(z) = u_2(z)$ for all $z\in B$. Since $\mu(\R^d\setminus B) =0$, this implies that 
    $\mathcal{C}_\mu(V_{u_i,x}\cap A_i)$ does not depend on $i$, since changing a positive-measure set by a set of measure zero does not change its centroid. %

    On the other hand, if $x$ is such that $\mu(V_{u_1,x}) = \mu(V_{u_2,x}) = 0$, then $V_{u_i,x}\cap A_i = \{x\}$ for each $i$ by $\mu$-regularity, and hence $\mathcal{C}_\mu(V_{u,x}\cap A_i) = x$ for each $i$.

    Thus we have shown that the set of $x$ such that $\mathcal{E}^{(1)}(x) \ne \mathcal{E}^{(2)}(x)$ is contained in $\R^d\setminus(A_1\cap A_2\cap B)$, which has $\mu$-measure $0$. %
\end{proof}

\begin{lem}\label{lem:Jcts}
    For any finite Borel measure $\mu$ and any $\lambda\ge 0$, the function $J_{\mu,\lambda}\colon L^2(\mu;\R^d)\to\R$ defined in \cref{e.def.J} is continuous.
\end{lem}
\begin{proof}
    Let $u_1,u_2\in L^2(\mu;\R^d)$. We have by the triangle, reverse triangle, and Cauchy--Schwarz inequalities that
    \begin{align*}
        &\left|\iint |u_1(x)-u_1(y)|\,\dif \mu(x)\,\dif \mu(y)-\iint |u_2(x)-u_2(y)|\,\dif \mu(x)\,\dif \mu(y)\right|\\
        &\qquad\le\iint (|u_1(x)-u_2(x)|+|u_1(y)-u_2(y)|)\,\dif \mu(x)\,\dif \mu(y)\le 2\mu(\R^d)^{3/2}\|u_1-u_2\|_{L^2(\mu;\R^d)}.
    \end{align*}
    Similarly, we have
    \begin{align*}
        &\left|\int|u_1(x)-x|^2\,\dif \mu(x)-\int|u_2(x)-x|^2\,\dif \mu(x)\right|\\&\qquad\le 2\int |u_1(x)-u_2(x)|^2\,\dif \mu(x)\le 2\|u_1-u_2\|_{L^2(\mu;\R^d)}^2.
    \end{align*}
    Together, the last two displays imply that $J_{\mu,\lambda}$ is continuous.
\end{proof}

\subsection*{Acknowledgments}
We warmly thank Antonio De Rosa for many interesting discussions. AD~was partially supported by the NSF Mathematical Sciences Postdoctoral Fellowship program under grant no.\ DMS-2002118. JCM~was partially supported by the NSF grant DMS-1954357.

\bibliography{convex-clustering}

\end{document}